\newtheorem{theorem}{Theorem}
\newtheorem{example}{Example}
\newtheorem{lemma}{Lemma}
\newtheorem{corollary}{Corollary}
\newtheorem{defn}{Definition}
\newtheorem{prop}{Proposition}
\newcommand{\E}{\mathbb{E}}
\newcommand{\N}{\mathbb{N}}
\newcommand{\R}{\mathbb{R}}
\newcommand{\argmin}{\text{argmin}}
\newcommand{\conv}{\text{conv}}
\newcommand{\one}{\mathbf{1}}
\newcommand{\classifier}{\phi}
\newcommand{\X}{\mathcal{X}}
\newcommand{\Y}{\mathcal{Y}}
\newcommand{\Z}{\mathcal{Z}}
\newcommand{\F}{\mathcal{F}}
\newcommand{\G}{\mathcal{G}}
\newcommand{\T}{\mathbb{T}}
\newcommand{\sample}{\mathcal{D}}
\newcommand{\lossFunction}{\mathcal{L}}
\newcommand{\HClass}{\mathcal{H}}
\newcommand{\measurableMaps}{\mathcal{M}}
\newcommand{\rademacherComplexity}{\mathfrak{R}}
\newcommand{\hypothesisClassBound}{\beta}
\newcommand{\probDistribution}{P}
\newcommand{\sampleXY}{\sample_{X,Y}}
\newcommand{\numClasses}{q}
\newcommand{\xSequenceSizeN}{{\bm{x}}}
\newcommand{\xySequenceSizeN}{{\bm{z}}}
\newcommand{\zSequenceSizeN}{{\bm{z}}}
\newcommand{\wSequenceSizeNQ}{{\bm{w}}}
\newcommand{\sigmaSequenceSizeN}{{\bm{\sigma}}}
\newcommand{\lossFunctionBound}{B}
\newcommand{\riskArg}{\mathcal{E}}
\newcommand{\risk}{\riskArg_{\lossFunction}}
\newcommand{\empiricalRiskArg}{\hat{\mathcal{E}}}
\newcommand{\empiricalRisk}{\empiricalRiskArg_{\lossFunction}}
\newcommand{\actionSpace}{\mathcal{V}}
\newcommand{\selfLipschitzConstant}{\lambda}
\newcommand{\selfLipschitzExponent}{\theta}
\newcommand{\metricSpace}{\mathcal{M}}
\newcommand{\dist}{\rho}
\newcommand{\coveringNumber}{\mathcal{N}}
\renewcommand{\sampleXY}{\sample}
\renewcommand{\lossFunctionBound}{b}
\title{Optimistic bounds for multi-output prediction}
\date{}
\author{Henry WJ Reeve and Ata Kab\'{a}n}
\begin{document}

\maketitle

\begin{abstract}
We investigate the challenge of multi-output learning, where the goal is to learn a vector-valued function based on a supervised data set. This includes a range of important problems in Machine Learning including multi-target regression, multi-class classification and multi-label classification. We begin our analysis by introducing the self-bounding Lipschitz condition for multi-output loss functions, which interpolates continuously between a classical Lipschitz condition and a multi-dimensional analogue of a smoothness condition. We then show that the self-bounding Lipschitz condition gives rise to optimistic bounds for multi-output learning, which are minimax optimal up to logarithmic factors. The proof exploits local Rademacher complexity combined with a powerful minoration inequality due to Srebro, Sridharan and Tewari.  As an application we derive a state-of-the-art generalization bound for multi-class gradient boosting. 
\end{abstract}

\section{Introduction}
Multi-output prediction represents an important class of problems that includes multi-class classification \cite{crammer2001algorithmic}, multi-label classification \cite{tsoumakas2007multi,zhang2013review}, multi-target regression \cite{borchani2015survey}, label distribution learning \cite{geng2016label}, structured regression \cite{cortes2016struct} and others, with a wide range of practical applications \cite{xu2019survey}. 

Our objective is to provide a general framework for establishing guarantees for multiple-output prediction problems. A fundamental challenge in the statistical learning theory of multi-output prediction problems is to obtain bounds which allow for (i) favourable convergence rate with the sample size, and (ii) favourable dependence of the risk on the dimensionality of the output space. Whilst modern applications of multi-output prediction deal with increasingly large data sets, they also incorporate problems where the target dimensionality is increasingly large. For example, the number of categories in multi-label is often of the order of tens of thousands, an emergent problem referred to as \emph{extreme classification} \cite{agrawal2013multi,babbar2017dismec,bhatia2015sparse,jain2019slice}.

Formally, the task of multi-output prediction is to learn a vector-valued function from a labelled training set. A common tool in the theoretical analysis of this problem has been a vector-valued extension of Talagrand's contraction inequality for Lipschitz losses \cite{ledoux2013probability}. Both \cite{maurer2016vector} and \cite{cortes2016struct} established vector-contraction inequalities for Rademacher complexity which gave rise to learning guarantees for multi-output prediction problems with a linear dependence upon the dimensionality of the output space. More recently, \cite{lei2019data} has provided more refined vector-contraction inequalities for both Gaussian and Rademacher complexity. This approach leads to a highly favourable sub-linear dependence upon the output dimensionality, which can even be logarithmic, depending upon the degree of regularisation. These structural results lead to a slow convergence rate $O(n^{-1/2})$. \cite{guermeur2017lp} and \cite{musayeva2019} explore an alternative approach based on covering numbers. \cite{chzhen2017benefits} derived a bound for multi-label classification based upon Rademacher complexities. Each of these bounds give rise to favourable dependence upon the dimensionality of the output space, with a rate of order $O(n^{-1/2})$.

Local Rademacher complexities provide a crucial tool in establishing faster rates of convergence \cite{bousquet2002concentration,bartlett2005local,koltchinskii2006local,lei2016local}. By leveraging local Rademacher complexities, \cite{liu2019learning} have derived guarantees for for multi-class learning with function classes which are linear in an RKHS, building upon their previous margin based guarantees \cite{lei2015multi,li2019multi}. This gives rise to fast rates under suitable spectral conditions.  Fast rates of convergence have also been derived by \cite{xu2016MultiLabel} for multi-label classification with linear function spaces. On the other hand, \cite{chzhen2019classification} have derived fast rates of convergence by exploiting an analogue the margin assumption.

Our objective is provide a general framework for establishing generalization bounds for multi-output prediction, which yield fast rates whenever empirical error is small, and apply to a wide variety of function classes, including ensembles of decision trees. We address this problem by generalising to vector-valued functions a smoothness based approach due to \cite{srebro2010smoothness}. A key advantage of our approach is that it allow us to accommodate a wide variety of multi-output loss functions, in conjunction with a variety of hypothesis classes, making our analytic strategy applicable to a variety of learning tasks. Below we summarise our contributions:
\begin{itemize}[noitemsep,topsep=0pt]
\setlength\itemsep{0.5em}
    \item We give a contraction inequality for the local Rademacher complexity of vector-valued functions (Proposition  \ref{localRademacherComplexityProp}). The main ingredient is a self-bounding Lipschitz condition for multi-output loss functions which holds for several widely used examples.
\item We leverage our localised contraction inequality to give a general upper bound for multi-output learning (Theorem \ref{optimisticRademacherBoundForMultioutputPrediction}), which exhibits fast rates whenever the empirical error is small.
\item We demonstrate the minimax-optimality of our result, both in terms of the number of samples, and the output dimensionality, up to logarithmic factors, in the realizable setting (Theorem \ref{minimaxOptimalityRelizableProblemsHilbertSpaceThm}). 
\item Finally, to demonstrate a concrete use our general result, we derive from it a state-of-the-art bound for ensembles of multi-output decision trees (Theorem \ref{resultForEnsemblesOfDecisionTrees}).  
\end{itemize}

\subsection{Problem setting}
We shall consider multi-output prediction problems in supervised learning. Suppose we have a measurable space $\X$, a label space $\Y$ and an output space $\actionSpace$. We shall assume that there is an unknown probability distribution $\probDistribution$ over random variables $(X,Y)$, taking values in $\X\times \Y$.  
The performance is quantified through a loss function $\lossFunction: \actionSpace \times \Y \rightarrow \R$. 

Let $\measurableMaps(\X,\actionSpace)$ denote the set of measurable functions $f:\X \rightarrow \actionSpace$. The goal of the learner is to obtain $f \in \measurableMaps(\X,\actionSpace)$ such that the corresponding risk $
\risk({f},\probDistribution):= \E_{(X,Y) \sim \probDistribution}[  \lossFunction({f}(X),Y)]$ is as low as possible. The learner selects $f \in \measurableMaps(\X,\actionSpace)$ based upon a sample $\sampleXY:=\{(X_i,Y_i)\}_{i \in [n]}$, where $(X_i,Y_i)$ are independent copies of $(X,Y)$. We let $\empiricalRisk(f,\sample):=n^{-1}\cdot \sum_{i \in [n]}\lossFunction(f(X_i),Y_i)$ denote the empirical risk. When the distribution $\probDistribution$ and the sample $\sampleXY$ are clear from context we shall write $\risk({f})$ in place of $\risk({f},\probDistribution)$ and $\empiricalRisk(f)$ in place of $\empiricalRisk(f,\sample)$. We consider \emph{multi-output} prediction problems in which $\actionSpace \subseteq \R^{\numClasses}$. We let $\|\cdot \|_{\infty}$ denote the max norm on $\R^{\numClasses}$ and for positive integer $m \in \N$ we let $[m]:= \{1,\cdots,m\}$.
%Given a set $\actionSpace \subseteq \R^{\numClasses}$ we let $\diam(\actionSpace):=\sup\{\|u-v\|_{\infty}: u,v \in \actionSpace\}$.

\section{The self-bounding Lipschitz condition}

We introduce the following \emph{self-bounding Lipschitz} condition for multi-output loss functions.
\begin{defn}[Self-bounding Lipschitz condition]\label{selfLipschitzDef} A loss function $\lossFunction: \actionSpace \times \Y \rightarrow \R$ is said to be $(\selfLipschitzConstant,\selfLipschitzExponent)$-\emph{self-bounding Lipschitz} for $\selfLipschitzConstant,\selfLipschitzExponent\geq 0$ if for all $y \in \Y$ and $u,v \in \actionSpace$,
\begin{align*}
\left| \lossFunction(u,y)-\lossFunction(v,y)\right| \leq \selfLipschitzConstant \cdot { \max\{ \lossFunction(u,y),\lossFunction(v,y)\}^{\selfLipschitzExponent}} \cdot \left\| u-v\right\|_{\infty}.    
\end{align*}
\end{defn}

This condition interpolates continuously between a classical Lipschitz condition (when $\theta=0$) and a multi-dimensional analogue of a smoothness condition (when $\theta=1/2$), and will be the main assumption that we use to obtain our results. 

Our motivation for introducing Definition \ref{selfLipschitzDef} is as follows. Firstly, in recent work of \cite{lei2019data} the classical Lipschitz condition with respect to the $\ell_{\infty}$ norm has been utilised to derive multi-class bounds with a favourable dependence upon the number of classes $\numClasses$. The role of the $\ell_{\infty}$ norm is crucial since it prevents the deviations in the loss function from accumulating as the output dimension $\numClasses$ grows. Our goal is to give a general framework which simultanously achieves a favourable dependence upon $n$. Secondly, \cite{srebro2010smoothness} introduced a second-order smoothness condition on the loss function. This condition corresponds to the special case whereby $\numClasses=1$ and $\theta=1/2$.  \cite{srebro2010smoothness} showed that this smoothness condition gives rise to a optimistic bound which gives a fast rate $O(n^{-1})$ in the realizable case. The self-bounding Lipschitz provides a multi-dimensional analogue of this condition when $\selfLipschitzExponent=1/2$ which is intended to yield a favourable dependence upon both the number of samples $n$ and the number of classes $\numClasses$. The results established in Sections \ref{mainResultsSec} and \ref{applicationToGBEnsemblesSec} show that this is indeed the case. Finally, by considering the range of exponents $\selfLipschitzExponent \in [0,1/2]$ we will yield convergence rates ranging from slow $O(n^{-1/2})$ to fast $O(n^{-1})$ in the realizable case. This is reminiscent of the celebrated Tsybakov margin condition \cite{mammen1999} which interpolate between slow and fast rates in the parametric classification setting. Crucially, however, whilst the Tsybakov margin condition \cite{mammen1999} is a condition on the underlying distribution which cannot be verified in practice, the self-bounding Lipschitz condition is a property of a loss function which may be verified analytically by the learner. 

\subsection{Verifying the self-bounding Lipschitz condition}\label{verifyingSec}

We start by giving a collection of results which can be used to verify that a given loss function satisfies the self-bounding Lipschitz condition. The following lemmas are proved in Appendix \ref{selfBoundingLipAppendix}.

\begin{lemma}\label{sufficiencyConditionForSelfBoundingLipschitz} Take any $\selfLipschitzConstant>0$, $\selfLipschitzExponent \in [0,1/2]$. Suppose that $\lossFunction: \actionSpace \times \Y \rightarrow [0,\infty)$ is a loss function such that for any $u \in \actionSpace$, $y \in \Y$, there exists a non-negative differentiable function $\varphi_{u,y}:\R \rightarrow [0,\infty)$ satisfying 
\begin{enumerate}
\item $\varphi_{u,y}(0)=\lossFunction(u,y)$;
\item $\forall t>0$, $\sup_{v :\|u-v\|_{\infty}\leq t}\{\lossFunction(v,y)\}\leq \varphi_{u,y}(t)$.
\item The derivative $\varphi_{u,y}'(t)$ is non-negative on $[0,\infty)$;
\item $\forall t_0,t_1 \in \R$, $|\varphi_{u,y}'(t_1)-\varphi_{u,y}'(t_0)|\leq  \left(\frac{\selfLipschitzConstant}{2}\right)^{\frac{1}{1-\selfLipschitzExponent}} \cdot |t_1-t_0|^{\frac{\selfLipschitzExponent}{1-\selfLipschitzExponent}}$;
\end{enumerate} 
Then $\lossFunction: \actionSpace \times \Y \rightarrow [0,\infty)$ is $(\selfLipschitzConstant,\selfLipschitzExponent)$-{self-bounding Lipschitz}.
\end{lemma}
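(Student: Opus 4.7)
The plan is to proceed via an intermediate pointwise self-bounding inequality, integrate it into a bound on $\varphi_{u,y}(t)^{1-\selfLipschitzExponent}$, and then use concavity to convert this to the desired self-bounding Lipschitz inequality. First, I will show that for every $s \geq 0$, $\varphi_{u,y}'(s) \leq K\,\varphi_{u,y}(s)^{\selfLipschitzExponent}$ where $K := \selfLipschitzConstant/(2\,\selfLipschitzExponent^{\selfLipschitzExponent})$. The derivation starts from the fact that, by property 4, for any $\delta > 0$:
\begin{align*}
\varphi_{u,y}(s-\delta) \leq \varphi_{u,y}(s) - \varphi_{u,y}'(s)\delta + \tfrac{H}{1+\alpha}\delta^{1+\alpha},
\end{align*}
with $H = (\selfLipschitzConstant/2)^{1/(1-\selfLipschitzExponent)}$ and $\alpha = \selfLipschitzExponent/(1-\selfLipschitzExponent)$. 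Combining with the non-negativity $\varphi_{u,y}(s-\delta) \geq 0$ gives $\varphi_{u,y}'(s) \leq \varphi_{u,y}(s)/\delta + (H/(1+\alpha))\delta^{\alpha}$; minimising the right-hand side over $\delta > 0$ and using the identities $\alpha/(1+\alpha) = \selfLipschitzExponent$ and $H^{1-\selfLipschitzExponent} = \selfLipschitzConstant/2$ produces the claimed pointwise bound (the $\selfLipschitzExponent = 0$ case is handled by letting $\delta \to \infty$).

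Next, because $\varphi_{u,y}$ is non-negative with nondecreasing derivative (properties 2--3), I obtain $\tfrac{d}{ds}\varphi_{u,y}(s)^{1-\selfLipschitzExponent} = (1-\selfLipschitzExponent)\,\varphi_{u,y}'(s)/\varphi_{u,y}(s)^{\selfLipschitzExponent} \leq (1-\selfLipschitzExponent)K$ wherever $\varphi_{u,y}(s)>0$, and integration over $[0,t]$ yields $\varphi_{u,y}(t)^{1-\selfLipschitzExponent} - \varphi_{u,y}(0)^{1-\selfLipschitzExponent} \leq (1-\selfLipschitzExponent)Kt$. To turn this into the self-bounding Lipschitz inequality, fix $u, v \in \actionSpace$, $y \in \Y$ and set $t = \|u-v\|_{\infty}$; WLOG assume $\lossFunction(v,y) \geq \lossFunction(u,y)$. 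Property 2 gives $\lossFunction(v,y) \leq \varphi_{u,y}(t)$, and combining with property 1 yields $\lossFunction(v,y)^{1-\selfLipschitzExponent} - \lossFunction(u,y)^{1-\selfLipschitzExponent} \leq (1-\selfLipschitzExponent)Kt$. The concavity of $x \mapsto x^{1-\selfLipschitzExponent}$ (giving $x^{1-\selfLipschitzExponent} - y^{1-\selfLipschitzExponent} \geq (1-\selfLipschitzExponent)\,x^{-\selfLipschitzExponent}(x-y)$ for $x \geq y \geq 0$) then implies $\lossFunction(v,y) - \lossFunction(u,y) \leq K\,\lossFunction(v,y)^{\selfLipschitzExponent} t$; the reverse ordering is handled symmetrically using $\varphi_{v,y}$ in place of $\varphi_{u,y}$. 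The proof concludes upon noting that $K \leq \selfLipschitzConstant$ on $[0, 1/2]$, which is equivalent to $2\,\selfLipschitzExponent^{\selfLipschitzExponent} \geq 1$ and follows from $\min_{\selfLipschitzExponent \in (0,1]}\selfLipschitzExponent^{\selfLipschitzExponent} = e^{-1/e} > 1/2$.

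The main obstacle is the pointwise bound in step 1: the H\"older regularity of $\varphi_{u,y}'$ and the non-negativity of $\varphi_{u,y}$ must be coupled via the optimization over $\delta$, and the exponent arithmetic with $\alpha = \selfLipschitzExponent/(1-\selfLipschitzExponent)$ needs to be executed carefully to produce the clean constant $K = \selfLipschitzConstant/(2\,\selfLipschitzExponent^{\selfLipschitzExponent})$. A secondary subtlety is the degenerate case in which $\varphi_{u,y}$ vanishes on an initial interval, where the pointwise bound forces $\varphi_{u,y}'$ to vanish there too, so the integration can simply be started from the first point at which $\varphi_{u,y} > 0$.
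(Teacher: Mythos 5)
Your proof is correct and follows essentially the same route as the paper: both arguments hinge on extracting the pointwise differential inequality $\varphi_{u,y}'(s)\le C\,\varphi_{u,y}(s)^{\selfLipschitzExponent}$ from the non-negativity of $\varphi_{u,y}$ together with the H\"older condition on $\varphi_{u,y}'$, and then integrating and comparing with $\lossFunction$ via properties 1--2. The only differences are cosmetic: optimising over the step size gives you the slightly sharper constant $\selfLipschitzConstant/(2\selfLipschitzExponent^{\selfLipschitzExponent})$ (and handles $\selfLipschitzExponent=0$ and the vanishing-$\varphi_{u,y}$ cases cleanly), and your integrated bound on $\varphi_{u,y}^{1-\selfLipschitzExponent}$ combined with concavity replaces the paper's mean-value-theorem step $\varphi_{u,y}(t)-\varphi_{u,y}(0)\le\selfLipschitzConstant\,\varphi_{u,y}(t)^{\selfLipschitzExponent}\,t$ followed by algebraic rearrangement.
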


Lemma \ref{cutOffOfSelfBoundingLipschitzLossIsAlsoSelfBoundingLipschitz} shows that clipping preserves this condition. 

\begin{lemma}\label{cutOffOfSelfBoundingLipschitzLossIsAlsoSelfBoundingLipschitz} Suppose that $\lossFunction: \actionSpace \times \Y \rightarrow [0,\infty)$ is a  $(\selfLipschitzConstant,\selfLipschitzExponent)$-{self-bounding Lipschitz} loss function with $\selfLipschitzConstant>0$, $\selfLipschitzExponent \in [0,1]$. Then the loss $\tilde{\lossFunction}: \actionSpace \times \Y \rightarrow [0,\lossFunctionBound]$ defined by $\tilde{\lossFunction}(u,y) = \min\{\lossFunction(u,y),\lossFunctionBound\}$ is $(\selfLipschitzConstant,\selfLipschitzExponent)$-{self-bounding Lipschitz}.
\end{lemma}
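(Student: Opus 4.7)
The plan is a case analysis on how the truncation at $\lossFunctionBound$ interacts with the values $\lossFunction(u,y)$ and $\lossFunction(v,y)$. Fix $u,v\in\actionSpace$ and $y\in\Y$, and without loss of generality assume $\lossFunction(u,y)\geq\lossFunction(v,y)$. Write $L_1=\lossFunction(u,y)$, $L_0=\lossFunction(v,y)$, so that $\tilde{\lossFunction}(u,y)=\min\{L_1,\lossFunctionBound\}$ and $\tilde{\lossFunction}(v,y)=\min\{L_0,\lossFunctionBound\}$.

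In the easy cases, the result drops out immediately: if $L_1\leq\lossFunctionBound$ then $\tilde{\lossFunction}$ agrees with $\lossFunction$ at both points, so the inequality to be proved is identical to the hypothesis; if $L_0\geq\lossFunctionBound$ then both truncated values equal $\lossFunctionBound$ and the left-hand side is zero. The only nontrivial case is $L_1>\lossFunctionBound>L_0$, in which $|\tilde{\lossFunction}(u,y)-\tilde{\lossFunction}(v,y)|=\lossFunctionBound-L_0$ and the target bound is
\begin{align*}
\lossFunctionBound-L_0 \;\leq\; \selfLipschitzConstant\cdot\lossFunctionBound^{\selfLipschitzExponent}\cdot\|u-v\|_{\infty}.
\end{align*}

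The main obstacle is this intermediate case: the hypothesis only supplies the weaker-looking bound $L_1-L_0\leq\selfLipschitzConstant L_1^{\selfLipschitzExponent}\|u-v\|_\infty$, whose right-hand side involves $L_1$, which may be arbitrarily large compared to $\lossFunctionBound$. The trick I would use is a monotonicity observation for the auxiliary function
\begin{align*}
h(x) \;=\; \frac{x-L_0}{x^{\selfLipschitzExponent}} \;=\; x^{1-\selfLipschitzExponent} - L_0\cdot x^{-\selfLipschitzExponent}, \qquad x\geq L_0.
\end{align*}
A direct differentiation gives $h'(x)=(1-\selfLipschitzExponent)x^{-\selfLipschitzExponent}+\selfLipschitzExponent L_0 x^{-\selfLipschitzExponent-1}\geq 0$ for all $\selfLipschitzExponent\in[0,1]$ and $L_0\geq 0$, so $h$ is nondecreasing on $(L_0,\infty)$. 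Since $L_1\geq\lossFunctionBound>L_0$, this yields $h(\lossFunctionBound)\leq h(L_1)$, i.e.\ $(\lossFunctionBound-L_0)/\lossFunctionBound^{\selfLipschitzExponent}\leq (L_1-L_0)/L_1^{\selfLipschitzExponent}$. Combining with the self-bounding Lipschitz hypothesis applied to $u$ and $v$ gives the required inequality.

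Putting the three cases together, and noting that $\max\{\tilde{\lossFunction}(u,y),\tilde{\lossFunction}(v,y)\}^{\selfLipschitzExponent}$ equals $\tilde{L}_1^{\selfLipschitzExponent}$ in case one, is arbitrary in case two since the left-hand side vanishes, and equals $\lossFunctionBound^{\selfLipschitzExponent}$ in the delicate case, completes the argument. The key conceptual point is that the monotonicity of $h$ encodes precisely why truncation cannot spoil the self-bounding Lipschitz property for exponents in $[0,1]$.
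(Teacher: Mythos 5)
Your proof is correct and follows essentially the same route as the paper: the only nontrivial situation is when the larger loss value is clipped, and both arguments handle it by observing that $x \mapsto (x-L_0)\,x^{-\selfLipschitzExponent}$ is nondecreasing (you make this explicit by differentiation, the paper encodes it in an algebraic rearrangement after dividing by $\lossFunction(v,y)^{\selfLipschitzExponent}$ and using $\tilde{\lossFunction}(v,y)\leq \lossFunction(v,y)$). Your slightly finer three-case split is harmless and the argument is complete.
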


Finally, we note the following monotonicity property which follows straightforwardly from the definition. 

\begin{lemma}\label{monotonicitySelfLipschitzLemma} Suppose that $\lossFunction: \actionSpace \times \Y \rightarrow [0,\lossFunctionBound]$ is a bounded $(\selfLipschitzConstant,\selfLipschitzExponent)$-{self-bounding Lipschitz} loss function with $\selfLipschitzConstant>0$, $\selfLipschitzExponent \in [0,1]$. Then given any $\tilde{\selfLipschitzExponent}\leq \selfLipschitzExponent$, the loss $\lossFunction$ is also  $(\tilde{\selfLipschitzConstant},\tilde{\selfLipschitzExponent})$-{self-bounding Lipschitz} with $\tilde{\selfLipschitzConstant}=\selfLipschitzConstant \cdot \lossFunctionBound^{\selfLipschitzExponent-\tilde{\selfLipschitzExponent}}$.
\end{lemma}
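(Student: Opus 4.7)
The proof is essentially a direct algebraic manipulation that exploits the boundedness of $\lossFunction$. My plan is to start from the self-bounding Lipschitz inequality that we already have, namely
\begin{align*}
|\lossFunction(u,y)-\lossFunction(v,y)| \leq \selfLipschitzConstant \cdot \max\{\lossFunction(u,y),\lossFunction(v,y)\}^{\selfLipschitzExponent}\cdot \|u-v\|_{\infty},
\end{align*}
and then split the exponent $\selfLipschitzExponent$ into the new exponent $\tilde{\selfLipschitzExponent}$ plus the non-negative remainder $\selfLipschitzExponent-\tilde{\selfLipschitzExponent}\geq 0$, writing $\max\{\lossFunction(u,y),\lossFunction(v,y)\}^{\selfLipschitzExponent}$ as the product $\max\{\lossFunction(u,y),\lossFunction(v,y)\}^{\tilde{\selfLipschitzExponent}}\cdot \max\{\lossFunction(u,y),\lossFunction(v,y)\}^{\selfLipschitzExponent-\tilde{\selfLipschitzExponent}}$.

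The key observation is that, because $\lossFunction$ takes values in $[0,\lossFunctionBound]$ and because the remainder exponent $\selfLipschitzExponent-\tilde{\selfLipschitzExponent}$ is non-negative, we can bound $\max\{\lossFunction(u,y),\lossFunction(v,y)\}^{\selfLipschitzExponent-\tilde{\selfLipschitzExponent}} \leq \lossFunctionBound^{\selfLipschitzExponent-\tilde{\selfLipschitzExponent}}$. Substituting this back into the original inequality gives
\begin{align*}
|\lossFunction(u,y)-\lossFunction(v,y)| \leq \selfLipschitzConstant \cdot \lossFunctionBound^{\selfLipschitzExponent-\tilde{\selfLipschitzExponent}} \cdot \max\{\lossFunction(u,y),\lossFunction(v,y)\}^{\tilde{\selfLipschitzExponent}}\cdot \|u-v\|_{\infty},
\end{align*}
which is exactly the $(\tilde{\selfLipschitzConstant},\tilde{\selfLipschitzExponent})$-self-bounding Lipschitz property with $\tilde{\selfLipschitzConstant}=\selfLipschitzConstant \cdot \lossFunctionBound^{\selfLipschitzExponent-\tilde{\selfLipschitzExponent}}$.

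There is no substantive obstacle here; the only subtlety to flag is that the non-negativity of $\selfLipschitzExponent-\tilde{\selfLipschitzExponent}$ is essential for the monotonicity of $t \mapsto t^{\selfLipschitzExponent-\tilde{\selfLipschitzExponent}}$ on $[0,\lossFunctionBound]$. The hypothesis $\tilde{\selfLipschitzExponent}\leq \selfLipschitzExponent$ supplies exactly this, and the hypothesis that $\lossFunction$ is uniformly bounded by $\lossFunctionBound$ supplies the upper bound that is absorbed into the new constant.
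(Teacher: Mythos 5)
Your argument is correct and is precisely the straightforward factorisation $\max\{\lossFunction(u,y),\lossFunction(v,y)\}^{\selfLipschitzExponent}=\max\{\cdot\}^{\tilde{\selfLipschitzExponent}}\cdot\max\{\cdot\}^{\selfLipschitzExponent-\tilde{\selfLipschitzExponent}}\leq \lossFunctionBound^{\selfLipschitzExponent-\tilde{\selfLipschitzExponent}}\cdot\max\{\cdot\}^{\tilde{\selfLipschitzExponent}}$ that the paper has in mind when it says the lemma ``follows straightforwardly from the definition'' (no separate proof is given in the appendix). Nothing is missing.
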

 
These properties can be used to establish the self-bounding Lipschitz condition in practical examples.

\subsection{Examples}\label{lossFuncExamplesSec}
We now demonstrate several examples of multi-output loss functions that satisfy our self-bounding Lipschitz condition. In each of the examples below we shall show that the self-bounding Lipschitz condition is satisfied by applying our sufficient condition (Lemma \ref{sufficiencyConditionForSelfBoundingLipschitz}). Detailed proofs are given in Appendix \ref{selfBoundingLipAppendix}. 

%highlights the scope of our analysis, and may facilitate further applications of our methodology.

\subsubsection{Multi-class losses}
We begin with the canonical multi-output prediction problem of multi-class classification in which $\Y = [\numClasses]$ and $\actionSpace = \R^{\numClasses}$. A popular loss function for the theoretical analysis of multi-class learning is the margin loss \cite{crammer2001algorithmic}. The smoothed analogue of the margin loss was introduced by \cite{srebro2010smoothness} in the one-dimensional setting, and \cite{Li2018} in the multi-class setting.

\begin{example}[Smooth margin losses]\label{marginApproximationLoss} Given $\Y = [\numClasses]$ we define the margin function $m:\actionSpace\times \Y \rightarrow \R$ by $m(u,y):= u_y - \max_ {j \in [\numClasses]\backslash\{y\}}\{u_j\}$. The \emph{zero-one loss} $\lossFunction_{0,1}:\actionSpace \times \Y \rightarrow [0,1]$ is defined by $\lossFunction_{0,1}(u,y)=\one\{m(u,y)\leq 0\}$. Whilst natural, the zero-one loss has the drawback of being discontinuous, which presents an obstacle for deriving guarantees.  For each $\rho>0$, the corresponding margin loss $\lossFunction_{\rho}:\actionSpace \times \Y \rightarrow [0,1]$ is defined by $\lossFunction_{\rho}(u,y)=\one\{m(u,y)\leq \rho \}$. The margin loss $\lossFunction_{\rho}$ is also discontinuous. However, we may define a \emph{smooth margin loss} $\tilde{\lossFunction}_{\rho}:\actionSpace \times \Y \rightarrow [0,1]$ by $\tilde{\lossFunction}_{\rho}(u,y)$
{{
\begin{align*}
:=\begin{cases}1 &\text{ if }m(u,y) \leq 0\\
2\left(\frac{m(u,y)}{\rho}\right)^3 - 3 \left(\frac{m(u,y)}{\rho}\right)^2+1&\text{ if }m(u,y) \in [0,\rho]\\
0 &\text{ if }m(u,y) \geq \rho.\end{cases}
\end{align*}}}
\normalsize
By applying Lemma \ref{sufficiencyConditionForSelfBoundingLipschitz} we can show that $\tilde{\lossFunction}_{\rho}$ is  $(\selfLipschitzConstant,\selfLipschitzExponent)$-\emph{self-bounding Lipschitz} with $\selfLipschitzConstant = 4\sqrt{6}\cdot \rho^{-1}$ and $\selfLipschitzExponent = 1/2$. Moreover, the smooth margin loss satisfies ${\lossFunction}_{0,1}(u,y) \leq \tilde{\lossFunction}_{\rho}(u,y)\leq {\lossFunction}_{\rho}(u,y)$ for $(u,y) \in \actionSpace \times \Y$.
\end{example}

The margin loss plays a central role in learning theory and continues to receive significant attention in the analysis of multi-class prediction \cite{guermeur2017lp, Li2018, musayeva2019}, so it is fortuitous that our self-bounding Lipschitz condition incorporates the smooth margin loss. More importantly, however, the self-bounding Lipschitz condition applies to a variety of other loss functions which have received less attention in statisical learning theory.

One of the most widely used loss functions in practical applications is the \emph{multinomial logistic loss}, also known as the softmax loss. 

\begin{example}[Multinomial logistic loss]\label{multiNomialLoss} Given $\Y = [\numClasses]$, the \emph{multinomial logistic loss} $\lossFunction:\actionSpace \times \Y \rightarrow [0,\infty)$ is defined by \[\lossFunction(u,y) = \log \left(\sum_{j \in [\numClasses]} \exp(u_j-u_y)\right),\]
where $u = (u_j)_{j \in [\numClasses]}$ and $y \in [\numClasses]$. For each $(u,y) \in \actionSpace \times [\numClasses]$ let $A_{u,y}=\sum_{j \in [\numClasses]\backslash \{y\}} \exp(u_j-u_y)$ and define $\varphi_{u,y}(t)= \log\left(1+A_{u,y}\cdot \exp(2t)\right)$. By applying Lemma \ref{sufficiencyConditionForSelfBoundingLipschitz} with $\varphi_{u,y}$ we can show that the multinomial logistic loss $\lossFunction$ is  $(\selfLipschitzConstant,\selfLipschitzExponent)$-\emph{self-bounding Lipschitz} with $\selfLipschitzConstant = 2$ and $\selfLipschitzExponent = 1/2$.
\end{example}

Recently, \cite{lei2019data} emphasized that the multinomial-logistic loss is $2$-Lipschitz with respect to the $\ell_{\infty}$-norm (equivalently, $(2,0)$-self-bounding Lipschitz). This gives rise to a slow rate of order $O(n^{-1/2})$. The fact that the multinomial-logistic loss is also $(2,1/2)$-self bounding can be used to derive more favourable guarantees, as we shall see in Section \ref{mainResultsSec}.

\subsubsection{Multi-label losses}
%... multi-label losses
Multi-label prediction is the challenge of classification in settings where instances may be simultaneously assigned to several categories. In multi-label classification we have $\Y \subseteq \{0,1\}^{\numClasses}$, where $\numClasses$ is the total number possible classes. Whilst $\numClasses$ is often very large, the total number of simultaneous labels is typically much smaller. Hence, we consider the set of $k$-sparse binary vectors $\mathbb{S}(k)= \{(y_j)_{j \in [\numClasses]} \in \{0,1\}^{\numClasses}:\sum_{j \in [\numClasses]}y_j \leq k\}$ denote the set of $k$-sparse vectors, where $k \leq [\numClasses]$. We consider the pick-all-labels loss \cite{menon2019multilabel,reddi2019stochastic}.

\begin{example}[Pick-all-labels]\label{pickAllLabelsMultiLabelLoss} Given $\Y = \mathbb{S}(k)$, the \emph{pick-all-labels loss} $\lossFunction:\actionSpace \times \Y \rightarrow [0,\infty)$ is defined by 
\begin{align*}
\lossFunction(u,y) = \sum_{l \in [\numClasses]} y_l \log \left(\sum_{j \in [\numClasses]} \exp(u_j-u_l)\right), 
\end{align*}
where $u = (u_j)_{j \in [\numClasses]} \in \actionSpace$ and $y = (y_j)_{j \in [\numClasses]} \in \Y$. For each $(u,y) \in \actionSpace\times \Y$ we define $\varphi_{u,y}:\R \rightarrow [0, \infty)$ by $A_{u,y}=\sum_{j \in [\numClasses]\backslash \{l\}} \exp(u_j-u_l)$ and let $\varphi_{u,y}(t):= \sum_{l \in [\numClasses]} y_l \log \left(1+ A_{u,y} \cdot \exp(2t)\right)$. By applying Lemma \ref{sufficiencyConditionForSelfBoundingLipschitz} with $\varphi_{u,y}$ we can show that $\lossFunction$ is  $(\selfLipschitzConstant,\selfLipschitzExponent)$-\emph{self-bounding Lipschitz} with $\selfLipschitzConstant = 2 \sqrt{k}$ and $\selfLipschitzExponent = 1/2$.
\end{example}
Crucially, the constant $\selfLipschitzConstant$ for the pick-all-labels family of losses is a function of the sparsity $k$, rather than the total number of labels. This means that our approach is applicable to multi-label problems with with tens of thousands of labels, as long as the label-vectors are $k$-sparse.

\subsubsection{Losses for multi-target regression}

We now return to the problem of \emph{multi-target regression} in which $\Y = \R^{\numClasses}$ \cite{borchani2015survey}.

\begin{example}[Sup-norm losses]\label{multiOutputRegressionLosses} Given $\kappa$, $\gamma \in [1,2]$ we can define a loss-function $\lossFunction:\actionSpace \times \Y \rightarrow \R$ for \emph{multi-target regression} by setting $\lossFunction(u,y) = \kappa \cdot \|u-y\|_{\infty}^{\gamma}$. By applying Lemma \ref{sufficiencyConditionForSelfBoundingLipschitz} with $\varphi_{u,y}(t)=\kappa \cdot (\|u-y\|_{\infty}+t)^{\gamma}$ we can see that $\lossFunction$ is a $(\selfLipschitzConstant,\selfLipschitzExponent)$-\emph{self-bounding Lipschitz} with $\selfLipschitzConstant = (8 \kappa)^{1-\selfLipschitzExponent}$ and $\selfLipschitzExponent=(\gamma-1)/\gamma$. This yields examples of $(\selfLipschitzConstant,\selfLipschitzExponent)$-\emph{self-bounding Lipschitz} loss functions for all $\selfLipschitzConstant >0$ and $\selfLipschitzExponent \in [0,1/2]$.
\end{example}

With these examples in mind we are ready to present our results.

\section{Main results}\label{mainResultsSec}
In this section we give a general upper bound for multi-output prediction problems under the self-bounding Lipschitz condition. A key tool for proving this result will be a contraction inequality for local Rademacher complexity of vector valued functions given in Section \ref{sec:Prop1}, and which may also be of independent interest.  First, we recall the concept of Rademacher complexity. 
\begin{defn}[Rademacher complexity]\label{rademacherComplexityDef} Let $\Z$ be a measurable space and consider a function class $\G \subseteq \measurableMaps(\Z,\R)$. Given a sequence $\zSequenceSizeN = (z_i) \in \Z^n$ we define the \emph{empirical Rademacher complexity} of $\G$ with respect to $\zSequenceSizeN$ by\footnote{Taking the supremum over finite subsets $\tilde{\G} \subseteq \G$ is required to ensure that the function within the expectation is measurable \cite{talagrand2014upper}. This technicality can typically be overlooked.}
\begin{align*}
\hat{\rademacherComplexity}_{\zSequenceSizeN}\left(\G\right):=  \sup_{\tilde{\G} \subseteq \G: |\tilde{\G}|<\infty}  \E_{\sigmaSequenceSizeN}\left( \sup_{g \in \tilde{\G}} \frac{1}{n} \sum_{i \in [n]} \sigma_i \cdot g(z_i)\right),
\end{align*}
where the expectation is taken over sequences of independent Rademacher random variables $\sigmaSequenceSizeN = (\sigma_i)_{i \in [n]}$ with $\sigma_i \in \{-1,+1\}^n$. For each $n \in \N$, the \emph{worst-case Rademacher complexity} of $\G$ is defined by $\rademacherComplexity_n(\G):= \sup_{\zSequenceSizeN \in \Z^n}\hat{\rademacherComplexity}_{\zSequenceSizeN}(\G)$.
\end{defn}

The Rademacher complexity is defined in the context of real-valued functions. However, in this work we deal with multi-output prediction so we shall focus on function classes $\F \subseteq \measurableMaps(\X,\R^{\numClasses})$. In order to utilise the theory of Rademacher complexity in this context we shall transform function classes $\F \subseteq \measurableMaps(\X,\R^{\numClasses})$ into the projected function classes $\Pi \circ \F \subseteq \measurableMaps(\X\times [\numClasses],\R)$ as follows. Firstly, for each $j \in [\numClasses]$ we define $\pi_j: \R^{\numClasses}\rightarrow \R$ to be the projection onto the $j$-th coordinate. We then define, for each $f \in \measurableMaps(\X,\R^{\numClasses})$, the function $\Pi \circ f: \X\times [\numClasses] \rightarrow \R$ by $(\Pi \circ f) (x,j)= \pi_j(f(x))$. Finally, given $\F \subseteq \measurableMaps(\X,\R^{\numClasses})$ we let $\Pi \circ \F:= \{ \Pi \circ f: f \in \F\}\subseteq \measurableMaps(\X\times [\numClasses],\R)$. 

Our central result is the following relative bound.

\newcommand{\multiOutputomplexityTerm}{\Gamma_{n,\numClasses,\delta}^{\selfLipschitzConstant,\selfLipschitzExponent}(\F)}

\begin{theorem}\label{optimisticRademacherBoundForMultioutputPrediction} Suppose we have a class of multi-output functions $\F \subseteq \measurableMaps(\X,[-\hypothesisClassBound,\hypothesisClassBound]^{\numClasses})$, and a $(\selfLipschitzConstant,\selfLipschitzExponent)$-{self-bounding Lipschitz} loss function $\lossFunction:\actionSpace \times \Y \rightarrow [0,\lossFunctionBound]$ for some  $\hypothesisClassBound,\lossFunctionBound \geq 1$, $\selfLipschitzConstant>0$, $\selfLipschitzExponent \in [0,1/2]$. Take $\delta \in (0,1)$, $n \in \N$ and let 
\begin{align*}
\multiOutputomplexityTerm&:= \left(\selfLipschitzConstant \left( \sqrt{\numClasses} \cdot  \log^{3/2} \left(e\hypothesisClassBound n\numClasses\right)\cdot \rademacherComplexity_{n\numClasses}(\Pi \circ \F) + \frac{1}{\sqrt{n}}\right)\right)^{\frac{1}{1-\selfLipschitzExponent}}\\
&\hspace{2cm}+\frac{\lossFunctionBound}{n}\cdot (\log(1/\delta) +\log (\log n) ).
\end{align*}
There exists numerical constants $C_0,C_1>0$ such that given an i.i.d. sample $\sampleXY$ the following holds with probability at least $1-\delta$ for all $f \in \F$,
\begin{align*}
\risk(f) \leq \empiricalRisk(f) + C_0 \cdot \left( \sqrt{\empiricalRisk(f) \cdot \multiOutputomplexityTerm} +\multiOutputomplexityTerm\right).    
\end{align*}
Moreover, if $f^* \in \argmin_{f \in \F} \{\risk(f)\}$ minimises the risk and $\hat{f} \in \argmin_{f \in \F}\{ \empiricalRisk(f)\}$ minimises the empirical risk, then with probability at least $1-\delta$,
\begin{align*}
\risk(\hat{f}) \leq \risk(f^*) + C_1 \cdot \left( \sqrt{\risk(f^*) \cdot \multiOutputomplexityTerm} +\multiOutputomplexityTerm\right).
\end{align*}
\end{theorem}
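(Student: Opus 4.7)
The plan is to apply the local Rademacher complexity framework of Bartlett, Bousquet and Mendelson to the induced loss class $\F_{\lossFunction} := \{(x,y) \mapsto \lossFunction(f(x),y) : f \in \F\}$, with the self-bounding Lipschitz condition providing the Bernstein-type variance control that drives the optimistic, fast-rate improvement. The key observation is that Definition \ref{selfLipschitzDef} with $\selfLipschitzExponent \leq 1/2$, combined with the bound $\lossFunction \leq \lossFunctionBound$, yields (in the spirit of the smoothness minoration of Srebro, Sridharan and Tewari) a second-moment bound on the loss class in terms of its first moment, which is exactly what is needed to produce $\sqrt{\risk(f) \cdot r}$-type terms rather than plain $\sqrt{r}$-type terms in the localised analysis.

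The core technical step is to construct a sub-root function $\psi$ dominating the localised Rademacher complexity of $\F_{\lossFunction}$ and to solve the fixed-point equation $r^* = \psi(r^*)$. Proposition \ref{localRademacherComplexityProp} is the workhorse here: it translates the local loss-class complexity into a function-class complexity, contributing the $\sqrt{\numClasses}$, $\log^{3/2}(e\hypothesisClassBound n\numClasses)$ and $\rademacherComplexity_{n\numClasses}(\Pi \circ \F)$ factors and leaving a residual $r^{\selfLipschitzExponent}$ coming from the self-boundedness. Rearranging the fixed-point equation $r^{1-\selfLipschitzExponent} = c$ then produces precisely the form of $\multiOutputomplexityTerm$, with the exponent $1/(1-\selfLipschitzExponent)$ arising from this step. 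Finally, a Talagrand--Bousquet concentration inequality for the suprema of the centered empirical process, applied with the same variance control, converts the expected bound into a high-probability statement and contributes the $(\lossFunctionBound/n)(\log(1/\delta) + \log\log n)$ tail, with the double-log arising from a stratification over dyadic values of $\empiricalRisk(f)$.

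For the ERM bound on $\hat f$, I would use the decomposition
\begin{align*}
\risk(\hat f) - \risk(f^*) \leq [\risk(\hat f) - \empiricalRisk(\hat f)] + [\empiricalRisk(f^*) - \risk(f^*)],
\end{align*}
where the cross-difference $\empiricalRisk(\hat f) - \empiricalRisk(f^*) \leq 0$ by definition of $\hat f$. The second bracket is a Bernstein-type deviation at the single function $f^*$, which the self-bounding Lipschitz condition controls by $O\bigl(\sqrt{\risk(f^*) \cdot \lossFunctionBound \log(1/\delta)/n} + \lossFunctionBound \log(1/\delta)/n\bigr)$. The first bracket is controlled by the first claim applied at $\hat f$, and combined with $\empiricalRisk(\hat f) \leq \empiricalRisk(f^*)$ together with the same Bernstein inequality at $f^*$, this replaces $\empiricalRisk(\hat f)$ by a quantity of order $\risk(f^*)$, after which solving the resulting quadratic inequality in $\sqrt{\risk(\hat f) - \risk(f^*)}$ yields the stated form.

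The most delicate step will be to verify that the closed-form fixed point $\multiOutputomplexityTerm$ really does interpolate correctly across the whole range $\selfLipschitzExponent \in [0, 1/2]$, reproducing the slow $O(n^{-1/2})$ rate at $\selfLipschitzExponent = 0$ and the fast $O(n^{-1})$ rate at $\selfLipschitzExponent = 1/2$ without blow-up in between; the monotonicity property in Lemma \ref{monotonicitySelfLipschitzLemma} is a useful sanity check here. A secondary obstacle is ensuring that the residual $1/\sqrt{n}$ slack from Proposition \ref{localRademacherComplexityProp} ultimately contributes only the $\lossFunctionBound/n$ term in the final bound (and not something larger), which requires pushing the peeling and Talagrand constants through carefully without accumulating extra polynomial-in-$\numClasses$ factors.
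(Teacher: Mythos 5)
Your proposal is correct and follows essentially the same route as the paper: Proposition \ref{localRademacherComplexityProp} supplies the sub-root bound on the Rademacher complexity of the loss class localized by empirical risk, solving the fixed-point equation yields $\multiOutputomplexityTerm$ with the exponent $\frac{1}{1-\selfLipschitzExponent}$, the Bousquet local-concentration theorem (Theorem \ref{bousquetNonNegLocalRademacherBound}) provides the uniform relative bound with the $\frac{\lossFunctionBound}{n}(\log(1/\delta)+\log\log n)$ tail, and the ERM statement follows from the same decomposition with Bernstein's inequality applied at $f^*$. The only cosmetic difference is that the paper localizes directly by empirical risk rather than via a Bernstein variance condition, so the self-bounding Lipschitz property enters only through the covering-number/contraction step in Proposition \ref{localRademacherComplexityProp} (the $r^{\selfLipschitzExponent}$ factor), not as second-moment control; this does not affect your argument.
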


 The proof of Theorem \ref{optimisticRademacherBoundForMultioutputPrediction} is built upon a local contraction inequality result (Proposition \ref{localRademacherComplexityProp}, Section \ref{sec:Prop1}). The result follows by combining with techniques from \cite{bousquet2002concentration}. For details see Appendix \ref{proofOfMainOptimisticBoundThmSec}.

Theorem \ref{optimisticRademacherBoundForMultioutputPrediction} gives an upper bound for the generalization gap $(\risk(f) - \empiricalRisk(f))$, framed in terms of a complexity term $\multiOutputomplexityTerm$, which depends upon both the Rademacher complexity of the projected function class  $\rademacherComplexity_{n\numClasses}(\Pi\circ \F)$ and the self-bounding Lipschitz parameters $\selfLipschitzConstant$, $\selfLipschitzExponent$. When the empirical error is small in relation to the complexity term ($\empiricalRisk(f) \leq  \multiOutputomplexityTerm$), the generalization gap is of order $\multiOutputomplexityTerm$. In less favourable circumstances we recover a bound of order $\multiOutputomplexityTerm$. 

In Section \ref{minimaxOptimalitySec} we will demonstrate that in the realizable setting, Theorem \ref{optimisticRademacherBoundForMultioutputPrediction} is minimax optimal up to logarithmic factors, both in terms of the sample size $n$, and the output dimension $\numClasses$. In Section \ref{applicationToGBEnsemblesSec} we will demonstrate that Theorem \ref{optimisticRademacherBoundForMultioutputPrediction} yields state of the art guarantees for ensembles of decision trees for multi-output prediction problems.

\subsection{Comparison with state of the art}

In this section we compare our main result (Theorem \ref{optimisticRademacherBoundForMultioutputPrediction}) with a closely related guarantee due to \cite{lei2019data}. We say that a loss function $\lossFunction$ is $\selfLipschitzConstant$-Lipschitz if it is $(\selfLipschitzConstant,\selfLipschitzExponent)$-self-bounding Lipschitz with $\selfLipschitzExponent=0$.

\newcommand{\additiveMultiOutputomplexityTerm}{\mathfrak{J}_{n,\numClasses,\delta}^{\selfLipschitzConstant}(\F)}

\begin{theorem}\cite{lei2019data}\label{additiveRademacherBoundForMultioutputPrediction} Suppose we have a class of multi-output functions $\F \subseteq \measurableMaps(\X,[-\hypothesisClassBound,\hypothesisClassBound]^{\numClasses})$, and a $\selfLipschitzConstant$-{Lipschitz} loss function $\lossFunction:\actionSpace \times \Y \rightarrow [0,\lossFunctionBound]$ for some  $\hypothesisClassBound,\lossFunctionBound\geq 1$ and $\selfLipschitzConstant >0$. Take $\delta \in (0,1)$, $n \in \N$ and let 
\begin{align*}
\additiveMultiOutputomplexityTerm:= \selfLipschitzConstant \left( \sqrt{\numClasses} \cdot  \log^{3/2} \left(e\hypothesisClassBound n\numClasses\right)\cdot \rademacherComplexity_{n\numClasses}(\Pi \circ \F)+\frac{1}{\sqrt{n}}\right).    
\end{align*}
There exists numerical constants $C_2,C_3>0$ such that given an i.i.d. sample $\sampleXY$ the following holds with probability at least $1-\delta$ for all $f \in \F$, 
\[\risk(f) \leq \empiricalRisk(f) + C_2 \cdot\additiveMultiOutputomplexityTerm +\lossFunctionBound\sqrt{\frac{\log(1/\delta)}{n}}.\]
Moreover, if $f^* \in \argmin_{f \in \F} \{\risk(f)\}$ minimises the risk and $\hat{f} \in \argmin_{f \in \F}\{ \empiricalRisk(f)\}$ minimises the empirical risk, then with probability at least $1-\delta$,
\[\risk(\hat{f}) \leq \risk(f^*) + C_3 \cdot\additiveMultiOutputomplexityTerm+2\lossFunctionBound\sqrt{\frac{\log(1/\delta)}{n}}.\] 
\end{theorem}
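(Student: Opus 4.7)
The plan is to combine the standard symmetrization/concentration template with the $\ell_{\infty}$ vector-contraction inequality of \cite{lei2019data}, and then to upgrade the resulting one-sided uniform deviation bound into an oracle inequality for the ERM via a bound on a single fixed hypothesis.

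First, I would apply McDiarmid's bounded-differences inequality to the random variable $\Phi(\sampleXY) := \sup_{f \in \F}(\risk(f)-\empiricalRisk(f))$. Since $\lossFunction$ takes values in $[0,\lossFunctionBound]$, modifying a single sample $(X_i,Y_i)$ changes $\Phi$ by at most $\lossFunctionBound/n$, giving a deviation term of order $\lossFunctionBound\sqrt{\log(1/\delta)/n}$. A standard symmetrization argument then bounds $\E[\Phi(\sampleXY)]$ by twice the Rademacher complexity of the loss class $\lossFunction\circ\F := \{(x,y)\mapsto \lossFunction(f(x),y): f\in \F\}$. Hence with probability at least $1-\delta$,
\begin{align*}
\sup_{f\in\F}\bigl(\risk(f)-\empiricalRisk(f)\bigr) \leq 2\,\rademacherComplexity_n(\lossFunction\circ\F) + \lossFunctionBound\sqrt{\tfrac{\log(1/\delta)}{n}}.
\end{align*}

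Second, the key step is to replace $\rademacherComplexity_n(\lossFunction\circ\F)$ by something involving $\rademacherComplexity_{n\numClasses}(\Pi\circ\F)$. Because $\lossFunction$ is $\selfLipschitzConstant$-Lipschitz with respect to $\|\cdot\|_{\infty}$ on $\R^{\numClasses}$, I would invoke the $\ell_{\infty}$ vector-contraction inequality of \cite{lei2019data}. The argument there is not a direct Talagrand-type contraction: it proceeds via a Dudley entropy integral over a covering of $\F$ in the empirical $\ell_{\infty}$ metric, relates that covering to the Rademacher complexity of the projected scalar class $\Pi\circ\F$ on the enlarged index set $[n]\times[\numClasses]$, and uses a peeling/truncation step at scale $1/\sqrt{n}$ (which produces the additive $\selfLipschitzConstant/\sqrt{n}$). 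The outcome is exactly
\begin{align*}
\rademacherComplexity_n(\lossFunction\circ\F) \leq C\cdot\selfLipschitzConstant\left(\sqrt{\numClasses}\cdot\log^{3/2}(e\hypothesisClassBound n\numClasses)\cdot \rademacherComplexity_{n\numClasses}(\Pi\circ\F) + \tfrac{1}{\sqrt{n}}\right) = C\cdot\additiveMultiOutputomplexityTerm.
\end{align*}
Substituting into the previous display and absorbing $C$ into $C_2$ yields the first conclusion.

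Third, for the oracle inequality I would apply the first conclusion to $\hat{f}$ to get $\risk(\hat{f})\leq \empiricalRisk(\hat{f}) + C_2\additiveMultiOutputomplexityTerm + \lossFunctionBound\sqrt{\log(1/\delta)/n}$, use ERM optimality $\empiricalRisk(\hat{f})\leq\empiricalRisk(f^*)$, and finally apply a one-sided Hoeffding inequality to the fixed hypothesis $f^*$ (whose loss values lie in $[0,\lossFunctionBound]$) to obtain $\empiricalRisk(f^*)-\risk(f^*) \leq \lossFunctionBound\sqrt{\log(1/\delta)/(2n)}$. A union bound over the two events, each at level $\delta/2$ and absorbed into the constant $C_3$, produces the stated bound with $\risk(f^*)$ on the right-hand side.

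The only genuinely non-routine ingredient is the second step: obtaining the logarithmic (rather than linear) dependence on $\numClasses$ in the vector-contraction inequality. Simple coordinatewise applications of Talagrand's scalar contraction lemma, or Maurer's $\sqrt{2}$-vector contraction, would only give a factor of $\numClasses$ (or $\sqrt{\numClasses}$ in the Euclidean-Lipschitz case), not the desired $\sqrt{\numClasses}\log^{3/2}(e\hypothesisClassBound n\numClasses)$. The chaining-based proof of \cite{lei2019data} is precisely what overcomes this, and is the only place where the bound $\hypothesisClassBound$ on $\|f\|_\infty$ enters (through the diameter in the entropy integral). Granting that inequality, the rest of the proof is entirely standard.
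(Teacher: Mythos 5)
Your proposal is correct and follows essentially the same route as the paper: a standard uniform deviation bound (McDiarmid plus symmetrization, imported in the paper as the Bartlett--Mendelson theorem) followed by the $\ell_\infty$ chaining/Sudakov-type minoration contraction step and an ERM argument. The only cosmetic differences are that the paper obtains the contraction step by specializing its own Proposition \ref{localRademacherComplexityProp} with $\selfLipschitzExponent=0$ and $r=\lossFunctionBound$ (so that $\F|^{\lossFunctionBound}_{\zSequenceSizeN}=\F$) rather than quoting the inequality of \cite{lei2019data} as a black box, and it reads the oracle inequality off the two-sided uniform bound applied to both $\hat{f}$ and $f^*$ instead of adding a separate Hoeffding step for $f^*$.
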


Theorem \ref{additiveRademacherBoundForMultioutputPrediction} is a mild generalization of Theorem 6 from \cite{lei2019data}, which establishes the special case of Theorem \ref{additiveRademacherBoundForMultioutputPrediction} in which $\F$ is an RKHS and the learning problem is multi-class classification. For completeness we show that Theorem \ref{additiveRademacherBoundForMultioutputPrediction} follows from Proposition \ref{localRademacherComplexityProp} in Appendix \ref{proofOfMainOptimisticBoundThmSec}. Note that by the monotonicity property (Lemma \ref{monotonicitySelfLipschitzLemma}) any loss function $\lossFunction: \actionSpace \times \Y \rightarrow [0,\lossFunctionBound]$ which is $(\selfLipschitzConstant,\selfLipschitzExponent)$-self-bounding Lipschitz is also $\selfLipschitzConstant \cdot \lossFunctionBound^{\selfLipschitzExponent}$-Lipschitz, so the additve bound in Theorem \ref{additiveRademacherBoundForMultioutputPrediction} also applies.

To gain a deeper intuition for the bound in Theorem \ref{optimisticRademacherBoundForMultioutputPrediction} we compare with the bound in Theorem \ref{additiveRademacherBoundForMultioutputPrediction}. Let's suppose that $\rademacherComplexity_{n\numClasses}(\Pi\circ \F) = \tilde{O}((n\numClasses)^{-1/2})$ (for a concrete example where this is the case see Section \ref{applicationToGBEnsemblesSec}). We then have $\multiOutputomplexityTerm = \tilde{O}( n^{-\frac{1}{2(1-\selfLipschitzExponent)}})$. For large values of  $\empiricalRisk(f)$ Theorem \ref{optimisticRademacherBoundForMultioutputPrediction} gives a bound on generalization gap  $(\risk(f) - \empiricalRisk(f))$ of order $\tilde{O}( n^{-\frac{1}{4(1-\selfLipschitzExponent)}})$, which is slower than the rate achieved by Theorem \ref{additiveRademacherBoundForMultioutputPrediction} whenever $\selfLipschitzExponent<1/2$. However, when $\empiricalRisk(f)$ is small ($\empiricalRisk(f) \leq \tilde{O}( n^{-\frac{1}{2(1-\selfLipschitzExponent)}})$), Theorem \ref{optimisticRademacherBoundForMultioutputPrediction} gives rise to a bound of order $\tilde{O}( n^{-\frac{1}{2(1-\selfLipschitzExponent)}})$, yielding faster rates than can be obtained through the standard Lipschitz condition alone whenever $\selfLipschitzExponent>0$. Finally note that if the loss $\lossFunction$ is $(\selfLipschitzConstant,\selfLipschitzExponent)$-self-bounding Lipschitz with $\selfLipschitzExponent=1/2$ then the rates given by Theorem \ref{optimisticRademacherBoundForMultioutputPrediction} always either match or outperform the rates given by Theorem \ref{additiveRademacherBoundForMultioutputPrediction}. Moreover, $\selfLipschitzExponent=1/2$ occurs for several practical examples discussed in Section \ref{lossFuncExamplesSec} including the multinomial-logistic loss.

\subsection{A contraction inequality for the local Rademacher compliexity of vector-valued function classes}\label{sec:Prop1}

We now turn to stating and proving the key ingredient of our main result, Proposition \ref{localRademacherComplexityProp}. First we introduce some additional notation.

Suppose $f \in \measurableMaps(\X,\actionSpace)$. Given a loss function $\lossFunction:\actionSpace\times \Y \rightarrow \R$ we define $\lossFunction \circ f:\X\times \Y \rightarrow \R$ by  $(\lossFunction \circ f)(x,y)=\lossFunction(f(x),y)$. We extend this definition to function classes $\F \subseteq \measurableMaps(\X,\actionSpace)$ by $\lossFunction \circ \F = \{\lossFunction \circ f: f \in \F \}$. Moreover, for each $\xySequenceSizeN \in (\X \times \Y)^n$ and $r>0$, a subset $ \F|^r_{\xySequenceSizeN}:= \{ f \in \F:\hspace{2mm}\empiricalRisk(f,\zSequenceSizeN)\leq r \}$. Intuitively, the local Rademacher complexity allows us to \emph{zoom in} upon the neighbourhood of the empirical risk minimizer. This is the subset that matters in practice and is typically much smaller than the full $\Pi\circ \F$.

\begin{prop}\label{localRademacherComplexityProp} Suppose we have a class of multi-output functions $\F \subseteq \measurableMaps(\X,[-\hypothesisClassBound,\hypothesisClassBound]^{\numClasses})$, where $\hypothesisClassBound \geq 1$. Given a $(\selfLipschitzConstant,\selfLipschitzExponent)$-{self-bounding Lipschitz} loss function $\lossFunction:\actionSpace \times \Y \rightarrow [0,\R]$, where $\selfLipschitzConstant>0$, $\selfLipschitzExponent \in [0,1/2]$ and $\xySequenceSizeN \in (\X \times \Y)^n$, $r>0$, we have, 
\begin{align*}
\hat{\rademacherComplexity}_{\xySequenceSizeN}\left(\lossFunction\circ\F|^r_{\xySequenceSizeN} \right)\leq \selfLipschitzConstant r^{\selfLipschitzExponent}  \left( 2^9 \sqrt{\numClasses} \cdot  \log^{3/2} \left(e\hypothesisClassBound n\numClasses\right)\cdot \rademacherComplexity_{n\numClasses}(\Pi \circ \F) +n^{-1/2}\right).
\end{align*}
\end{prop}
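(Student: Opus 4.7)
The plan is to extract the factor $\selfLipschitzConstant r^\selfLipschitzExponent$ from the localization by means of a self-bounded pseudo-metric on the loss process, and then invoke Dudley chaining together with the vector-valued Rademacher contraction inequality of Lei, Dogan, Binder and Kloft (2019) to produce the multi-output factors $\sqrt{\numClasses}\log^{3/2}(e\hypothesisClassBound n\numClasses)$ and the projected complexity $\rademacherComplexity_{n\numClasses}(\Pi\circ\F)$.

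Writing $L_i(h) := \lossFunction(h(x_i), y_i)$, I would first square the self-bounding Lipschitz inequality pointwise,
\[
(L_i(f) - L_i(g))^2 \leq \selfLipschitzConstant^2 \max\{L_i(f), L_i(g)\}^{2\selfLipschitzExponent} \|f(x_i) - g(x_i)\|_\infty^2,
\]
sum over $i \in [n]$, and apply H\"older's inequality with conjugate exponents $(1/(2\selfLipschitzExponent), 1/(1-2\selfLipschitzExponent))$ in the regime $\selfLipschitzExponent \in (0, 1/2)$ (the endpoints $\selfLipschitzExponent=0$ and $\selfLipschitzExponent=1/2$ being handled by a direct argument). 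Combined with the localization $\sum_i L_i(f) \leq nr$ for $f \in \F|^r_{\xySequenceSizeN}$, which forces $\sum_i \max\{L_i(f), L_i(g)\} \leq 2nr$, this would yield, for any $f, g \in \F|^r_{\xySequenceSizeN}$,
\[
\sum_{i \in [n]} (L_i(f) - L_i(g))^2 \leq 2^{2\selfLipschitzExponent} \selfLipschitzConstant^2 r^{2\selfLipschitzExponent} \cdot n \cdot \max_{i \in [n]} \|f(x_i) - g(x_i)\|_\infty^2.
\]
In other words, the empirical $\ell_2$ pseudo-metric on the loss process, restricted to $\F|^r_{\xySequenceSizeN}$, is dominated by $2^\selfLipschitzExponent \selfLipschitzConstant r^\selfLipschitzExponent \sqrt{n}$ times the empirical max-$\ell_\infty$ pseudo-metric on $\F$.

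Next, I would observe that, conditional on the sample, the Rademacher process $f \mapsto \sum_i \sigma_i L_i(f)$ has sub-Gaussian increments with respect to this pseudo-metric. Applying Dudley's entropy integral bound and changing variable in $\epsilon$ then reduces its supremum over $\F|^r_{\xySequenceSizeN}$ to $2^\selfLipschitzExponent \selfLipschitzConstant r^\selfLipschitzExponent \sqrt{n}$ times the Dudley integral of $\F$ in the empirical max-$\ell_\infty$ metric. Finally, the Lei--Dogan--Binder--Kloft (2019) vector-contraction inequality controls the latter Dudley integral by a constant multiple of $\sqrt{n}\sqrt{\numClasses}\log^{3/2}(e\hypothesisClassBound n\numClasses)\rademacherComplexity_{n\numClasses}(\Pi\circ\F)$ plus a lower-order $O(\sqrt{n})$ additive term; dividing through by $n$ delivers the claimed bound.

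The main obstacle is this final step: translating an empirical max-$\ell_\infty$ entropy integral on a multi-output class $\F$ into the projected worst-case complexity $\rademacherComplexity_{n\numClasses}(\Pi\circ\F)$ is the genuinely multi-output workhorse of Lei et al.\ (2019), relying on a careful covering-number argument that builds $\ell_\infty$-covers from coordinatewise $\ell_2$-covers together with a union bound over the $\numClasses$ coordinates, and is responsible for both the $\log^{3/2}(e\hypothesisClassBound n\numClasses)$ factor and the relatively large absolute constant $2^9$ in the statement. The self-bounding reduction in the first step is, by contrast, relatively mild: it is essentially the observation that on $\F|^r_{\xySequenceSizeN}$ the effective $\ell_\infty$-Lipschitz constant of the loss drops from its global worst-case value to $\selfLipschitzConstant r^\selfLipschitzExponent$.
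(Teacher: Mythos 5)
Your first two steps coincide with the paper's own argument. The self-bounding condition plus the localization $\sum_i L_i(f)\le nr$ on $\F|^r_{\xySequenceSizeN}$ is used (in the paper via Jensen's inequality, in your version via H\"older with exponents $(1/(2\selfLipschitzExponent),1/(1-2\selfLipschitzExponent))$ --- the two are interchangeable here) to dominate the empirical $\ell_2$ pseudo-metric of the loss process by $2^{\selfLipschitzExponent}\selfLipschitzConstant r^{\selfLipschitzExponent}$ times the max-$\ell_\infty$ pseudo-metric on $\F$; this is exactly Lemma \ref{keyCoveringNumberBoundInProofOfLocalRadCompProp}, after noting that the max-$\ell_\infty$ metric on $\F$ over the $n$ sample points is precisely $\dist_{\wSequenceSizeNQ,\infty}$ on the scalar class $\Pi\circ\F$ over the expanded $n\numClasses$-point sequence $\wSequenceSizeNQ$. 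The subsequent Dudley chaining with the factor $\selfLipschitzConstant r^{\selfLipschitzExponent}$ pulled out by a change of scale is also the paper's route (Theorem \ref{guermeursDudleyTypeBound} with a dyadic truncation).

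The gap is in your final step. The workhorse that converts the $\ell_\infty$-entropy integral of $\Pi\circ\F$ into $\sqrt{\numClasses}\,\log^{3/2}(e\hypothesisClassBound n\numClasses)\cdot\rademacherComplexity_{n\numClasses}(\Pi\circ\F)$ is not a vector-contraction inequality of Lei et al., and the mechanism you sketch --- building $\ell_\infty$-covers from coordinatewise $\ell_2$-covers with a union bound over the $\numClasses$ coordinates --- would not deliver it: passing from $\ell_2$- to sup-over-sample $\ell_\infty$-covers in general costs a factor polynomial in the sample size, and the $\sqrt{\numClasses}$ does not come from a union bound over coordinates. What is actually needed is the $\ell_\infty$-analogue of Sudakov minoration due to Srebro, Sridharan and Tewari (Theorem \ref{srebroMinorationBound} in the paper), $\log\coveringNumber(\epsilon,\G,\dist_{\zSequenceSizeN,\infty})\le(\rademacherComplexity_m(\G))^2\cdot\frac{4m}{\epsilon^2}\log\frac{2e\hypothesisClassBound m}{\epsilon}$, applied to the \emph{scalar} class $\G=\Pi\circ\F$ on the expanded sample of size $m=n\numClasses$. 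The $\sqrt{\numClasses}$ then arises from the normalization mismatch ($n\numClasses$ points in the covering bound versus division by $n$ in the Dudley integral), and the $\log^{3/2}$ from the logarithm inside the minoration multiplied by the $O(\log(\hypothesisClassBound n\numClasses))$ dyadic scales. A vector-contraction inequality in the sense of Lei et al.\ is a statement about Rademacher complexities of Lipschitz loss classes, not about entropy integrals, so it cannot be invoked as a black box at this point; the relevant covering-number bound is, however, the same engine that drives their Theorem 6, so once you replace your final step by the Srebro--Sridharan--Tewari bound your plan closes and essentially reproduces the paper's proof.
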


The proof of Proposition \ref{localRademacherComplexityProp}, given later in this section, relies upon covering numbers.

\begin{defn}[Covering numbers]\label{coveringNumbersDef} Let $(\metricSpace,\dist)$ be a semi-metric space. Given a set $A \subseteq \metricSpace$ and an $\epsilon>0$, a subset $\tilde{A}\subseteq A$ is said to be a (proper) $\epsilon$-cover of $A$ if, for all $a \in A$, there exists some $\tilde{a} \in \tilde{A}$ with $\dist(a,\tilde{a})\leq \epsilon$. We let $\coveringNumber(\epsilon,A,\dist)$ denote the minimal cardinality of an $\epsilon$-cover for $A$.
\end{defn}

We shall consider covering numbers for two classes of data-dependent semi-metric spaces. Let $\Z$ be a measurable space and take $\G \subseteq  \measurableMaps(\Z,\R)$. For each $n \in \N$ and each sequence  $\zSequenceSizeN=(z_i)_{i \in [n]} \in \Z^n$ we define a pair of metrics $\dist_{\zSequenceSizeN,2}$ and $\dist_{\zSequenceSizeN,\infty}$ by 
\begin{align*}
\dist_{\zSequenceSizeN,2}(g_0,g_1)&:= \sqrt{\frac{1}{n}  \sum_{i \in [n]}(g_0(z_i)-g_1(z_i))^2}\\
\dist_{\zSequenceSizeN,\infty}(g_0,g_1)&:=\max_{i \in [n]}\{|g_0(z_i)-g_1(z_i)|\},
\end{align*}
where $g_0,g_1\in \G$. The first stage of the proof of Proposition \ref{localRademacherComplexityProp} will be using the following lemma which bounds the covering number of $\lossFunction\circ\F|^r_{\xySequenceSizeN} $ in terms of an associated covering number for $\Pi(\F)$.
\begin{lemma}\label{keyCoveringNumberBoundInProofOfLocalRadCompProp}  Suppose that $\F \subseteq \measurableMaps(\X,\R^{\numClasses})$ and $\lossFunction$ is $(\selfLipschitzConstant,\selfLipschitzExponent)$-{self-bounding Lipschitz} with $\selfLipschitzExponent \in [0,1/2]$. Take $\lossFunction:\actionSpace \times \Y \rightarrow [0,\lossFunctionBound]$, $\xySequenceSizeN = \{(x_i,y_i)\}_{i \in [n]} \in (\X \times \Y)^n$, $r>0$ and define $\wSequenceSizeNQ = \{(x_i,j)\}_{(i,j) \in [n]\times [\numClasses]} \in (\X\times [\numClasses])^{n \numClasses}$. Given any $f_0, f_1 \in \F|_{\xySequenceSizeN}^r$,
\[\dist_{\xySequenceSizeN,2}(\lossFunction \circ f_0, \lossFunction \circ {f}_1) \leq  2^{\selfLipschitzExponent} \selfLipschitzConstant r^{\selfLipschitzExponent} \cdot \dist_{\wSequenceSizeNQ,\infty}(\Pi \circ f_0, \Pi \circ f_1).\]

Moreover, for any $\epsilon>0$,
$\coveringNumber\left(2^{1+\selfLipschitzExponent} \selfLipschitzConstant r^{\selfLipschitzExponent} \cdot \epsilon,\lossFunction\circ\F|^r_{\xySequenceSizeN} ,\dist_{\xySequenceSizeN,2}\right) \leq \coveringNumber\left( \epsilon,\Pi \circ \F,\dist_{\wSequenceSizeNQ,\infty}\right)$.
\end{lemma}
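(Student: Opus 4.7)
The plan is to establish the pointwise $\dist_{\xySequenceSizeN,2}$-vs-$\dist_{\wSequenceSizeNQ,\infty}$ inequality first, then upgrade it to the covering-number statement via a standard doubling trick. I would begin by applying the $(\selfLipschitzConstant,\selfLipschitzExponent)$-self-bounding Lipschitz condition coordinate-wise at each $(x_i,y_i)$ to obtain $(\lossFunction(f_0(x_i),y_i) - \lossFunction(f_1(x_i),y_i))^2 \leq \selfLipschitzConstant^2 M_i^{2\selfLipschitzExponent} \|f_0(x_i) - f_1(x_i)\|_\infty^2$, writing $M_i := \max\{\lossFunction(f_0(x_i),y_i),\lossFunction(f_1(x_i),y_i)\}$. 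The sup-norm term satisfies $\|f_0(x_i) - f_1(x_i)\|_\infty = \max_{j \in [\numClasses]} |(\Pi \circ f_0)(x_i,j) - (\Pi \circ f_1)(x_i,j)| \leq \dist_{\wSequenceSizeNQ,\infty}(\Pi \circ f_0, \Pi \circ f_1)$ (the right-hand side is a max over \emph{all} $(i,j)$), so this factor pulls straight out of the averaged sum and the remaining task is to control $\frac{1}{n}\sum_i M_i^{2\selfLipschitzExponent}$.

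For this residual quantity, the assumption $\selfLipschitzExponent \in [0,1/2]$ forces $2\selfLipschitzExponent \in [0,1]$, so $x \mapsto x^{2\selfLipschitzExponent}$ is concave on $[0,\infty)$ and Jensen's inequality gives $\frac{1}{n}\sum_i M_i^{2\selfLipschitzExponent} \leq \bigl(\frac{1}{n}\sum_i M_i\bigr)^{2\selfLipschitzExponent}$. The bound $M_i \leq \lossFunction(f_0(x_i),y_i) + \lossFunction(f_1(x_i),y_i)$ combined with $f_0,f_1 \in \F|^r_{\xySequenceSizeN}$ (which forces each empirical risk to be at most $r$) shows that the inner average is bounded by $2r$, and hence the residual is at most $(2r)^{2\selfLipschitzExponent}$. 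Substituting back into the squared distance and taking a square root delivers the first inequality with exactly the constant $2^{\selfLipschitzExponent}\selfLipschitzConstant r^{\selfLipschitzExponent}$.

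For the covering-number bound, I would take a proper $\epsilon$-cover $\{\Pi \circ \tilde g_k\}_k$ of $\Pi \circ \F$ in $\dist_{\wSequenceSizeNQ,\infty}$, with each $\tilde g_k \in \F$, and then promote it to a cover of the restricted class. For each index $k$ such that there exists $f \in \F|^r_{\xySequenceSizeN}$ with $\dist_{\wSequenceSizeNQ,\infty}(\Pi \circ f, \Pi \circ \tilde g_k) \leq \epsilon$, pick any such $f$ and call it $f^\star_k$; discard the other indices. For any $f \in \F|^r_{\xySequenceSizeN}$, some $\Pi \circ \tilde g_k$ covers $\Pi \circ f$, which in particular witnesses the existence of $f^\star_k$, and the triangle inequality yields $\dist_{\wSequenceSizeNQ,\infty}(\Pi \circ f, \Pi \circ f^\star_k) \leq 2\epsilon$. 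Since both $f$ and $f^\star_k$ now lie in $\F|^r_{\xySequenceSizeN}$, the first inequality applies and gives $\dist_{\xySequenceSizeN,2}(\lossFunction \circ f, \lossFunction \circ f^\star_k) \leq 2^{1+\selfLipschitzExponent}\selfLipschitzConstant r^{\selfLipschitzExponent}\epsilon$, so $\{\lossFunction \circ f^\star_k\}_k$ is a proper cover of $\lossFunction \circ \F|^r_{\xySequenceSizeN}$ of the claimed radius, of cardinality at most that of the original cover.

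The main subtle point is precisely the factor of two introduced by this doubling detour: the inequality from the first two steps requires both arguments to lie in $\F|^r_{\xySequenceSizeN}$ (otherwise the $(2r)^{2\selfLipschitzExponent}$ control of the $\max$ term breaks down), whereas the hypothesised cover only provides elements of $\F$. The triangle-inequality route through a witness $f^\star_k$ is therefore unavoidable, and is exactly what upgrades the $2^{\selfLipschitzExponent}$ constant of the pointwise inequality to the $2^{1+\selfLipschitzExponent}$ constant of the covering-number bound.
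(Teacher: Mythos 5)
Your argument is correct and follows the paper's proof essentially verbatim: the first inequality via the coordinate-wise self-bounding Lipschitz bound, Jensen's inequality with the concavity of $z \mapsto z^{2\selfLipschitzExponent}$, and the bound $\max\{\empiricalRisk(f_0),\empiricalRisk(f_1)\}\leq r$; and the covering-number statement via the same witness/triangle-inequality argument that passes from a proper $\epsilon$-cover of $\Pi\circ\F$ to a proper $2\epsilon$-cover of $\Pi\circ\F|^r_{\xySequenceSizeN}$ (the factor of $2$ the paper attributes to using proper covers). No gaps.
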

\begin{proof}[Proof of Lemma \ref{keyCoveringNumberBoundInProofOfLocalRadCompProp}] To prove the first part of the lemma we take $f_0, f_1 \in \F|_{\xySequenceSizeN}^r$ and let $\zeta = \dist_{\wSequenceSizeNQ,\infty}(\Pi \circ f_0, \Pi \circ f_1)$. It follows from the construction of $\wSequenceSizeNQ$ that $|\pi_j(f_0(x_i))-\pi_j({f}_1(x_i))| \leq  \zeta$ for each $(i,j) \in [n]\times [\numClasses]$, so $\|f_0(x_i)-{f}_1(x_i)\|_{\infty} \leq \zeta$ for each $i \in [n]$. 

Furthermore, by the self-bounding Lipschitz condition we deduce that for each $i \in [n]$,
{{\begin{align*}
 |\lossFunction(f_0(x_i),y_i)-\lossFunction({f}_1(x_i),y_i)|& \leq \selfLipschitzConstant \cdot \max\left\lbrace \lossFunction(f_0(x_i),y_i),\lossFunction({f}_1(x_i),y_i)\right\rbrace^{\selfLipschitzExponent}\cdot \|f_0(x_i)-{f}_1(x_i)\|_{\infty}\\
& \leq  \selfLipschitzConstant \cdot \max\left\lbrace \lossFunction(f_0(x_i),y_i),\lossFunction({f}_1(x_i),y_i)\right\rbrace^{\selfLipschitzExponent} \cdot \zeta.
\end{align*}}}
Hence, by Jensen's inequality we have
{{
\begin{align*}
\dist_{\xySequenceSizeN,2}(\lossFunction \circ f_0, \lossFunction \circ {f}_1)^2 
&= \frac{1}{n}\sum_{i \in [n]}\left(\lossFunction(f_0(x_i),y_i)-\lossFunction({f}_1(x_i),y_i)\right)^2\\
& \leq (\selfLipschitzConstant \zeta)^{2} \cdot \frac{1}{n}\sum_{i \in [n]}\max\left\lbrace \lossFunction(f_0(x_i),y_i),\lossFunction({f}_1(x_i),y_i)\right\rbrace^{2\selfLipschitzExponent}\\
& \leq (\selfLipschitzConstant \zeta)^{2} \cdot \left(\frac{1}{n}\sum_{i \in [n]}\max\left\lbrace \lossFunction(f_0(x_i),y_i),\lossFunction({f}_1(x_i),y_i)\right\rbrace\right)^{2\selfLipschitzExponent}\\
& \leq (\selfLipschitzConstant \zeta)^{2} \cdot \left( \empiricalRisk(f_0,\xySequenceSizeN) +\empiricalRisk({f}_1,\xySequenceSizeN) \right)^{2\selfLipschitzExponent} \leq  (
\selfLipschitzConstant \zeta)^{2}  \cdot (2r)^{2\selfLipschitzExponent},
\end{align*}}}
where we use the fact that  $\selfLipschitzExponent \in [0,1/2]$ and $\max\{ \empiricalRisk(f_0,\zSequenceSizeN) ,\empiricalRisk({f}_1,\zSequenceSizeN)   \}\leq r$. Thus, 
\begin{align*}
\dist_{\xySequenceSizeN,2}(\lossFunction \circ f_0, \lossFunction \circ {f}_1) &\leq  2^{\selfLipschitzExponent} \selfLipschitzConstant r^{\selfLipschitzExponent} \cdot \zeta=  2^{\selfLipschitzExponent} \selfLipschitzConstant r^{\selfLipschitzExponent} \cdot \dist_{\wSequenceSizeNQ,\infty}(\Pi \circ f_0, \Pi \circ f_1).
\end{align*}
This completes the proof of the first part of the lemma.

To prove the second part of the lemma we note that since $\Pi\circ \F|^r_{\xySequenceSizeN} \subseteq \Pi \circ \F$ we have\footnote{The factor of $2$ is required as we are using \emph{proper} covers, which are subsets of the set being covered (see Definition \ref{coveringNumbersDef}).} 
\begin{align*}
  \coveringNumber\left( 2\epsilon,\Pi \circ \F|^r_{\xySequenceSizeN},\dist_{\wSequenceSizeNQ,\infty}\right) \leq  \coveringNumber\left( \epsilon,\Pi \circ \F,\dist_{\wSequenceSizeNQ,\infty}\right),  
\end{align*}
so we may choose $f_1,\cdots,f_m \in  \F|^r_{\xySequenceSizeN}$ with $m\leq \coveringNumber\left( \epsilon,\Pi \circ \F,\dist_{\wSequenceSizeNQ,\infty}\right)$ such that $\Pi \circ f_1,\cdots, \Pi \circ f_m$ forms a $2\epsilon$-cover of $\Pi \circ \F|^r_{\xySequenceSizeN}$ with respect to the $\dist_{\wSequenceSizeNQ,\infty}$ metric. 

To complete the proof it suffices to show that $\lossFunction \circ f_1,\cdots, \lossFunction \circ f_m$ is a $2^{1+\selfLipschitzExponent} \selfLipschitzConstant r^{\selfLipschitzExponent} \cdot \epsilon$-cover of $\lossFunction\circ\F|^r_{\xySequenceSizeN}$ with respect to the $\dist_{\xySequenceSizeN,2}$ metric. 

Take any $\tilde{g} \in \lossFunction\circ\F|^r_{\xySequenceSizeN}$, so $\tilde{g}= \lossFunction \circ \tilde{f}$ for some $\tilde{f} \in \F|^r_{\xySequenceSizeN}$. Since $\Pi \circ f_1,\cdots, \Pi \circ f_m$ forms a $2\epsilon$-cover of $\Pi \circ \F|^r_{\xySequenceSizeN}$ we may choose $l \in [m]$ so that $\dist_{\wSequenceSizeNQ,\infty}(\Pi \circ f_l, \Pi \circ \tilde{f}) \leq 2\epsilon$. By the first part of the lemma we deduce that 
\begin{align*}
 \dist_{\xySequenceSizeN,2}(\lossFunction \circ f_l,  \tilde{g})=\dist_{\xySequenceSizeN,2}(\lossFunction \circ f_l, \lossFunction \circ \tilde{f}) \leq  2^{1+\selfLipschitzExponent} \selfLipschitzConstant r^{\selfLipschitzExponent} \cdot \epsilon   
\end{align*}
%$$.
Since this holds for all $\tilde{g} \in \lossFunction\circ\F|^r_{\xySequenceSizeN}$, we see that $\lossFunction \circ f_1,\cdots, \lossFunction \circ f_m$ is a $2^{1+\selfLipschitzExponent} \selfLipschitzConstant r^{\selfLipschitzExponent} \cdot \epsilon$-cover of $\lossFunction\circ\F|^r_{\xySequenceSizeN}$, which completes the proof of the lemma.
\end{proof}

To prove Proposition \ref{localRademacherComplexityProp}, we shall also utilise two technical results to move from covering numbers to Rademacher complexity and back. First, we shall use the following powerful result from \cite{srebro2010smoothness} which gives an upper bound for \emph{worst-case} covering numbers in terms of the \emph{worst-case} Rademacher complexity.
\begin{theorem}[\cite{srebro2010smoothness}]\label{srebroMinorationBound} Given a measurable space $\Z$ and a function class $\G \subseteq \measurableMaps(\Z,[-\hypothesisClassBound,\hypothesisClassBound])$, any $\epsilon> 2\cdot \rademacherComplexity_n(\G)$ and any $\zSequenceSizeN \in \Z^n$,
\begin{align*}
\log \coveringNumber(\epsilon,\G,\dist_{\zSequenceSizeN,\infty}) \leq  \left(\rademacherComplexity_n(\G)\right)^2 \cdot \frac{4n}{\epsilon^{2}}\cdot \log \frac{2e\hypothesisClassBound n}{\epsilon}.
\end{align*}
\end{theorem}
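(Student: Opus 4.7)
The plan is to derive the Srebro--Sridharan--Tewari minoration inequality via a two-step reduction through the scale-sensitive fat-shattering dimension $\text{fat}_\alpha(\G)$, defined as the largest $m$ for which there exist points $z_1', \ldots, z_m' \in \Z$ and witnesses $s_1, \ldots, s_m \in \R$ such that for every sign pattern $\sigma \in \{-1,+1\}^m$ some $g \in \G$ satisfies $\sigma_i(g(z_i') - s_i) \geq \alpha/2$ for all $i \in [m]$. The two ingredients I would combine are a Mendelson--Vershynin bound of the $\ell_\infty$ covering number by the fat-shattering dimension, and a quadratic upper bound on the fat-shattering dimension in terms of the worst-case Rademacher complexity.

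For the first ingredient I would invoke the Mendelson--Vershynin inequality
$$\log \coveringNumber(\epsilon, \G, \dist_{\zSequenceSizeN, \infty}) \leq c_1 \cdot \text{fat}_{c_2 \epsilon}(\G) \cdot \log \frac{2e \hypothesisClassBound n}{\epsilon},$$
valid for absolute constants $c_1, c_2 > 0$ and any $\zSequenceSizeN \in \Z^n$. Its proof is a Sauer--Shelah-style extraction: one discretizes each function in $\G$ to the $c_2\epsilon$-grid on $[-\hypothesisClassBound, \hypothesisClassBound]$ along the coordinates $z_1, \ldots, z_n$, selects a maximal $\epsilon$-packing of the discretized class, and iteratively extracts a coordinate subset of cardinality $\text{fat}_{c_2\epsilon}(\G)$ on which the packing is fully shattered.

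For the second and more substantive ingredient I would establish
$$\text{fat}_\alpha(\G) \leq \frac{C \cdot n \cdot \rademacherComplexity_n(\G)^2}{\alpha^2},$$
under the assumption $m := \text{fat}_\alpha(\G) \leq n$, which itself follows from the hypothesis $\epsilon > 2\rademacherComplexity_n(\G)$: otherwise a length-$n$ shattered sub-sample would immediately give $\rademacherComplexity_n(\G) \geq \alpha/2$, contradicting the hypothesis for suitable $c_2$. Given $m \leq n$, I would construct the length-$n$ sample $\zSequenceSizeN$ by repeating each shattered point $\lfloor n/m \rfloor$ times. For any Rademacher vector $\sigma \in \{-1,+1\}^n$, set $T_j := \sum_{i: z_i = z_j'} \sigma_i$ and $\tau_j := \text{sign}(T_j)$; by shattering there exists $g_\tau \in \G$ with $\tau_j(g_\tau(z_j') - s_j) \geq \alpha/2$, hence
$$n \cdot \hat{\rademacherComplexity}_{\zSequenceSizeN}(\G) \;\geq\; \E_\sigma \sum_j g_\tau(z_j') T_j \;\geq\; \E_\sigma \sum_j \Bigl( s_j \tau_j + \frac{\alpha}{2} \Bigr) |T_j| \;=\; \frac{\alpha}{2} \sum_j \E|T_j|,$$
where the witness terms vanish because $\E[\sigma_i] = 0$. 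Khintchine's inequality supplies $\E|T_j| \geq c_3 \sqrt{n/m}$, so $n \rademacherComplexity_n(\G) \geq (c_3 \alpha / 2)\sqrt{nm}$, which rearranges to the required quadratic bound.

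Chaining the two ingredients with $\alpha = c_2 \epsilon$ and absorbing all absolute constants into the $\log$ factor recovers the stated inequality. The principal obstacle is the second ingredient: a naive embedding of the $\alpha$-shattered set into an $n$-tuple only yields the linear lower bound $\rademacherComplexity_n(\G) \geq \alpha/2$, not the required $\rademacherComplexity_n(\G) \gtrsim \alpha \sqrt{m/n}$. The quadratic sharpening hinges on the repetition-plus-Khintchine trick, which exploits the $\sqrt{\cdot}$-concentration of the Rademacher sub-sums $T_j$ to amplify the shattering-based lower bound by the crucial factor of $\sqrt{m/n}$; verifying that the resulting constants line up to give the explicit prefactor $4$ in the theorem is an additional but purely arithmetic bookkeeping task.
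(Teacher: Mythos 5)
This statement is not proved in the paper at all: it is imported verbatim from \cite{srebro2010smoothness}, so the only meaningful comparison is with the original argument there --- and your two-step route (bound the $\ell_\infty$ covering number by the fat-shattering dimension, then bound the fat-shattering dimension quadratically by the worst-case Rademacher complexity) is exactly that argument in outline. Your central computation is sound: the identity $s_j\tau_j|T_j| = s_jT_j$ makes the witness terms vanish in expectation, and Khintchine on the repeated blocks gives $\rademacherComplexity_n(\G)\gtrsim \alpha\sqrt{m/n}$ rather than the naive $\alpha/2$, which is indeed the whole point.

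Three places need tightening before this yields the theorem as literally stated. First, your dismissal of the case $\text{fat}_{c_2\epsilon}(\G) > n$ only works if $c_2\epsilon > 2\rademacherComplexity_n(\G)$; the hypothesis gives $\epsilon > 2\rademacherComplexity_n(\G)$, so the argument is internally consistent only when $c_2\ge 1$, i.e.\ only if your first ingredient holds with fat-shattering at scale (essentially) $\epsilon$ itself. Second, that is also what the constants force: the classical Alon--Ben-David--Cesa-Bianchi--Haussler bound controls $\coveringNumber(\epsilon,\G,\dist_{\zSequenceSizeN,\infty})$ via $\text{fat}_{\epsilon/4}$ and carries an extra logarithmic factor in the exponent, so to reach the clean bound $\text{fat}_{\epsilon}(\G)\cdot\log\frac{2e\hypothesisClassBound n}{\epsilon}$ you must invoke the refined Mendelson--Vershynin/Rudelson--Vershynin-type version (which is what Srebro et al.\ rely on), not a generic ``$c_1,c_2$'' form; multiplicative constants in front of $\text{fat}$ cannot simply be absorbed into the logarithm. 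Third, the explicit prefactor $4$ is not what your sketch produces: the floor in $\lfloor n/m\rfloor$ and the Khintchine constant $1/\sqrt 2$ give something closer to $\text{fat}_\alpha(\G)\le 16\,n\rademacherComplexity_n(\G)^2/\alpha^2$, so recovering $4$ requires a more careful (or different) accounting. None of this matters for how the theorem is used in this paper, where it enters bounds with large numerical constants such as $2^9$, but as a proof of the stated inequality your write-up is correct up to unspecified absolute constants rather than verbatim.
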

 We can view this result as an analogue of Sudakov's minoration inequality for $\ell_{\infty}$ covers, rather than $\ell_2$ covers.

Secondly, we shall use Dudley's inequality \cite{dudley1967sizes} which allows us to bound Rademacher complexities in terms of covering numbers. We shall use the following variant due to \cite{guermeur2017lp} as it yields more favourable constants.

\begin{theorem}[\cite{guermeur2017lp}]\label{guermeursDudleyTypeBound} Suppose we have  a measurable space $\Z$, a function class $\G \subseteq \measurableMaps(\Z,\R)$ and a sequence $\zSequenceSizeN \in \Z^n$. For any decreasing sequence $(\epsilon_k)_{k=0}^{\infty}$ with $\underset{ k\rightarrow \infty}{\lim}\epsilon_k=0$ with $\epsilon_0\geq \sup_{g_0,g_1 \in \G}\dist_{\xySequenceSizeN,2}(g_0,g_1)$, the following inequality holds for all $K \in \N$,
{
\begin{align*}
\hat{\rademacherComplexity}_{\zSequenceSizeN}(\G) \leq 2 \cdot \sum_{k=1}^K(\epsilon_k+\epsilon_{k-1}) \cdot \sqrt{\frac{\log \coveringNumber(\epsilon_k,\G,\dist_{\xySequenceSizeN,2})}{n}}+\epsilon_K.
\end{align*}}
\end{theorem}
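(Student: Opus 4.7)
The plan is to run classical Dudley chaining and close the chain with Massart's finite-class lemma, tracking constants carefully so that the factor of $2$ in the stated bound comes out cleanly. Let me fix the setup first. For each $k \in \{0,1,\ldots,K\}$ I would pick a minimal proper $\epsilon_k$-cover $\G_k \subseteq \G$ in the metric $\dist_{\zSequenceSizeN,2}$, so $|\G_k| = \coveringNumber(\epsilon_k,\G,\dist_{\zSequenceSizeN,2})$, and for each $g \in \G$ let $\pi_k(g) \in \G_k$ be a nearest element, i.e.\ $\dist_{\zSequenceSizeN,2}(g,\pi_k(g)) \le \epsilon_k$. Because $\epsilon_0 \ge \sup_{g_0,g_1}\dist_{\zSequenceSizeN,2}(g_0,g_1)$, we may take $\G_0 = \{g_\star\}$ a single element, so $\pi_0(g) = g_\star$ for every $g$. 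Measurability is handled by the definition of $\hat\rademacherComplexity_{\zSequenceSizeN}(\G)$ as a supremum over finite $\tilde\G \subseteq \G$: it suffices to prove the bound for each such finite $\tilde\G$ using covers of $\tilde\G$, whose cardinalities are dominated by the corresponding covers of $\G$, and then take the supremum.

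Next, I would use the chaining telescope
\begin{align*}
g \;=\; g_\star \;+\; \sum_{k=1}^K \bigl(\pi_k(g) - \pi_{k-1}(g)\bigr) \;+\; \bigl(g - \pi_K(g)\bigr),
\end{align*}
insert it into $\frac{1}{n}\sum_i \sigma_i g(z_i)$, and take $\sup_{g}$ followed by $\E_\sigma$. The singleton term contributes zero after moving the sup inside, since it does not depend on $g$ and Massart applied to a class of size one yields zero. The tail term is controlled pointwise in $\sigma$ by Cauchy--Schwarz:
\begin{align*}
\left|\tfrac{1}{n}\sum_i \sigma_i\bigl(g(z_i) - \pi_K(g)(z_i)\bigr)\right|
\;\le\; \sqrt{\tfrac{1}{n}\sum_i \bigl(g(z_i)-\pi_K(g)(z_i)\bigr)^2}
\;=\; \dist_{\zSequenceSizeN,2}(g,\pi_K(g)) \;\le\; \epsilon_K,
\end{align*}
uniformly in $g$, contributing the additive $\epsilon_K$.

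For the chaining increments at level $k$, triangle inequality yields $\dist_{\zSequenceSizeN,2}(\pi_k(g),\pi_{k-1}(g)) \le \epsilon_k + \epsilon_{k-1}$, and as $g$ ranges over $\G$ the pair $(\pi_k(g),\pi_{k-1}(g))$ takes at most $|\G_k|\cdot|\G_{k-1}| \le |\G_k|^2$ values. Applying Massart's lemma to this finite set of increments, with empirical $L_2$-radius $\epsilon_k+\epsilon_{k-1}$, gives
\begin{align*}
\E_\sigma \sup_{g} \tfrac{1}{n}\sum_i \sigma_i\bigl(\pi_k(g)(z_i)-\pi_{k-1}(g)(z_i)\bigr)
\;\le\; (\epsilon_k+\epsilon_{k-1})\,\sqrt{\tfrac{2\log|\G_k|^2}{n}}
\;=\; 2(\epsilon_k+\epsilon_{k-1})\sqrt{\tfrac{\log |\G_k|}{n}}.
\end{align*}
Pushing the $\sup_g$ inside the sum over $k$ (it only increases the bound), summing $k=1,\ldots,K$, and adding the tail gives exactly the stated inequality.

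There is no genuine obstacle here---the argument is essentially routine once the setup is fixed. The only subtleties worth watching are (i) using \emph{proper} covers so that $\pi_k$ maps into $\G$ itself (consistent with Definition \ref{coveringNumbersDef}), (ii) matching the stated constant $2$ by bounding the number of increments at level $k$ by $|\G_k|^2$ rather than $|\G_k|\cdot|\G_{k-1}|$, which replaces $\sqrt{2\log|\G_k|^2/n} = 2\sqrt{\log|\G_k|/n}$, and (iii) the finite-subclass step mentioned above, which sidesteps measurability concerns.
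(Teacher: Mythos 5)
Your proof is correct. Note that the paper does not prove this statement at all: it is imported verbatim from \cite{guermeur2017lp}, cited precisely because Guermeur's version of Dudley's chaining bound has favourable constants. Your argument is the standard chaining proof and it does recover the stated constants: the telescoping decomposition through projections onto minimal proper covers, the vanishing of the root term, the pointwise Cauchy--Schwarz bound $\epsilon_K$ on the tail, and Massart's lemma applied at each level to at most $|\G_k|\cdot|\G_{k-1}|\le|\G_k|^2$ increment vectors of empirical $\ell_2$-radius $\epsilon_k+\epsilon_{k-1}$ (using monotonicity of covering numbers in the scale), giving $2(\epsilon_k+\epsilon_{k-1})\sqrt{\log\coveringNumber(\epsilon_k,\G,\dist_{\zSequenceSizeN,2})/n}$ per level --- exactly the claimed bound after summing. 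One small caveat in your measurability remark: minimal \emph{proper} cover cardinalities are not monotone under passing to a subset (a proper cover of $\tilde\G$ must lie inside $\tilde\G$, and can be strictly larger than a minimal proper cover of $\G$), so the phrase ``using covers of $\tilde\G$, whose cardinalities are dominated by the corresponding covers of $\G$'' is not literally correct. The fix is what your main argument already does: for each finite $\tilde\G\subseteq\G$, project its elements onto the fixed proper covers $\G_k$ of the full class $\G$; nothing in the chaining or in Massart's lemma requires the cover points to belong to $\tilde\G$, and the resulting bound, uniform over finite subclasses, passes to the supremum defining $\hat{\rademacherComplexity}_{\zSequenceSizeN}(\G)$.
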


We are now ready to complete the proof of our local Rademacher complexity inequality.
\begin{proof}[Proof of Proposition \ref{localRademacherComplexityProp}] Take $\xySequenceSizeN = \{(x_i,y_i)\}_{i \in [n]} \in (\X \times \Y)^n$ and $r>0$ and define $\wSequenceSizeNQ = \{(x_i,j)\}_{(i,j) \in [n]\times [\numClasses]} \in (\X\times [\numClasses])^{n \numClasses}$. By Lemma \ref{keyCoveringNumberBoundInProofOfLocalRadCompProp} combined with Theorem \ref{srebroMinorationBound} applied to $\Pi \circ \F$ we see that for each $\xi> 2\cdot \rademacherComplexity_{n\numClasses}(\Pi\circ \F)$ we have
\begin{align}\label{coveringNumIneqProofOfPropLocalRadIneq}
\log \coveringNumber\left(2^{1+\selfLipschitzExponent} \selfLipschitzConstant r^{\selfLipschitzExponent} \cdot \xi,\lossFunction\circ\F|^r_{\xySequenceSizeN} ,\dist_{\xySequenceSizeN,2}\right)\nonumber & \leq 
\log \coveringNumber(\xi,\Pi \circ \F,\dist_{\wSequenceSizeNQ,\infty}) \nonumber\\ &\leq  \left(\rademacherComplexity_{n\numClasses}(\Pi \circ \F)\right)^2 \cdot \frac{4n\numClasses}{\xi^{2}}\cdot \log \frac{2e\hypothesisClassBound n\numClasses}{\xi}.
\end{align}
Moreover, given any $g_0 = \lossFunction \circ f_0$, $g_1 = \lossFunction \circ f_1 \in \lossFunction\circ\F|^r_{\xySequenceSizeN}$, so $\dist_{\wSequenceSizeNQ,\infty}(\Pi \circ f_0, \Pi \circ f_1)\leq 2\hypothesisClassBound$, so by the first part of Lemma \ref{keyCoveringNumberBoundInProofOfLocalRadCompProp} we have $\dist_{\xySequenceSizeN,2}(g_0,g_1)  \leq 2^{1+\selfLipschitzExponent} \selfLipschitzConstant r^{\selfLipschitzExponent} \cdot \hypothesisClassBound$.

Now construct $(\epsilon_k)_{k =0}^{\infty}$ by $\epsilon_k = 2^{1+\selfLipschitzExponent} \selfLipschitzConstant r^{\selfLipschitzExponent} \cdot \hypothesisClassBound \cdot 2^{-k}$  and choose 
\begin{align*}
K = \lceil \log_2\left(\hypothesisClassBound\cdot  \min\{(2\cdot \rademacherComplexity_{n\numClasses}(\Pi \circ \F))^{-1}, (8\sqrt{n})\}\right)  \rceil -1    
\end{align*} 
  Hence, $\sup_{g_0,g_1 \in \Pi \circ \F|^r_{\xySequenceSizeN}} \dist_{\zSequenceSizeN,2}(g_0,g_1) \leq \epsilon_0$ and $\hypothesisClassBound \cdot 2^{-K-1} \leq \max\{2 \cdot \rademacherComplexity_{n\numClasses}(\Pi \circ \F),(8\sqrt{n})^{-1}\} < \hypothesisClassBound \cdot 2^{-K}$. 
  
  Furthermore, for $k \leq K$ by letting $\xi_k = \hypothesisClassBound \cdot 2^{-k}$, we have $\epsilon_k = 2^{1+\selfLipschitzExponent} \selfLipschitzConstant r^{\selfLipschitzExponent} \cdot \xi_k$ and $\xi_k > \max\{2 \cdot \rademacherComplexity_{n\numClasses}(\Pi \circ \F), (8\sqrt{n})^{-1}\}$, so by eq. (\ref{coveringNumIneqProofOfPropLocalRadIneq}) 
\begin{align*}
\log \coveringNumber\left( \epsilon_k,\lossFunction\circ\F|^r_{\xySequenceSizeN} ,\dist_{\xySequenceSizeN,2}\right)\nonumber 
&\leq  \left(\rademacherComplexity_{n\numClasses}(\Pi \circ \F)\right)^2 \cdot \frac{4n\numClasses}{\xi_k^{2}}\cdot \log \frac{2e\hypothesisClassBound n\numClasses}{\xi_k}\\
&\leq \left( 2^{1+\selfLipschitzExponent} \selfLipschitzConstant r^{\selfLipschitzExponent}  \cdot  \rademacherComplexity_{n\numClasses}(\Pi \circ \F)\right)^2 \cdot \frac{4n\numClasses}{\epsilon_k^{2}}\cdot \log \left(e\hypothesisClassBound (n\numClasses)^{3/2}\right)\\
&\leq \left( 2^{1+\selfLipschitzExponent} \selfLipschitzConstant r^{\selfLipschitzExponent}  \cdot  \rademacherComplexity_{n\numClasses}(\Pi \circ \F)\right)^2 \cdot \frac{6n\numClasses}{\epsilon_k^{2}}\cdot \log \left(e\hypothesisClassBound n\numClasses\right).
\end{align*}
Note also that by construction $K \leq 4\log(e\hypothesisClassBound n\numClasses)$. 

By Theorem \ref{guermeursDudleyTypeBound} and $\epsilon_{k-1}=2\cdot \epsilon_k$ we deduce that 
\begin{align*}
\hat{\rademacherComplexity}_{\xySequenceSizeN}(\lossFunction \circ \F|_{\xySequenceSizeN}^r)
&\leq 2 \cdot \sum_{k=1}^K(\epsilon_k+\epsilon_{k-1}) \cdot \sqrt{\frac{\log \coveringNumber(\epsilon_k,\lossFunction\circ\F|^r_{\xySequenceSizeN},\dist_{\xySequenceSizeN,2})}{n}}+\epsilon_K\\
& \leq 6 \sum_{k=1}^K \epsilon_k \cdot \sqrt{\frac{\log \coveringNumber(\epsilon_k,\lossFunction\circ\F|^r_{\xySequenceSizeN},\dist_{\xySequenceSizeN,2})}{n}}+\epsilon_K\\
& \leq 6K \cdot \left( 2^{1+\selfLipschitzExponent} \selfLipschitzConstant r^{\selfLipschitzExponent}  \cdot  \rademacherComplexity_{n\numClasses}(\Pi \circ \F)\right) \cdot \sqrt{ 6\numClasses \cdot \log \left(e\hypothesisClassBound n\numClasses\right)}+\epsilon_K\\
& \leq 2^8 \sqrt{\numClasses} \cdot \left(\selfLipschitzConstant r^{\selfLipschitzExponent}  \cdot  \rademacherComplexity_{n\numClasses}(\Pi \circ \F)\right) \cdot \log^{3/2} \left(e\hypothesisClassBound n\numClasses\right)+\epsilon_K\\
& \leq \selfLipschitzConstant r^{\selfLipschitzExponent}  \left( 2^9 \sqrt{\numClasses} \cdot  \log^{3/2} \left(e\hypothesisClassBound n\numClasses\right)\cdot \rademacherComplexity_{n\numClasses}(\Pi \circ \F) +n^{-1/2}\right).
\end{align*}
This completes the proof of the proposition.
\end{proof}

\section{Minimax optimality}\label{minimaxOptimalitySec}

\newcommand{\minimaxRiskReliazableSelfBoundingLipschitz}{\mathfrak{M}(\selfLipschitzConstant,\selfLipschitzExponent,n,\numClasses,\kappa)}

In this section we investigate the optimality of our generalization guarantees. Theorem \ref{optimisticRademacherBoundForMultioutputPrediction} gives a rate of order $O(n^{-\frac{1}{2(1-\selfLipschitzExponent)}})$ when $\rademacherComplexity_{n \numClasses}(\Pi \circ \F)=O(n^{-\frac{1}{2}})$ and $\risk(f^*)=0$. It is natural ask whether this rate can be improved upon. Moreover, we have good reason to be suspicious since in the parametric case, where the covering numbers of $\F$ grow logarithmically with $\epsilon$ (eg. function classes of finite psuedo-dimension), one can obtain rates of order $O(n^{-1})$, even when the loss function is Lipschitz ($\selfLipschitzExponent=0$) \cite{bartlett2005local, lei2016local}. Hence, Theorem \ref{optimisticRademacherBoundForMultioutputPrediction} is sub-optimal for parametric function classes. However, it turns out that Theorem \ref{optimisticRademacherBoundForMultioutputPrediction} is \emph{minimax optimal} in the non-parametric setting, as we shall now show.

Throughout this section we shall focus on \emph{multi-target regression} problems on an infinite dimensional space. More precisely, throughout this section we take $\X $ to be an arbitrary infinite space (eg. $\X = \N$) and take $\actionSpace= \Y = [-1,1]^{\numClasses}$ for some $\numClasses \in \N$.

\begin{defn}[Realizable problems]\label{realizableProblems} Given a loss function $\lossFunction: \actionSpace \times \Y \rightarrow [0,\infty)$ and a function class $\F \subseteq \measurableMaps(\X,\actionSpace)$, a probability distribution $\probDistribution$ on $\X\times \Y$ is said to be a $(\lossFunction,\F)$-\emph{realizable problem} if there exists some $f^* \in \F$ satisfying $\risk(f^*,\probDistribution) = 0$.
\end{defn}

In this section we study the minimax risk over the class of realizable problems. 

\begin{defn}[Maximal minimax risk]\label{minimaxRiskRealizableProblems} Given $n, \numClasses \in \N$, $\kappa,\selfLipschitzConstant,\selfLipschitzExponent>0$,
\begin{align*}
\minimaxRiskReliazableSelfBoundingLipschitz:=\sup_{\lossFunction,\F}\left\lbrace \inf_{\hat{\phi}}\left\lbrace \sup_{\probDistribution}\left\lbrace \E_{\sample}\left[ \risk(\hat{\phi}_{\sample},\probDistribution)   \right] \right\rbrace \right\rbrace\right\rbrace,
\end{align*}
where the first supremum ranges over all $(\selfLipschitzConstant,\selfLipschitzExponent)$-self-bounding Lipschitz loss functions $\lossFunction:\actionSpace\times \Y \rightarrow [0,1]$ and function classes $\F\subseteq \measurableMaps(\X,\actionSpace)$ satisfying $\rademacherComplexity_{n\numClasses}(\Pi \circ \F) \leq \sqrt{ \kappa / (n \numClasses)}$, the infimum ranges over all algorithms $\hat{\phi}$ which take a sample $\sample =  \{(X_i,Y_i)\}_{i \in [n]} \in (\X\times \Y)^n$ and output a function $\hat{\phi}_{\sample} \in \F$, and the second supremum ranges over all  $(\lossFunction,\F)$-realizable problems $\probDistribution$. 
\end{defn}
Intuitively, the minimax risk gives the best possible expected risk that may be obtained by a learning algorithm, uniformly, over a class of learning problems.

\begin{theorem}\label{minimaxOptimalityRelizableProblemsHilbertSpaceThm} There exists a numerical constant $C_4\geq 1$ such that for any $\selfLipschitzConstant\geq 1$, $\selfLipschitzExponent \in [0,1/2]$, $n$, $\numClasses \in \N$ and $\kappa \in [1, {n}/\selfLipschitzConstant^{2}]$,
\begin{align*}
C_4^{-1} \left(\selfLipschitzConstant   \sqrt{\frac{\kappa}{n}}\right)^{\frac{1}{1-\selfLipschitzExponent}} \leq \minimaxRiskReliazableSelfBoundingLipschitz \leq  C_4 \log^{3}(en\numClasses) \left(\selfLipschitzConstant  \sqrt{\frac{\kappa}{n}}\right)^{\frac{1}{1-\selfLipschitzExponent}}.    
\end{align*}
\end{theorem}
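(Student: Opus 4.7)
The plan splits into matching upper and lower bounds. For the upper bound I would apply Theorem~\ref{optimisticRademacherBoundForMultioutputPrediction} to the empirical risk minimizer $\hat{f}$. Realizability gives $\risk(f^*) = 0$, which (since $\lossFunction \geq 0$) forces $\lossFunction(f^*(X), Y) = 0$ almost surely and hence $\empiricalRisk(f^*) = \empiricalRisk(\hat{f}) = 0$, so the bound in Theorem~\ref{optimisticRademacherBoundForMultioutputPrediction} collapses to $\risk(\hat{f}) \leq C_0 \multiOutputomplexityTerm$. Taking $\hypothesisClassBound = \lossFunctionBound = 1$ (since $\actionSpace = [-1,1]^{\numClasses}$ and $\lossFunction \in [0,1]$) and substituting $\rademacherComplexity_{n\numClasses}(\Pi\circ\F) \leq \sqrt{\kappa/(n\numClasses)}$ yields $\sqrt{\numClasses}\cdot\rademacherComplexity_{n\numClasses}(\Pi\circ\F) \leq \sqrt{\kappa/n}$; absorbing the additive $1/\sqrt{n}$ term via $\kappa, \lambda \geq 1$ and absorbing the additive $(\log(1/\delta)+\log\log n)/n$ term by choosing $\delta = 1/n$ leaves $\multiOutputomplexityTerm \lesssim \log^{3}(en\numClasses)(\lambda\sqrt{\kappa/n})^{1/(1-\theta)}$. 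Converting from high-probability to expectation (using $\risk \leq 1$ on the failure event) gives the upper bound.

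The harder direction is the lower bound; the central obstacle is matching the $\sqrt{\numClasses}$ factor that the upper bound extracts from the vector-contraction inequality. A naive Bernoulli prior on the full $\ell_{\infty}$-ball class $\{f : \X \to [-\epsilon,\epsilon]^{\numClasses}\}$ only delivers a lower bound of order $(\lambda\sqrt{\kappa/(n\numClasses)})^{1/(1-\theta)}$, since the $\ell_{\infty}$ loss reveals essentially one coordinate at a time. The fix I would use is a Hilbert-ball class paired with a one-sparse prior. Take $\X \supseteq [m]$ with $m = 2n$ and set
\[
\F = \{f_v : v \in \R^{m\times\numClasses},\ \|v\|_2 \leq \sqrt{\kappa},\ \|v\|_\infty \leq 1\},\quad f_v(x) = (v_{x,j})_{j \in [\numClasses]} \text{ for } x \in [m].
\]
Writing $\sup_v \sum_i \sigma_i v_{x_i,j_i} = \sup_v \sum_{(x,j)} v_{x,j} S_{x,j}$ with $S_{x,j} = \sum_{i : (x_i,j_i) = (x,j)}\sigma_i$, Cauchy-Schwarz bounds this by $\sqrt{\kappa}\sqrt{\sum_{(x,j)} S_{x,j}^2}$, and Jensen gives $\E_\sigma\sqrt{\sum S_{x,j}^2} \leq \sqrt{n\numClasses}$, so $\rademacherComplexity_{n\numClasses}(\Pi\circ\F) \leq \sqrt{\kappa/(n\numClasses)}$ as required. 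For the loss take the truncated sup-norm loss $\lossFunction(u,y) = \min(c\|u-y\|_\infty^\gamma, 1)$ with $\gamma = 1/(1-\theta)$ and $c = \lambda^{1/(1-\theta)}/8$; Example~\ref{multiOutputRegressionLosses} combined with Lemma~\ref{cutOffOfSelfBoundingLipschitzLossIsAlsoSelfBoundingLipschitz} verifies $(\lambda,\theta)$-self-bounding Lipschitz.

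The minimax lower bound then follows from a Bayesian (Le Cam) reduction with a sparse prior: draw $v^*$ by setting $v^*_{x,1} = \beta_x \delta$ for i.i.d.\ uniform $\beta_x \in \{-1,+1\}$ and $v^*_{x,j} = 0$ for $j \neq 1$, where $\delta = \sqrt{\kappa/(2n)}$. Then $\|v^*\|_2 = \delta\sqrt{m} = \sqrt{\kappa}$ and $\|v^*\|_\infty = \delta \leq 1$ (the latter using $\kappa \leq n/\lambda^2 \leq n$ and $\lambda \geq 1$), so $f_{v^*} \in \F$, and declaring $\probDistribution_{v^*}$ to have $X \sim \text{Unif}[m]$ and $Y = f_{v^*}(X)$ gives a realizable problem. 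Fix any learner $\hat{\phi}$ and any $x \in [m]$: the event $E_x = \{x \notin (X_i)_{i=1}^n\}$ has probability $(1 - 1/m)^n \geq 1/2$, and on $E_x$ the training sample is independent of $\beta_x$, so $\hat{\phi}_\sample(x)$ is independent of $\beta_x$ conditional on $E_x$. The triangle inequality gives $\max(|u_1 - \delta|, |u_1 + \delta|) \geq \delta$ for every $u \in \R^{\numClasses}$, and $c\delta^\gamma \leq (\lambda\sqrt{\kappa/n})^{1/(1-\theta)}/8 \leq 1/8$ (by $\kappa \leq n/\lambda^2$) ensures truncation never binds, so $\tfrac{1}{2}(\lossFunction(u, \delta e_1) + \lossFunction(u, -\delta e_1)) \geq c\delta^\gamma/2$. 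Averaging over $x \in [m]$ yields $\E_\nu \E_\sample \risk(\hat{\phi}_\sample, \probDistribution_{v^*}) \geq c\delta^\gamma/4 \geq (\lambda\sqrt{\kappa/n})^{1/(1-\theta)}/64$, completing the proof.
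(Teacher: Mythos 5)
Your proposal is correct and follows essentially the same route as the paper: the upper bound comes from Theorem \ref{optimisticRademacherBoundForMultioutputPrediction} applied to ERM with $\delta=1/n$ under realizability, and the lower bound uses the same construction as the paper's Proposition in Appendix \ref{pfOfMinimaxRateSec} — a norm-constrained class supported on $2n$ points with signal $\pm\sqrt{\kappa/(2n)}$ in a single output coordinate, a truncated sup-norm loss with exponent $1/(1-\selfLipschitzExponent)$, and the classical unseen-points argument of \cite{ehrenfeucht1989general}, your per-point $(1-1/(2n))^n\geq 1/2$ bookkeeping being equivalent to the paper's expectation-over-the-sample count. The only nominal discrepancy is the constant $1/8$ in your loss together with the citation of Example \ref{multiOutputRegressionLosses} (stated there for $\kappa,\gamma\in[1,2]$), but a direct mean-value-theorem check confirms the $(\selfLipschitzConstant,\selfLipschitzExponent)$-self-bounding Lipschitz property for your choice, so this is cosmetic.
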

The proof of Theorem \ref{minimaxOptimalityRelizableProblemsHilbertSpaceThm} consists of an upper bound and a lower bound. The upper bound is a straightforward consequence of Theorem \ref{optimisticRademacherBoundForMultioutputPrediction}. For the lower bound we adapt a classical argument of \cite{ehrenfeucht1989general} with a construction using the loss function given in Example \ref{multiOutputRegressionLosses}. A full proof is presented in Appendix \ref{pfOfMinimaxRateSec}.

\subsection{Optimality of the exponent range}

We close this section by considering the optimality with respect to the \emph{range} of possible exponents in our generalization bounds. Theorem \ref{optimisticRademacherBoundForMultioutputPrediction} presupposes that $\selfLipschitzExponent \in [0,1/2]$. This is required for the proof at two stages. Firstly, the application of Jensen's inequality in the proof of Lemma \ref{keyCoveringNumberBoundInProofOfLocalRadCompProp} requires the function $z\mapsto z^{2\selfLipschitzExponent}$ to be concave, which is no longer true if $\selfLipschitzExponent>1/2$. Secondly, even if we could establish Proposition \ref{localRademacherComplexityProp} for $\selfLipschitzExponent >1/2$, deducing Theorem \ref{optimisticRademacherBoundForMultioutputPrediction} from Proposition \ref{localRademacherComplexityProp} requires that the upper bound in Proposition \ref{localRademacherComplexityProp} is a sub-root function, which again, is no longer true if $\selfLipschitzExponent>1/2$. Nonetheless, it is natural to ask if the restriction $\selfLipschitzExponent \in [0,1/2]$ is truly necessary or purely an artefact of our proof. The following result shows that the range of $\selfLipschitzExponent$ cannot be extended.

\begin{theorem}\label{weCannotExtendTheRangeOfTheSelfBoundingLipschitzExponentThm} Given any $\selfLipschitzExponent>1/2$ the bound in Theorem \ref{optimisticRademacherBoundForMultioutputPrediction} is not true in general.
\end{theorem}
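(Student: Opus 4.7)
The plan is to construct, for each $\selfLipschitzExponent > 1/2$, an explicit realizable learning problem for which the bound of Theorem~\ref{optimisticRademacherBoundForMultioutputPrediction} is asymptotically violated with positive probability. Fix $\selfLipschitzExponent > 1/2$, set $\gamma = 1/(1-\selfLipschitzExponent) > 2$, and take $\lossFunction(u,y) = |u-y|^\gamma$ with $\actionSpace = [-1,1]$, $\Y = \{0,1\}$, $\numClasses = 1$. The same mean-value argument used in Example~\ref{multiOutputRegressionLosses} (extended from $\gamma \in [1,2]$ to $\gamma > 2$ by reducing the opposite-sign case to the same-sign case via reflection through $y$) shows that $\lossFunction$ is $(\gamma,\selfLipschitzExponent)$-self-bounding Lipschitz with $\hypothesisClassBound = 1$ and $\lossFunctionBound = 2^\gamma$. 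Let $d_n = \lfloor (\log n)^2\rfloor$, $\X = [n]$, and $\F = \{\one_T : T\subseteq [n],\ |T|=d_n\}$. Fix any $T^* \in \F$ and take $\probDistribution$ to be the law of $(X,\one_{T^*}(X))$ with $X$ uniform on $[n]$: the problem is realizable with $\risk(\one_{T^*}) = 0$.

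With these ingredients, Massart's finite-class lemma together with $\log|\F| \leq d_n \log(en/d_n)$ gives $\rademacherComplexity_{n}(\Pi\circ\F) = O\bigl(\sqrt{d_n \log n / n}\bigr)$, and routine bookkeeping yields
\begin{align*}
\multiOutputomplexityTerm \;\leq\; C\, \bigl(\log^{3/2}(en)\sqrt{d_n\log n /n}\bigr)^{1/(1-\selfLipschitzExponent)} + C\,\log(1/\delta)\log\log n\,/\,n,
\end{align*}
where $C$ depends only on $\selfLipschitzExponent$. Because $1/(2(1-\selfLipschitzExponent)) > 1$ and $d_n/n \to 0$, the first summand is $O(\mathrm{polylog}(n) \cdot n^{-1/(2(1-\selfLipschitzExponent))}) = o(d_n/n)$, and the second summand is also $o(d_n/n)$ for any fixed $\delta$ since $d_n \to \infty$. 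Thus Theorem~\ref{optimisticRademacherBoundForMultioutputPrediction} applied with $\risk(\one_{T^*})=0$ would force $\risk(\hat f) \leq C_1 \multiOutputomplexityTerm = o(d_n/n)$ with probability $\geq 1-\delta$, uniformly over every empirical risk minimizer $\hat f \in \argmin_{f \in \F}\empiricalRisk(f)$.

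To contradict this, let $U = T^*\cap\{X_1,\ldots,X_n\}$ and set $\hat T = U \cup R$, where $R$ is any subset of $[n]\setminus(\{X_1,\ldots,X_n\}\cup T^*)$ of size $d_n - |U|$; such an $R$ exists with overwhelming probability since this complement has $\Theta(n)$ elements while $d_n \ll n$. Then $\one_{\hat T}$ agrees with $\one_{T^*}$ on each observed $X_i$, so $\empiricalRisk(\one_{\hat T}) = 0$ and $\one_{\hat T}$ is a valid ERM; and $|\hat T \triangle T^*| = 2(d_n-|U|)$, so $\risk(\one_{\hat T}) = 2(d_n-|U|)/n$. The indicators $(\one\{\exists i: X_i = j\})_{j\in T^*}$ are negatively associated with $\E|U| \leq (1-e^{-1}+o(1))\,d_n$, so a Chernoff bound gives $|U| \leq (1-e^{-1}/2)\,d_n$ with probability at least $1/2$, and hence $\risk(\one_{\hat T}) \geq e^{-1} d_n/n$ on this event. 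Consequently, for any fixed $\delta \in (0,1/2)$ and all $n$ sufficiently large, $\risk(\one_{\hat T}) \geq e^{-1} d_n/n > C_1\,\multiOutputomplexityTerm$ with probability $\geq 1/2 > \delta$, contradicting Theorem~\ref{optimisticRademacherBoundForMultioutputPrediction} at $f = \one_{\hat T}$. The main obstacle is the careful quantitative check that the $n$-independent constants $\selfLipschitzConstant = \gamma$, $\hypothesisClassBound = 1$, and $\lossFunctionBound = 2^\gamma$ cannot absorb the polynomial-versus-polylogarithmic mismatch between $d_n/n$ and $\multiOutputomplexityTerm$; once this is in place, the construction provides the desired counterexample for every $\selfLipschitzExponent > 1/2$.
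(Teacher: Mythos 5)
Your construction is correct, and it proves the theorem by a genuinely different route than the paper. The paper works in binary classification with the bounded exponential loss $\min\{1,\exp(-uy)\}$, which it shows (Lemma \ref{boundedExpLossIsVerySelfBoundingLipschitz}) is $(1,\selfLipschitzExponent)$-self-bounding Lipschitz for \emph{every} $\selfLipschitzExponent\in[0,1]$; it then takes scaled halfspace classes $\F_{\hypothesisClassBound,d}$ with $\hypothesisClassBound=\log n$, $d=\sqrt n$, notes that the realizable ERM has empirical exponential risk at most $e^{-\hypothesisClassBound}=1/n$, and imports the classical realizable VC lower bound of Ehrenfeucht et al.\ (Theorem \ref{realizableVCLowerBoundThm}) to force risk $\gtrsim d/n=n^{-1/2}$, which contradicts the hypothetical rate $\tilde O(n^{-1/(4(1-\selfLipschitzExponent))})$ once $\selfLipschitzExponent>1/2$. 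You instead use the power loss $|u-y|^{\gamma}$ with $\gamma=1/(1-\selfLipschitzExponent)>2$, a finite class of $d_n$-sparse indicators on $[n]$ with $d_n=\lfloor(\log n)^2\rfloor$, and you build the bad zero-empirical-risk function $\one_{\hat T}$ by hand (memorize the sample, guess wrong off-sample), so that $\risk(\one_{\hat T})\asymp d_n/n$ with constant probability while $\multiOutputomplexityTerm=o(d_n/n)$. Your route is more self-contained (no external minimax/VC lower bound; Markov or a negative-association Chernoff bound on the occupancy count $|U|$ suffices), and it refutes even the \emph{uniform} first inequality of Theorem \ref{optimisticRademacherBoundForMultioutputPrediction}, not only the ERM bound; the paper's choice buys a single loss with $\selfLipschitzConstant=1$ independent of $\selfLipschitzExponent$ (so the failure is manifestly not an artefact of the constant $\selfLipschitzConstant=\gamma$ growing as $\selfLipschitzExponent\uparrow 1$) and situates the counterexample in a canonical classification-with-linear-predictors setting. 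One cosmetic caveat: do not attribute the verification of the self-bounding property to Lemma \ref{sufficiencyConditionForSelfBoundingLipschitz} or Example \ref{multiOutputRegressionLosses}, since both are stated only for $\selfLipschitzExponent\in[0,1/2]$ and condition 4 of Lemma \ref{sufficiencyConditionForSelfBoundingLipschitz} demands a H\"older exponent $\selfLipschitzExponent/(1-\selfLipschitzExponent)>1$ that $\varphi'_{u,y}$ cannot satisfy globally when $\gamma>2$; instead verify directly that, with $a=|u-y|\geq b=|v-y|$, the mean value theorem gives $a^{\gamma}-b^{\gamma}\leq \gamma a^{\gamma-1}(a-b)\leq \gamma\max\{\lossFunction(u,y),\lossFunction(v,y)\}^{\selfLipschitzExponent}|u-v|$, using $a-b\leq|u-v|$, which covers all sign configurations at once and yields the claimed $(\gamma,\selfLipschitzExponent)$-self-bounding Lipschitz property.
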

The proof of  Theorem \ref{weCannotExtendTheRangeOfTheSelfBoundingLipschitzExponentThm} is by contradiction. We consider a binary classification problem with  $\X = \{(x_r)_{r \in \N}:\sum_{r \in \N} x_r^2\leq 1\}$, $\Y = \{-1,+1\}$ and $\actionSpace=\R$, and investigate the bounded exponential loss $\lossFunction(u,y)= \min\{1,\exp(-u\cdot y)\}$. This loss is $(\selfLipschitzConstant,\selfLipschitzExponent)$-self-bounding Lipschitz for all $\selfLipschitzExponent \in [0,1]$. Hence, if the bound in Theorem \ref{optimisticRademacherBoundForMultioutputPrediction} were true for some $\selfLipschitzExponent>1/2$ then we could deduce a corresponding learning guarantee. It turns out that the resulting guarantee would exceed the minimax rate, if correct, so we deduce a contradiction and conclude that the bound cannot hold for $\selfLipschitzExponent>1/2$. For details see Appendix \ref{pfOfMinimaxRateSec}.

\section{An application to ensembles}\label{applicationToGBEnsemblesSec}

\newcommand{\numLeaves}{p}
\newcommand{\lOneRegConstant}{\tau}

In this section we consider an application which demonstrates how our learning guarantees may be applied to obtain tight risk bounds specific learning problems.  We shall consider ensembles of decision trees \cite{schapire2013boosting} which are an effective and widely used tool in applications \cite{chen2016xgboost}. Throughout this section we shall assume that $\X = \R^d$. We consider the function classes $\HClass_{\numLeaves,\lOneRegConstant} \subseteq \measurableMaps(\X,[-1,1]^{\numClasses})$ consisting of multi-output decision trees with $\ell_1$ regularised leaf nodes. More precisely, $\HClass_{\numLeaves,\lOneRegConstant}$ consists of all functions of the form $h(x) = ( w_{t(x), j})_{j \in [\numClasses]}$, where $t: \X \rightarrow [\numLeaves]$ is a decision tree with $\numLeaves$ leaves, where each internal node performs a binary split along a single feature, and $\bm{w}=(w_{l,j})_{(l,j) \in [\numLeaves]\times [\numClasses]} \in \R^{\numLeaves\numClasses}$ satisfies the $\ell_1$ constraint $\|w_{l\cdot}\|_1 = \sum_{j \in [\numClasses]}|w_{lj}| \leq  \lOneRegConstant$. We now give a bound for convex combinations of such decision trees.

\newcommand{\decisionTreeEnsembleComplexityTerm}{\mathfrak{C}_{n,\delta}(\alpha,\lOneRegConstant)}

\begin{theorem}\label{resultForEnsemblesOfDecisionTrees} Suppose we have $\hypothesisClassBound,\lossFunctionBound \geq 1$, $\selfLipschitzConstant>0$, $\selfLipschitzExponent \in [0,1/2]$ and a  $(\selfLipschitzConstant,\selfLipschitzExponent)$-{self-bounding Lipschitz} loss function $\lossFunction:\actionSpace \times \Y \rightarrow [0,\lossFunctionBound]$. Given $\delta \in (0,1)$, $n \in \N$ we define for each  $\alpha = (\alpha_t)_{t \in [T]}$, $\lOneRegConstant = (\lOneRegConstant_t)_{t \in [T]}$ $\in (0,\infty)^T$,
\begin{align*}
\decisionTreeEnsembleComplexityTerm:= &\left(\frac{\selfLipschitzConstant}{\sqrt{n}} \left(\sqrt{\numLeaves} \log^2(3n\numClasses d \hypothesisClassBound) \cdot \sum_{t \in [T]}\alpha_t\cdot \lOneRegConstant_t+1 \right)\right)^{\frac{1}{1-\selfLipschitzExponent}}+\frac{\lossFunctionBound}{n}\cdot (\log(1/\delta) +\log (\log n)).
\end{align*}

There exists a numerical constant $C_0$ such that given an i.i.d. sample $\sampleXY$ the following holds with probability at least $1-\delta$, for all ensembles $f= \sum_{t \in [T]}\alpha_t \cdot h_t$ where $\sum_{t \in [T]}\alpha_t \leq \hypothesisClassBound$ and $h_t \in \HClass_{\numLeaves,\lOneRegConstant_t}$,
\begin{align*}
\risk(f) \leq \empiricalRisk(f) + C_0 \cdot \left( \sqrt{\empiricalRisk(f) \cdot \decisionTreeEnsembleComplexityTerm} +\decisionTreeEnsembleComplexityTerm\right).
\end{align*}
\end{theorem}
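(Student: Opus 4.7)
The plan is to derive Theorem~\ref{resultForEnsemblesOfDecisionTrees} as a direct corollary of Theorem~\ref{optimisticRademacherBoundForMultioutputPrediction} applied to the ensemble class
\[\F := \left\{\sum_{t \in [T]} \alpha_t h_t \;:\; h_t \in \HClass_{\numLeaves, \lOneRegConstant_t}\right\}.\]
Every $f \in \F$ takes values in the $\ell_\infty$ ball of radius $\sum_t \alpha_t \lOneRegConstant_t \le \hypothesisClassBound \max_t \lOneRegConstant_t$, so the boundedness hypothesis of Theorem~\ref{optimisticRademacherBoundForMultioutputPrediction} holds up to a harmless change in the logarithmic factor. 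Everything then reduces to proving the structural bound
\[\rademacherComplexity_{n\numClasses}(\Pi \circ \F) \;\le\; \frac{c \sqrt{\numLeaves \log(nd\numClasses \hypothesisClassBound)}}{\sqrt{n\numClasses}} \cdot \sum_{t \in [T]} \alpha_t \lOneRegConstant_t\]
for a numerical constant $c$; substituting this into the complexity term $\multiOutputomplexityTerm$, the extra $\sqrt{\numClasses}\log^{3/2}$ present there combines with the $\log^{1/2}$ above to produce the $\log^{2}$ factor in $\decisionTreeEnsembleComplexityTerm$, immediately converting the conclusion of Theorem~\ref{optimisticRademacherBoundForMultioutputPrediction} into that of Theorem~\ref{resultForEnsemblesOfDecisionTrees}.

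The first step of the structural bound is to pass from the ensemble to individual trees. Since the per-leaf constraint $\|w_{l,\cdot}\|_1 \le \lOneRegConstant_t$ is invariant under $w \mapsto -w$, each class $\Pi \circ \HClass_{\numLeaves, \lOneRegConstant_t}$ is symmetric around zero; combining this with the additivity of Rademacher complexity over Minkowski sums of symmetric classes and its positive homogeneity gives
\[\rademacherComplexity_{n\numClasses}(\Pi \circ \F) \;\le\; \sum_{t \in [T]} \alpha_t \cdot \rademacherComplexity_{n\numClasses}(\Pi \circ \HClass_{\numLeaves, \lOneRegConstant_t}).\]

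The second and main step is the per-tree bound. Fixing $\wSequenceSizeNQ = \{(x_i, k)\}_{(i,k) \in [n]\times [\numClasses]}$ and expanding $(\Pi \circ h)(x_i, k) = w_{t(x_i), k}$, the per-row $\ell_1$ constraint dualises into a per-leaf $\ell_\infty$ on the Rademacher sums, yielding
\[\hat{\rademacherComplexity}_{\wSequenceSizeNQ}(\Pi \circ \HClass_{\numLeaves, \lOneRegConstant}) \;=\; \frac{\lOneRegConstant}{n\numClasses}\, \E_{\bm{\sigma}} \sup_{t} \sum_{l \in [\numLeaves]} \max_{k \in [\numClasses]} \left|\sum_{i : t(x_i)=l} \sigma_{i,k}\right|.\]
For a fixed tree $t$ with leaf occupancies $n_l$, a standard sub-Gaussian maximal inequality controls $\E \max_k |\sum_{i : t(x_i)=l} \sigma_{i,k}|$ by $O(\sqrt{n_l \log \numClasses})$, and Cauchy--Schwarz then bounds the inner sum by $O(\sqrt{n \numLeaves \log \numClasses})$. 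The supremum over trees is handled by noting that an axis-aligned tree with $\numLeaves$ leaves induces at most $(nd)^{O(\numLeaves)}$ distinct partitions of the sample $(x_i)$, so a Massart-type finite-class bound inflates the estimate by a further $\sqrt{\numLeaves \log(nd)}$ factor, producing $\rademacherComplexity_{n\numClasses}(\Pi \circ \HClass_{\numLeaves, \lOneRegConstant}) \le c\, \lOneRegConstant \sqrt{\numLeaves \log(nd\numClasses)}/\sqrt{n\numClasses}$.

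The main obstacle is precisely this last sup-over-trees step, because it must simultaneously accommodate (i) the combinatorial richness of the tree topology, (ii) the maximum over the $\numClasses$ output coordinates, and (iii) the per-row $\ell_1$ structure on the leaf weights. The key observation threading the three together is that the $\ell_1$ dual contributes only $\sqrt{\log \numClasses}$ through the maximum over $k$, rather than a polynomial dependence on $\numClasses$; this is what preserves the favourable scaling in the extreme-classification regime. Plugging the completed Rademacher bound into Theorem~\ref{optimisticRademacherBoundForMultioutputPrediction} and absorbing the ancillary logarithms via $\hypothesisClassBound, \lossFunctionBound \ge 1$ then yields $\decisionTreeEnsembleComplexityTerm$ and the stated relative deviation bound with the numerical constant $C_0$ inherited from Theorem~\ref{optimisticRademacherBoundForMultioutputPrediction}.
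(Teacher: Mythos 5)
Your overall architecture is the right one and matches the paper: reduce everything to Theorem \ref{optimisticRademacherBoundForMultioutputPrediction} plus a worst-case Rademacher bound for a single $\ell_1$-regularised tree class, obtained by counting the partitions an axis-aligned tree can induce and invoking a finite-class (Massart) bound. Your reduction from the ensemble to the individual classes — subadditivity of the (sup-based) Rademacher complexity over Minkowski sums together with positive homogeneity, giving $\rademacherComplexity_{n\numClasses}(\Pi\circ\F)\leq\sum_{t}\alpha_t\,\rademacherComplexity_{n\numClasses}(\Pi\circ\HClass_{\numLeaves,\lOneRegConstant_t})$ — is a legitimate alternative to the paper's re-weighting into $\conv(\Pi\circ\overline{\HClass}_{\numLeaves,\zeta})$ with $\zeta=\sum_t\alpha_t\lOneRegConstant_t$ and Lemma \ref{convexityOfRademacherComplexity}, and it even avoids introducing the unclipped class. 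The problem is in the per-tree bound, i.e. your version of Lemma \ref{rademacherComplexityOfL1NormConstrainedDecisionTrees}.

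Concretely: after dualising the per-leaf $\ell_1$ constraint you must control $\E_{\bm{\sigma}}\sup_{t}\sum_{l\in[\numLeaves]}\max_{k}|S_{l,k}(t)|$, and your recipe is to bound the inner expectation for a fixed tree by $O(\sqrt{n\numLeaves\log\numClasses})$ and then let the supremum over the at most $(nd)^{O(\numLeaves)}$ partitions ``inflate the estimate by a further $\sqrt{\numLeaves\log(nd)}$ factor.'' That is not a valid inference: an expected supremum over a finite index set is bounded by the maximum expectation \emph{plus} a deviation term scaled by the relevant sub-Gaussian parameter, not by multiplying the expectation by $\sqrt{\log N}$. The arithmetic also fails: multiplying $\sqrt{n\numLeaves\log\numClasses}$ by $\sqrt{\numLeaves\log(nd)}$ and dividing by $n\numClasses$ gives order $\lOneRegConstant\,\numLeaves\sqrt{\log\numClasses\,\log(nd)}\,/(\sqrt{n}\,\numClasses)$, which is neither the bound you assert, $c\lOneRegConstant\sqrt{\numLeaves\log(nd\numClasses)/(n\numClasses)}$, nor sufficient to recover the $\sqrt{\numLeaves}$ scaling in $\decisionTreeEnsembleComplexityTerm$ when $\numLeaves$ is large relative to $\numClasses$. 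The repair is the paper's single-shot argument (Lemma \ref{empiricalRadEmpacherBoundLOneRegDecisionTreesLemma}): replace each leaf's $\ell_1$ ball by its $2\numClasses$ extreme points, so the whole supremum runs over one finite class of size at most $(d(m+1))^{\numLeaves-1}(2\numClasses)^{\numLeaves}$ with per-sample values bounded by $1$, and a single application of Massart's lemma (Theorem \ref{massartsLemma}) yields $2\lOneRegConstant\sqrt{\numLeaves\log(2\max\{dm,\numClasses\})/m}$ directly — no per-leaf maximal inequality or Cauchy--Schwarz is needed (alternatively, a mean-plus-bounded-differences deviation argument works, but the deviation enters additively, not multiplicatively). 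Two smaller points: the worst-case complexity $\rademacherComplexity_{n\numClasses}$ is a supremum over \emph{arbitrary} sequences in $(\X\times[\numClasses])^{n\numClasses}$, not only the product-structured $\wSequenceSizeNQ$ your identity presumes, so the argument must be phrased for general occupancy patterns; and the boundedness hypothesis of Theorem \ref{optimisticRademacherBoundForMultioutputPrediction} should be verified with $\|f\|_{\infty}\leq\hypothesisClassBound$, which holds because each $h_t\in\HClass_{\numLeaves,\lOneRegConstant_t}$ is by definition $[-1,1]^{\numClasses}$-valued and $\sum_t\alpha_t\leq\hypothesisClassBound$ — using the radius $\hypothesisClassBound\max_t\lOneRegConstant_t$ as you propose would not in general reduce to the $\log^2(3n\numClasses d\hypothesisClassBound)$ factor in the statement.
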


Theorem \ref{resultForEnsemblesOfDecisionTrees} provides a unified guarantee for multi-output learning with ensembles of decision trees with $\ell_1$ trees. Before commenting upon the proof of Theorem \ref{resultForEnsemblesOfDecisionTrees} we shall highlight several important features:
\begin{itemize}[noitemsep,topsep=0pt]
\setlength\itemsep{0.5em}
    \item First and foremost, Theorem \ref{resultForEnsemblesOfDecisionTrees} gives guarantees for ensembles of decision trees with respect to a wide variety of losses including the \emph{multinomial logistic loss} for multi-class classification and the \emph{one versus all loss} for mulit-label classification, as well as implying margin based guarantees (see Section \ref{lossFuncExamplesSec}).
    \item Theorem \ref{resultForEnsemblesOfDecisionTrees} has a favourable dependency upon the number of examples whenever $\empiricalRisk(f)$ is sufficiently small, as is often the case for large ensembles of decision trees. For example, if we are using the multinomial logistic loss and $\empiricalRisk(f) \approx 0$, then Theorem \ref{resultForEnsemblesOfDecisionTrees} gives rise to a fast rate of $O(n^{-1})$.
    \item Theorem \ref{resultForEnsemblesOfDecisionTrees} has only logarithmic dependency upon the dimensionality of the output space $\numClasses$. This contrasts starkly with previous guarantees for multi-class learning with ensembles of decision trees \cite{kuznetsov2014multi,kuznetsov2015rademacher} which are linear with respect to the number of classes $\numClasses$.
\end{itemize}

The proof of Theorem \ref{resultForEnsemblesOfDecisionTrees} is a consequence of Theorem \ref{optimisticRademacherBoundForMultioutputPrediction} combined with the following lemma.

\begin{lemma}\label{rademacherComplexityOfL1NormConstrainedDecisionTrees} Given $n, \numClasses, d \in \N$, $\lOneRegConstant >0$ and $\numLeaves  \in \N \backslash\{1\}$,
\begin{align*}
\rademacherComplexity_{n \numClasses}\left(\Pi \circ \HClass_{\numLeaves,\lOneRegConstant}\right) \leq 2 \lOneRegConstant \sqrt{{\numLeaves\log(2\numClasses n d)}/{(n \numClasses)}}.     
\end{align*}
\end{lemma}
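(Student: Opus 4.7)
The plan is to fix an arbitrary sequence $\zSequenceSizeN = ((x_i, j_i))_{i=1}^{N} \in (\X \times [\numClasses])^{N}$ with $N := n\numClasses$ and bound $\hat{\rademacherComplexity}_{\zSequenceSizeN}(\Pi \circ \HClass_{\numLeaves,\lOneRegConstant})$ uniformly in $\zSequenceSizeN$, from which the worst-case bound follows by taking a supremum. Each element of $\Pi \circ \HClass_{\numLeaves,\lOneRegConstant}$ is specified by a tree $t$ with $\numLeaves$ leaves together with weights $(w_{l,j}) \in \R^{\numLeaves \times \numClasses}$ satisfying $\sum_{j}|w_{l,j}| \leq \lOneRegConstant$ per leaf, and it evaluates to $(x_i,j_i) \mapsto w_{t(x_i), j_i}$. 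Setting $R^{t}_{l,j} := \sum_{i:\, t(x_i)=l,\, j_i=j}\sigma_i$, the Rademacher sum factors as $\sum_{i}\sigma_i w_{t(x_i), j_i} = \sum_{l}\sum_{j} w_{l,j}\, R^{t}_{l,j}$.

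The first stage is to collapse the supremum over weights. For a fixed tree $t$, $\ell_1/\ell_\infty$ duality gives $\sup_{w}\sum_{l,j} w_{l,j}R^t_{l,j} = \lOneRegConstant \sum_{l}\max_{j}|R^{t}_{l,j}|$, and the latter admits the variational form
\[
\lOneRegConstant \sum_{l}\max_{j}|R^{t}_{l,j}| \;=\; \lOneRegConstant \cdot \max_{(\bm{\pi}, \bm{s}) \in [\numClasses]^{\numLeaves} \times \{\pm 1\}^{\numLeaves}} \sum_{i=1}^{N}\sigma_i\, \alpha_i^{t,\bm{\pi},\bm{s}},
\]
where $\alpha_i^{t,\bm{\pi},\bm{s}} := s_{t(x_i)} \cdot \one\{j_i = \pi(t(x_i))\} \in \{-1,0,+1\}$; in particular each $\bm{\alpha}^{t,\bm{\pi},\bm{s}}$ has Euclidean norm at most $\sqrt{N}$. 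This rewrites the full supremum inside the Rademacher expectation as a supremum over a finite family $V$ of $\{-1,0,+1\}$-valued vectors, which is exactly the setup for Massart's finite-class lemma.

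Next I would control $|V|$ combinatorially. Each internal node of a binary axis-aligned tree chooses one of $d$ features together with at most $N+1$ data-effective thresholds, and a tree with $\numLeaves \geq 2$ leaves has $\numLeaves - 1$ internal nodes in one of $C_{\numLeaves-1} \leq 4^{\numLeaves-1}$ Catalan-counted topologies; the number of distinct partitions of $\{x_i\}$ realizable by such trees is therefore at most $(4(N+1)d)^{\numLeaves}$. Multiplying by the $\numClasses^{\numLeaves}$ choices for $\bm{\pi}$ and $2^{\numLeaves}$ choices for $\bm{s}$ gives $|V| \leq (c N d\numClasses)^{\numLeaves}$ for an absolute constant $c$. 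Massart's lemma combined with $\|\bm{\alpha}\|_2 \leq \sqrt{N}$ then yields
\[
\hat{\rademacherComplexity}_{\zSequenceSizeN}(\Pi \circ \HClass_{\numLeaves,\lOneRegConstant}) \;\leq\; \frac{\lOneRegConstant}{\sqrt{N}}\sqrt{2 \numLeaves \log(c N d\numClasses)},
\]
and substituting $N = n\numClasses$ while absorbing numerical constants inside the logarithm produces the stated bound $2\lOneRegConstant\sqrt{\numLeaves\log(2\numClasses nd)/(n\numClasses)}$.

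The main obstacle is the combinatorial tree-counting step: the $\ell_1/\ell_\infty$ duality reduction and the Massart application are standard, but a careful enumeration of axis-aligned decision trees with $\numLeaves$ leaves, with constants that collapse to match the exact logarithmic factor $\log(2\numClasses nd)$ and leading constant $2$ in the statement, is the delicate ingredient.
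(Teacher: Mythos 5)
Your proposal is correct and follows essentially the same route as the paper's proof: collapse the per-leaf weight supremum onto the extreme points of the $\ell_1$ ball via H\"older ($\ell_1$--$\ell_\infty$ duality), bound the number of data partitions realizable by axis-aligned trees with a fixed number of leaves, and finish with Massart's finite-class lemma (vectors of Euclidean norm at most $\sqrt{nq}$) followed by a supremum over samples. The only difference is that you additionally count tree topologies via Catalan numbers, whereas the paper simply bounds the partition count by the number of ways to choose the internal splits; this makes your final ``absorb constants into the logarithm'' step marginally looser, but it does not change the argument.
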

Lemma \ref{rademacherComplexityOfL1NormConstrainedDecisionTrees} follows from standard combinatorial arguments combined with Massart's lemma and a the convexity of Rademacher complexity. We can then deduce Theorem \ref{resultForEnsemblesOfDecisionTrees} by combining Theorem \ref{optimisticRademacherBoundForMultioutputPrediction} with Lemma \ref{rademacherComplexityOfL1NormConstrainedDecisionTrees} and applying standard results on the convexity of Rademacher complexity. For detailed proofs of both Theorem \ref{resultForEnsemblesOfDecisionTrees} and Lemma \ref{rademacherComplexityOfL1NormConstrainedDecisionTrees} see Appendix \ref{applicationToGBAppendix}.

In this section we have highlighted applications of our approach to ensembles of decision trees, yielding state of the art results. However, it is important to emphasize Theorem \ref{optimisticRademacherBoundForMultioutputPrediction} can be obtained to any multi-output prediction problem where one can obtain an upper bound on the Rademacher complexity $\rademacherComplexity_{n\numClasses}(\Pi \circ \F)$.

\section{Conclusions} %{Discussion and related work}
We presented a theoretical analysis of multi-output learning, based on a self-bounding Lipschitz condition. Under this condition, we obtained minimax-optimal rates with respect to both the sample size and the output dimension (up to logarithmic factors). We demonstrated an application to ensembles of decision trees, yielding state of the art guarantees. The main analytic tool was a new contraction inequality for the local Rademacher complexity of vector valued function classes with a self-bounding Lipschitz loss. Future work will exploit these results to develop further concrete applications of our framework.

\appendix
\onecolumn

%{\Large Supplementary material}

\section{The proof of Theorem \ref{optimisticRademacherBoundForMultioutputPrediction}}\label{proofOfMainOptimisticBoundThmSec}

To complete the proof of Theorem \ref{optimisticRademacherBoundForMultioutputPrediction} we combine Proposition \ref{localRademacherComplexityProp} with some results due to \cite{bousquet2002concentration}.

\newcommand{\localRadFunc}{\phi_n}
\newcommand{\localRadFixedPoint}{\hat{r}_n}

\begin{theorem}[\cite{bousquet2002concentration}]\label{bousquetNonNegLocalRademacherBound} Suppose we have a measurable space $\Z$ and a function class $\G \subseteq \measurableMaps(\Z,[0,\lossFunctionBound])$. For each $\zSequenceSizeN \in \Z^n$ and $g \in \G$ we let $\hat{\E}_{\zSequenceSizeN}(g) = n^{-1} \cdot \sum_{i \in [n]}g(z_i)$. Suppose we have a function $\localRadFunc:[0,\infty) \rightarrow [0,\infty)$ which is non-negative, non-decreasing, not identically zero, and $\localRadFunc(r)/\sqrt{r}$ is non-increasing. Suppose further that for all $\zSequenceSizeN \in \Z^n$ and $r>0$, 
\begin{align*}
\hat{\rademacherComplexity}_{\zSequenceSizeN}(\{g \in \G: \hat{\E}_{\zSequenceSizeN}(g)\leq r\}) \leq \localRadFunc(r).
\end{align*}
Let $\localRadFixedPoint$ be the largest solution of the equation $\localRadFunc(r)=r$. Suppose that $Z$ is a random variable with distribution $\probDistribution$ is a distribution on $\Z$ and let $\sample = \{Z_i\}_{i \in [n]} \in \Z^n$ be an i.i.d. which each $Z_i \sim \probDistribution$, an independent copy of $Z$. For any $\delta \in (0,1)$, the following holds with probability at least $1-\delta$, for all $g \in \G$,

\vspace{-6mm}
{{
\begin{align*}
\E(g) \leq \hat{\E}_{\sample}(g)+45\localRadFixedPoint + \sqrt{8 \localRadFixedPoint \E(g)}+\sqrt{4r_{0}\cdot \E(g)}+20r_{0},    
\end{align*}}}
\vspace{-6mm}

where $r_{0} = \lossFunctionBound\hspace{1mm}(\log(1/\delta)+6\log\log n)/n$.
\end{theorem}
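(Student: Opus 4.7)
The proof is the standard local Rademacher argument due to Bousquet and to Bartlett--Bousquet--Mendelson, combining (i) Talagrand's concentration inequality for the supremum of a centred empirical process, (ii) symmetrization to pass from expected suprema to Rademacher complexities, (iii) the sub-root property of $\phi_n$ to convert $\phi_n(r)$ into $\sqrt{r \hat{r}_n}$ at scales above the fixed point, and (iv) a peeling argument to upgrade a single-scale bound into a uniform bound with $\E(g)$ on the right-hand side.

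First, fix a scale $r \geq \hat{r}_n$ and let $\G_r := \{g \in \G : \E(g) \leq r\}$. Since $g \in [0,\lossFunctionBound]$, the variance satisfies $\mathrm{Var}(g) \leq \lossFunctionBound \cdot \E(g) \leq \lossFunctionBound r$ on $\G_r$, so Bousquet's refinement of Talagrand's inequality gives, with probability at least $1 - \delta$,
$$ \sup_{g \in \G_r} \bigl( \E(g) - \hat{\E}_{\sample}(g) \bigr) \leq 2 \E \Bigl[ \sup_{g \in \G_r} \bigl( \E(g) - \hat{\E}_{\sample}(g) \bigr) \Bigr] + \sqrt{\tfrac{2 \lossFunctionBound r \log(1/\delta)}{n}} + \tfrac{c\, \lossFunctionBound \log(1/\delta)}{n}. $$
Standard symmetrization converts the expected supremum into $2\rademacherComplexity_n(\G_r)$. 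To apply the hypothesis, which constrains the empirical mean $\hat{\E}_{\zSequenceSizeN}(g)$ rather than $\E(g)$, one shows via a Bernstein-type argument that on an event of probability at least $1-\delta$ the condition $\E(g) \leq r$ implies $\hat{\E}_{\sample}(g) \leq 2r$ uniformly over $\G_r$, so $\rademacherComplexity_n(\G_r) \leq \phi_n(2r)$. The sub-root property then yields $\phi_n(2r) \leq \sqrt{2 r \hat{r}_n}$ whenever $r \geq \hat{r}_n/2$, since $\phi_n(r)/\sqrt{r}$ is non-increasing and $\phi_n(\hat{r}_n) = \hat{r}_n$.

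Combining the preceding estimates, at each fixed scale $r \geq \hat{r}_n$ and all $g \in \G_r$,
$$ \E(g) - \hat{\E}_{\sample}(g) \leq C_1 \sqrt{r \hat{r}_n} + C_2 \sqrt{r \cdot \lossFunctionBound \log(1/\delta)/n} + C_3 \cdot \lossFunctionBound \log(1/\delta)/n. $$
To lift this to all $g \in \G$ I peel: take $r_k := 2^k \hat{r}_n$ for $k = 0, 1, \ldots, K$ with $K = \lceil \log_2(\lossFunctionBound/\hat{r}_n) \rceil = O(\log n)$, apply the fixed-scale bound on $\G_{r_k}$ at confidence $\delta_k := 6\delta/(\pi^2 (k+1)^2)$, and union-bound. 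For any $g$ with $\E(g) \in (r_{k-1}, r_k]$ we have $r_k \leq 2\E(g)$, whence $\sqrt{r_k \hat{r}_n} \leq \sqrt{2\E(g) \hat{r}_n}$ and similarly $\sqrt{r_k \lossFunctionBound \log/n} \leq \sqrt{2\E(g) \lossFunctionBound \log/n}$. The union-bound overhead is $\log(1/\delta_k) = \log(1/\delta) + 2\log(k+1) + O(1) \leq \log(1/\delta) + c\log \log n$, which produces the $6\log\log n$ term inside $r_0$. The degenerate base case $\E(g) \leq \hat{r}_n$ is absorbed into the additive $45 \hat{r}_n$ term.

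The main obstacle is the careful bookkeeping required to realise the explicit numerical constants ($45$, $8$, $4$, $20$) stated in the theorem; each of Talagrand's inequality, the symmetrization, the sub-root to $\sqrt{r \hat{r}_n}$ conversion, and the peeling contributes a multiplicative factor, and a particularly delicate point is the transfer between the empirical-mean restriction $\hat{\E}(g) \leq r$ appearing in the hypothesis and the population-mean restriction $\E(g) \leq r$ demanded by Talagrand's bound, which requires its own uniform concentration argument and contributes both to the final constant and to the $\log\log n$ overhead.
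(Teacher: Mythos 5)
The paper does not actually prove this statement: it imports it verbatim, citing the penultimate line of the proof of Theorem~6.1 in Bousquet's thesis, so your proposal is effectively being measured against that source. Your architecture (Bousquet's refinement of Talagrand's inequality, symmetrization, the sub-root reduction $\phi_n(r)\leq\sqrt{r\hat{r}_n}$ above the fixed point, and peeling with a union bound producing the $\log\log n$ overhead) is the standard one and is essentially what the cited proof does. However, as written there is a genuine gap at the step you yourself call delicate: the hypothesis localizes by the \emph{empirical} mean, your Talagrand step localizes by the \emph{population} mean, and you bridge them by asserting ``via a Bernstein-type argument'' that $\E(g)\leq r$ implies $\hat{\E}_{\sample}(g)\leq 2r$ uniformly over $\G_r=\{g:\E(g)\leq r\}$ with high probability. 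Pointwise Bernstein gives this only for each fixed $g$; the uniform version is a bound on $\sup_{g\in\G_r}\bigl(\hat{\E}_{\sample}(g)-\E(g)\bigr)$, which by symmetrization is controlled by the expected local Rademacher complexity of $\G_r$ --- exactly the quantity you are in the middle of bounding, since your hypothesis only controls empirically localized classes. So the transfer as stated is circular and would fail; it must be resolved by an additional Talagrand application combined with a self-bounding/fixed-point (or iterative) argument, or by peeling directly on the empirical error. This transfer is precisely the technical content that makes the data-dependent version of the theorem nontrivial, and acknowledging its delicacy is not the same as supplying it.

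Two further points of bookkeeping would also need repair. First, you set $K=\lceil\log_2(b/\hat{r}_n)\rceil$ and claim $K=O(\log n)$, but nothing in the hypotheses bounds $\hat{r}_n$ from below (only that $\phi_n$ is sub-root and not identically zero), so $K$ can be arbitrarily large and the $\log\log n$ accounting collapses; the standard fix is to truncate the peeling grid at a level of order $b/n$ and absorb the bottom band into the $r_0$ term. Second, the constants $45$, $8$, $4$, $20$ are part of the statement being proved, and your sketch neither derives them nor gives a reason the accumulated factors from Talagrand, symmetrization, the sub-root step, the localization transfer and the peeling would land at these values; for the purposes of this paper that is acceptable only because the result is quoted from Bousquet rather than reproved.
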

\begin{proof}
This result is given in the penultimate line of the proof of ({Theorem 6.1}, \cite{bousquet2002concentration}).
\end{proof}
We also utilize the following lemma.
\begin{lemma}\label{rearrangingLemmaFromBousquet} Suppose that $t,B,C>0$ satisfies $t\leq B\sqrt{t}+C$. Then $t\leq B^2+C+B\sqrt{C}$.
\end{lemma}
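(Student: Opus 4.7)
The plan is to reduce the inequality to a quadratic in $s = \sqrt{t}$ and then bound the resulting square root by elementary estimates. Substituting $s = \sqrt{t}$ in $t \leq B\sqrt{t} + C$ gives $s^{2} - Bs - C \leq 0$. Since the quadratic $s \mapsto s^{2} - Bs - C$ is an upward parabola with positive root $(B + \sqrt{B^{2}+4C})/2$, any nonnegative $s$ satisfying this inequality must lie below that root, so
\begin{align*}
\sqrt{t} \;\leq\; \tfrac{1}{2}\bigl(B + \sqrt{B^{2}+4C}\,\bigr).
\end{align*}

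From here I square both sides and simplify. Writing $A := \sqrt{B^{2}+4C}$, squaring yields
\begin{align*}
t \;\leq\; \tfrac{1}{4}(B+A)^{2} \;=\; \tfrac{1}{4}\bigl(2B^{2} + 4C + 2BA\bigr) \;=\; \tfrac{B^{2}}{2} + C + \tfrac{B}{2}\sqrt{B^{2}+4C}.
\end{align*}
Now I use the elementary inequality $\sqrt{B^{2}+4C} \leq B + 2\sqrt{C}$, which follows directly from $(B + 2\sqrt{C})^{2} = B^{2} + 4B\sqrt{C} + 4C \geq B^{2} + 4C$ (using $B,C \geq 0$). Substituting this bound gives
\begin{align*}
t \;\leq\; \tfrac{B^{2}}{2} + C + \tfrac{B}{2}(B + 2\sqrt{C}) \;=\; B^{2} + C + B\sqrt{C},
\end{align*}
which is the claimed conclusion.

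There is essentially no obstacle here: this is a standard manipulation used repeatedly in the local Rademacher literature (cf.\ peeling arguments in \cite{bartlett2005local,bousquet2002concentration}). The only thing worth double-checking is that the argument does not require $B$ or $C$ to be large; the bound $\sqrt{B^{2}+4C} \leq B + 2\sqrt{C}$ holds for all $B,C \geq 0$, so the lemma is valid for the full stated range of parameters, and in particular can be applied without side conditions to Theorem \ref{bousquetNonNegLocalRademacherBound} when assembling the proof of Theorem \ref{optimisticRademacherBoundForMultioutputPrediction}.
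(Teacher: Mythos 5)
Your proof is correct. The paper does not give an argument of its own here -- it simply cites Lemma 5.11 of Bousquet's thesis (applied with $t=x^{2}$) -- and your derivation, solving the quadratic inequality in $s=\sqrt{t}$ and then using $\sqrt{B^{2}+4C}\leq B+2\sqrt{C}$, is exactly the standard elementary argument behind that citation, valid for all $B,C>0$ as claimed.
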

\begin{proof}
See (Lemma 5.11, \cite{bousquet2002concentration})  with $t = x^2$.
\end{proof}
We can combine Theorem \ref{bousquetNonNegLocalRademacherBound} with Lemma \ref{rearrangingLemmaFromBousquet} to obtain the following bound.
\begin{corollary}\label{firstCorollaryFromBousquetNonNegLocalRademacherBound} Suppose that the assumptions of Theorem \ref{bousquetNonNegLocalRademacherBound} hold. For any $\delta \in (0,1)$, the following holds with probability at least $1-\delta$, for all $g \in \G$,
\begin{align*}
\E(g) \leq \hat{\E}_{\sample}(g)+90(\localRadFixedPoint+r_{0}) + 4\sqrt{ \hat{\E}_{\sample}(g)(\localRadFixedPoint+r_{0})}.
\end{align*}
\end{corollary}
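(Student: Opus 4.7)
}
The plan is to take the bound supplied by Theorem \ref{bousquetNonNegLocalRademacherBound} and convert the two $\sqrt{\E(g)}$ terms on the right-hand side into a single $\sqrt{\hat{\E}_{\sample}(g)(\localRadFixedPoint+r_0)}$ term by means of Lemma \ref{rearrangingLemmaFromBousquet}. All the work is algebraic rearrangement; no new probabilistic content is needed, so the result holds on the same event of probability at least $1-\delta$ as the event in Theorem \ref{bousquetNonNegLocalRademacherBound}.

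First I would fix $g \in \G$, set $t := \E(g)$, and group the bound of Theorem \ref{bousquetNonNegLocalRademacherBound} as $t \leq B\sqrt{t} + C$ with
\[
B := \sqrt{8\localRadFixedPoint}+\sqrt{4r_0},\qquad C := \hat{\E}_{\sample}(g) + 45\localRadFixedPoint + 20 r_0.
\]
Lemma \ref{rearrangingLemmaFromBousquet} then gives $t \leq B^2 + C + B\sqrt{C}$, and it remains to upper-bound each piece in terms of $\hat{\E}_{\sample}(g)$ and $\localRadFixedPoint+r_0$.

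For the $B^2$ piece I would use $(a+b)^2 \leq 2(a^2+b^2)$ to obtain $B^2 \leq 16\localRadFixedPoint + 8 r_0 \leq 16(\localRadFixedPoint+r_0)$, and also $B \leq 4\sqrt{\localRadFixedPoint+r_0}$. For the $B\sqrt{C}$ piece I would use sub-additivity of the square root, namely $\sqrt{C} \leq \sqrt{\hat{\E}_{\sample}(g)} + \sqrt{45\localRadFixedPoint + 20 r_0} \leq \sqrt{\hat{\E}_{\sample}(g)} + \sqrt{45(\localRadFixedPoint+r_0)}$, which combined with the bound on $B$ gives
\[
B\sqrt{C} \leq 4\sqrt{\hat{\E}_{\sample}(g)(\localRadFixedPoint+r_0)} + 4\sqrt{45}\,(\localRadFixedPoint+r_0).
\]
Adding everything up, the constants in front of $\localRadFixedPoint+r_0$ are $16 + 45 + 20 + 4\sqrt{45}$, which is well below $90$, so the result follows.

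The only subtle point is the slight loss incurred by merging the two distinct constants $45\localRadFixedPoint$ and $20 r_0$ into a common multiple of $(\localRadFixedPoint+r_0)$, and by splitting $\sqrt{a+b}$ as $\sqrt{a}+\sqrt{b}$; both are standard and the loss is comfortably absorbed by the numerical constant $90$. There is no real obstacle here beyond bookkeeping; the only thing to be careful about is that Lemma \ref{rearrangingLemmaFromBousquet} is applied pointwise for each $g$, which is legitimate because the event given by Theorem \ref{bousquetNonNegLocalRademacherBound} already holds uniformly over $g \in \G$.
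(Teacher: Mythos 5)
Your proposal follows essentially the same route as the paper: invoke Theorem \ref{bousquetNonNegLocalRademacherBound}, group the two $\sqrt{\E(g)}$ terms into $B\sqrt{\E(g)}$, apply Lemma \ref{rearrangingLemmaFromBousquet} with $C=\hat{\E}_{\sample}(g)+45\localRadFixedPoint+20r_{0}$, and then absorb constants (the paper only differs cosmetically, merging $\sqrt{8\localRadFixedPoint\E(g)}+\sqrt{4r_{0}\E(g)}\leq 4\sqrt{(\localRadFixedPoint+r_{0})\E(g)}$ \emph{before} invoking the lemma). The one flaw is your final constant check: $16+45+20+4\sqrt{45}\approx 107.8$, which is \emph{not} below $90$; the overcount comes from bounding $45\localRadFixedPoint+20r_{0}$ by $(45+20)(\localRadFixedPoint+r_{0})$. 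The fix is immediate and is what the paper implicitly does: since $20\leq 45$, use $45\localRadFixedPoint+20r_{0}\leq 45(\localRadFixedPoint+r_{0})$, so the coefficient of $(\localRadFixedPoint+r_{0})$ is $16+45+4\sqrt{45}<88<90$, and the corollary follows on the same probability-$(1-\delta)$ event, with the rest of your bookkeeping (the bounds $B^{2}\leq 16(\localRadFixedPoint+r_{0})$, $B\leq 4\sqrt{\localRadFixedPoint+r_{0}}$, and $\sqrt{C}\leq\sqrt{\hat{\E}_{\sample}(g)}+\sqrt{45(\localRadFixedPoint+r_{0})}$) being correct as written.
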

\begin{proof} By Theorem \ref{bousquetNonNegLocalRademacherBound}, with probability at least $1-\delta$, for all $g \in \G$,
\begin{align*}
\E(g) &\leq \hat{\E}_{\sample}(g)+45\localRadFixedPoint + \sqrt{8 \localRadFixedPoint \E(g)}+\sqrt{4r_{0}\cdot \E(g)}+20r_{0}\\
 &\leq \hat{\E}_{\sample}(g)+45\localRadFixedPoint+20r_{0} + 4\sqrt{(\localRadFixedPoint+r_{0})\cdot\E(g)}.
\end{align*}
Applying Lemma \ref{rearrangingLemmaFromBousquet} with $B=4\sqrt{(\localRadFixedPoint+r_{0})}$ and $C = \hat{\E}_{\sample}(g)+45\localRadFixedPoint+20r_{0}$ we have
\begin{align*}
\E(g) &\leq 16(\localRadFixedPoint+r_{0})+(\hat{\E}_{\sample}(g)+45\localRadFixedPoint+20r_{0})\\&+4\sqrt{(\localRadFixedPoint+r_{0})(\hat{\E}_{\sample}(g)+45\localRadFixedPoint+20r_{0})}\\
&\leq  \hat{\E}_{\sample}(g)+90(\localRadFixedPoint+r_{0}) + 4\sqrt{ \hat{\E}_{\sample}(g)(\localRadFixedPoint+r_{0})},
\end{align*}
which proves the corollary.
\end{proof}
Both Theorem \ref{bousquetNonNegLocalRademacherBound} and Corollary \ref{firstCorollaryFromBousquetNonNegLocalRademacherBound} are uniform upper bounds in terms of the empirical risk. We can deduce a performance bound on the empirical risk minimizer by combining with Bernstein's inequality.

\begin{theorem}[\cite{bernstein1924modification}]\label{bernsteinsIneq} Let $W_i, \cdots, W_i \in [0,\lossFunctionBound]$ be bounded independent random variables with mean $\mu = \E[W_i]$. Then with probability at least $1-\delta$ we have
\begin{align*}
\frac{1}{n}\sum_{i \in [n]}W_i &\leq \mu + \sqrt{\frac{2\mu\lossFunctionBound \log(1/\delta)}{n}}+\frac{\lossFunctionBound \log(1/\delta)}{n}\\
&\leq 2\mu+\frac{3\lossFunctionBound \log(1/\delta)}{2n}.
\end{align*}
\end{theorem}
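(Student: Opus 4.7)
The plan is to apply the classical Cram\'er--Chernoff method to the centred sum $S_n = \sum_{i \in [n]}(W_i - \mu)$. The crucial observation is that, since each $W_i$ takes values in $[0,\lossFunctionBound]$, we have $W_i^2 \leq \lossFunctionBound W_i$ almost surely and hence the variance bound $\sigma^2 := \mathrm{Var}(W_i) \leq \E[W_i^2] \leq \lossFunctionBound \mu$. This is exactly what distinguishes a Bernstein-style estimate from a Hoeffding bound: the deviation will scale with $\sqrt{\mu}$ rather than the crude worst-case $\sqrt{\lossFunctionBound}$, which is what we want since $\mu$ will eventually be thought of as small.

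First I would establish the standard MGF estimate for a bounded centred variable: for every $\lambda \in (0, 3/\lossFunctionBound)$,
\[
\log \E[\exp(\lambda (W_i - \mu))] \leq \frac{\lambda^2 \sigma^2}{2(1 - \lambda \lossFunctionBound/3)}.
\]
This follows by expanding the exponential in powers of $\lambda$, bounding $\E[(W_i-\mu)^k] \leq \sigma^2 \lossFunctionBound^{k-2} k!/2$ for $k \geq 2$ (a consequence of $|W_i - \mu| \leq \lossFunctionBound$ together with $\E[(W_i-\mu)^2] = \sigma^2$), and summing the resulting geometric series. By independence the MGF of $\lambda S_n$ factorises, so applying Markov's inequality to $\exp(\lambda S_n)$ yields
\[
\Prob\!\left(\frac{1}{n}\sum_{i \in [n]} W_i - \mu > t\right) \leq \exp\!\left(-\lambda n t + \frac{n \lambda^2 \sigma^2}{2(1-\lambda \lossFunctionBound/3)}\right).
\]
Optimising over $\lambda \in (0, 3/\lossFunctionBound)$ and inverting in $t$ at confidence $\delta$ gives the usual Bernstein tail bound $t \leq \sqrt{2\sigma^2 \log(1/\delta)/n} + \lossFunctionBound \log(1/\delta)/(3n)$. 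Substituting $\sigma^2 \leq \lossFunctionBound \mu$ and replacing the harmless $1/3$ by $1$ delivers the first displayed inequality exactly as stated.

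For the second inequality I would invoke AM--GM in the form $\sqrt{ab} \leq (a+b)/2$ with $a = 2\mu$ and $b = \lossFunctionBound \log(1/\delta)/n$, which gives
\[
\sqrt{\frac{2\mu \lossFunctionBound \log(1/\delta)}{n}} \leq \mu + \frac{\lossFunctionBound \log(1/\delta)}{2n}.
\]
Plugging this into the first inequality yields $\mu + \sqrt{2\mu \lossFunctionBound \log(1/\delta)/n} + \lossFunctionBound \log(1/\delta)/n \leq 2\mu + (3/2)\lossFunctionBound \log(1/\delta)/n$, which matches the stated simplification. There is no real obstacle here --- Bernstein's inequality is textbook material and the only mildly delicate step is the bookkeeping around the constants $1/3$ and $3/2$; one could equivalently just cite Theorem~2.10 of \cite{boucheron2013concentration} and finish via the same AM--GM step.
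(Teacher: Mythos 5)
Your proposal is correct, but it takes a more self-contained route than the paper, whose ``proof'' of this statement is simply a citation of Theorem~2.10 in \cite{boucheron2013concentration} (the second displayed inequality being the same AM--GM step you perform, left implicit there). Your Cram\'er--Chernoff derivation with the variance proxy $\sigma^2 \leq \lossFunctionBound\mu$ and the final AM--GM bound $\sqrt{2\mu\lossFunctionBound\log(1/\delta)/n} \leq \mu + \lossFunctionBound\log(1/\delta)/(2n)$ reproduces exactly the two stated inequalities, so nothing is missing; what the citation buys is brevity, while your argument buys transparency about where the $\sqrt{\mu}$ scaling comes from. One small inaccuracy worth flagging: the MGF bound with denominator $1-\lambda\lossFunctionBound/3$ does \emph{not} follow from the moment estimate $\E[|W_i-\mu|^k] \leq \tfrac{k!}{2}\sigma^2\lossFunctionBound^{k-2}$ that you invoke --- that estimate and the geometric series give the denominator $1-\lambda\lossFunctionBound$ on the range $\lambda \in (0,1/\lossFunctionBound)$, hence a tail of the form $\sqrt{2\sigma^2\log(1/\delta)/n} + \lossFunctionBound\log(1/\delta)/n$; the factor $1/3$ requires the Bennett-style argument comparing $e^x-1-x$ with $\tfrac{x^2/2}{1-x/3}$. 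Since the theorem as stated uses the weaker constant $1$ in front of $\lossFunctionBound\log(1/\delta)/n$, your weaker MGF bound already suffices and the slip is harmless, but as written the intermediate claim and its justification do not match.
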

\begin{proof}
See Theorem 2.10 from \cite{boucheron2013concentration}.
\end{proof}

\begin{corollary}\label{secondCorollaryFromBousquetNonNegLocalRademacherBound} Suppose that the assumptions of Theorem \ref{bousquetNonNegLocalRademacherBound} hold and choose $g^* \in \argmin_{g \in \G}\{\E(g)\}$. Given $\zSequenceSizeN \in \Z^n$ we choose $\hat{g}_{\zSequenceSizeN} \in \argmin_{g \in \G}\{\hat{\E}_{\zSequenceSizeN}(g)\}$. For any $\delta \in (0,1)$, the following holds with probability at least $1-2\delta$
\begin{align*}
\E(\hat{g}_{\sample}) \leq \E(g^*)+9\sqrt{\E(g^*) \cdot \left(\localRadFixedPoint+r_0\right)}+100\left(\localRadFixedPoint+r_0\right).
\end{align*}
\end{corollary}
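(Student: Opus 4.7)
The plan is to chain together the uniform bound from Corollary \ref{firstCorollaryFromBousquetNonNegLocalRademacherBound}, applied at the empirical risk minimizer $\hat{g}_{\sample}$, with Bernstein's inequality (Theorem \ref{bernsteinsIneq}), applied to the single fixed function $g^*$. Corollary \ref{firstCorollaryFromBousquetNonNegLocalRademacherBound} already converts ``true risk in terms of empirical risk'' into the right shape $\E(g)\leq \hat{\E}_{\sample}(g)+4\sqrt{\hat{\E}_{\sample}(g)(\hat{r}_n+r_0)}+90(\hat{r}_n+r_0)$, so the only remaining task is to replace the empirical risk of $g^*$ by its true risk $\E(g^*)$, with the right variance-dependent correction. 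Bernstein's inequality delivers exactly this.

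In detail, let $E_1$ be the event of probability at least $1-\delta$ on which the conclusion of Corollary \ref{firstCorollaryFromBousquetNonNegLocalRademacherBound} holds for all $g \in \G$ simultaneously. Instantiating at $g=\hat{g}_{\sample}$ and using the defining property $\hat{\E}_{\sample}(\hat{g}_{\sample})\leq \hat{\E}_{\sample}(g^*)$ of the empirical risk minimizer gives, on $E_1$,
\begin{align*}
\E(\hat{g}_{\sample})\leq \hat{\E}_{\sample}(g^*)+90(\hat{r}_n+r_0)+4\sqrt{\hat{\E}_{\sample}(g^*)(\hat{r}_n+r_0)}.
\end{align*}
Next I apply the first (variance-sensitive) form of Theorem \ref{bernsteinsIneq} to the i.i.d.\ bounded variables $g^*(Z_i)\in[0,\lossFunctionBound]$ with mean $\E(g^*)$: on an event $E_2$ of probability at least $1-\delta$, and using $\lossFunctionBound\log(1/\delta)/n\leq r_0$, one has
\begin{align*}
\hat{\E}_{\sample}(g^*)\leq \E(g^*)+\sqrt{2r_0\E(g^*)}+r_0.
\end{align*}

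On $E_1\cap E_2$, which has probability at least $1-2\delta$ by a union bound, I substitute this into the previous display and simplify. Setting $R:=\hat{r}_n+r_0\geq r_0$, I split $\sqrt{\hat{\E}_{\sample}(g^*)\cdot R}$ via the elementary inequality $\sqrt{a+b+c}\leq \sqrt{a}+\sqrt{b}+\sqrt{c}$ applied to the three summands in the Bernstein bound, and use $(2r_0\E(g^*))^{1/4}\leq \tfrac{1}{2}\sqrt{\E(g^*)}+\tfrac{\sqrt{2}}{2}\sqrt{r_0}$ together with $\sqrt{r_0}\leq\sqrt{R}$ to absorb everything into a leading $\sqrt{\E(g^*)\cdot R}$ term and a residual $R$ term. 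Collecting constants from the $4$ in Corollary \ref{firstCorollaryFromBousquetNonNegLocalRademacherBound}, the $\sqrt{2}$ from Bernstein, and the splits yields a bound of the shape $\E(g^*)+C_a\sqrt{\E(g^*)R}+C_b R$ with $C_a\leq 9$ and $C_b\leq 100$, which is exactly the stated inequality.

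The only real obstacle is bookkeeping of the numerical constants in the final algebraic simplification; there is no new probabilistic content beyond the union bound of Corollary \ref{firstCorollaryFromBousquetNonNegLocalRademacherBound} and Theorem \ref{bernsteinsIneq}. The structural point worth emphasizing is that Bernstein, rather than Hoeffding, is essential here: a Hoeffding-style bound on $\hat{\E}_{\sample}(g^*)-\E(g^*)$ would contribute an additive $\sqrt{r_0}$ term uncoupled from $\E(g^*)$, which after being fed into $\sqrt{\hat{\E}_{\sample}(g^*)\cdot R}$ would fail to preserve the fast-rate character $\sqrt{\E(g^*)\cdot R}$ that appears in the statement.
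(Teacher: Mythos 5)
Your proposal is correct and follows essentially the same route as the paper: apply Corollary \ref{firstCorollaryFromBousquetNonNegLocalRademacherBound} at $\hat{g}_{\sample}$, use $\hat{\E}_{\sample}(\hat{g}_{\sample})\leq \hat{\E}_{\sample}(g^*)$, control $\hat{\E}_{\sample}(g^*)$ by Bernstein, and combine via a union bound. The only difference is cosmetic bookkeeping: the paper inserts the cruder Bernstein form $\hat{\E}_{\sample}(g^*)\leq 2\E(g^*)+3r_0$ inside the square root, while you keep the variance-sensitive form and split with $\sqrt{a+b+c}\leq\sqrt{a}+\sqrt{b}+\sqrt{c}$ plus an AM--GM step; both yield the stated constants $9$ and $100$.
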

\begin{proof} By Corollary \ref{firstCorollaryFromBousquetNonNegLocalRademacherBound} the following holds with probability at least $1-\delta$ over $\sample$,
\begin{align*}
\E(\hat{g}_{\sample}) \leq \hat{\E}_{\sample}(\hat{g}_{\sample})+90(\localRadFixedPoint+r_{0}) + 4\sqrt{ \hat{\E}_{\sample}(\hat{g}_{\sample})(\localRadFixedPoint+r_{0})}.
\end{align*}
Morever, by the definition of $\hat{g}_{\sample}$ combined with Bernstein's inequality (Theorem \ref{bernsteinsIneq}), with probability at least $1-\delta$,
\begin{align*}
\hat{\E}_{\sample}(\hat{g}_{\sample})& \leq \hat{\E}_{\sample}({g}^*) \leq \E(g^*)+\sqrt{2\E(g^*)\cdot r_0}+r_0\\
& \leq 2\E(g^*)+3r_0.
\end{align*}
By the union bound we can combine the above two inequalities to show that with probability at least $1-2\delta$ we have,
\begin{align*}
\E(\hat{g}_{\sample}) &\leq \left(\E(g^*)+\sqrt{2\E(g^*)\cdot r_0}+r_0\right)+90(\localRadFixedPoint+r_{0})\\&\hspace{5mm}+ 4\sqrt{ \left(  2\E(g^*)+3r_0\right)\left(\localRadFixedPoint+r_{0}\right)}\\
& \leq \E(g^*)+9\sqrt{\E(g^*) \cdot \left(\localRadFixedPoint+r_0\right)}+100\left(\localRadFixedPoint+r_0\right).
\end{align*}
\end{proof}

We can now complete the proof of Theorem \ref{optimisticRademacherBoundForMultioutputPrediction}.

\begin{proof}[Proof of Theorem \ref{optimisticRademacherBoundForMultioutputPrediction}]
First let $\G = \lossFunction \circ \F = \{(x,y)\mapsto \lossFunction(f(x),y): f \in \F\}$. Note that for $g = \lossFunction \circ f$ with $f \in \F$ and $Z= (X,Y) \sim \probDistribution$ we have $\E_Z(g) = \risk(f,\probDistribution)$ and given $\zSequenceSizeN \in (\X\times \Y)^n$ we have $\hat{\E}_{\zSequenceSizeN}(g) = \empiricalRisk(f)$. Note also that under this correspondence $\lossFunction\circ\F|^r_{\xySequenceSizeN}=\{g \in \G: \hat{\E}_{\zSequenceSizeN}(g)\leq r\}$. Now define $\localRadFunc:[0,\infty) \rightarrow [0,\infty)$ by 
\begin{align*}
\localRadFunc(r)= \selfLipschitzConstant r^{\selfLipschitzExponent}  \left( 2^9 \sqrt{\numClasses} \cdot  \log^{3/2} \left(e\hypothesisClassBound n\numClasses\right)\cdot \rademacherComplexity_{n\numClasses}(\Pi \circ \F) +n^{-1/2}\right).
\end{align*}
Then $\localRadFunc$ is non-negative, non-decreasing and $\localRadFunc(r)/\sqrt{r}$ is non-increasing, since $\selfLipschitzExponent \in [0,1/2]$. Moreover, by Proposition \ref{localRademacherComplexityProp}, for each $\bm{z}  \in (\X\times \Y)^n$,
\begin{align*}
\hat{\rademacherComplexity}_{\zSequenceSizeN}\left(\{g \in \G: \hat{\E}_{\zSequenceSizeN}(g)\leq r\}\right) = \hat{\rademacherComplexity}_{\zSequenceSizeN}\left(\lossFunction\circ\F|^r_{\xySequenceSizeN}\right) \leq \localRadFunc(r).
\end{align*}
Note also that $\localRadFixedPoint:= \left(\selfLipschitzConstant  \left( 2^9 \sqrt{\numClasses} \cdot  \log^{3/2} \left(e\hypothesisClassBound n\numClasses\right)\cdot \rademacherComplexity_{n\numClasses}(\Pi \circ \F) +n^{-1/2}\right)\right)^{\frac{1}{1-\selfLipschitzExponent}}$ is the largest solution to $\localRadFunc(r)=r$. Hence, the two bounds in Theorem \ref{optimisticRademacherBoundForMultioutputPrediction} follow from Corollaries \ref{firstCorollaryFromBousquetNonNegLocalRademacherBound} and \ref{secondCorollaryFromBousquetNonNegLocalRademacherBound}, respectively. This completes the proof of Theorem \ref{optimisticRademacherBoundForMultioutputPrediction}.
\end{proof}

For completeness we also give a proof of Theorem \ref{additiveRademacherBoundForMultioutputPrediction}, which may be viewed as a mild generalization of Theorem 6 from \cite{lei2019data}. We use the following well known result.
\begin{theorem}[\cite{bartlett2002rademacher}]\label{standardRademacherBoundBartlett2002}  Suppose we have a measurable space $\Z$ and a function class $\G \subseteq \measurableMaps(\Z,[0,\lossFunctionBound])$. For each $\zSequenceSizeN \in \Z^n$ and $g \in \G$ we let $\hat{\E}_{\zSequenceSizeN}(g) = n^{-1} \cdot \sum_{i \in [n]}g(z_i)$. Suppose that $Z$ is a random variable with distribution $\probDistribution$ is a distribution on $\Z$ and let $\sample = \{Z_i\}_{i \in [n]} \in \Z^n$ be an i.i.d. which each $Z_i \sim \probDistribution$, an independent copy of $Z$. For any $\delta \in (0,1)$, the following holds with probability at least $1-\delta$, for all $g \in \G$,
\begin{align*}
\left|\E_Z(g)-\hat{\E}_{\zSequenceSizeN}(g)\right|& \leq 2 \E_{\sample}\left[ \hat{\rademacherComplexity}_{\sample}\left(\G\right)\right]+\sqrt{\frac{\log(2/\delta)}{2n}}.
\end{align*}
\end{theorem}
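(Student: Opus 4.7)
The plan is to use the classical symmetrization plus bounded-differences approach. First I would define the one-sided supremum deviation
\[
\Phi(\sample) := \sup_{g \in \G}\bigl(\E_Z[g(Z)] - \hat{\E}_{\sample}(g)\bigr)
\]
and check that, since every $g \in \G$ takes values in $[0,\lossFunctionBound]$, replacing a single entry $Z_i$ by an independent copy $Z_i'$ changes $\Phi$ by at most $\lossFunctionBound/n$. Applying McDiarmid's bounded-differences inequality then yields $\Phi(\sample) \leq \E[\Phi(\sample)] + \lossFunctionBound\sqrt{\log(2/\delta)/(2n)}$ with probability at least $1-\delta/2$.

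Next I would control $\E[\Phi(\sample)]$ by the textbook symmetrization trick. Introducing a ghost sample $\sample' = \{Z_i'\}_{i \in [n]}$ of i.i.d. copies of $Z$ that is independent of $\sample$, I would replace $\E_Z g$ by $\E_{\sample'}\hat{\E}_{\sample'}(g)$ and push the expectation outside the supremum via Jensen's inequality to obtain
\[
\E[\Phi(\sample)] \leq \E_{\sample,\sample'}\Bigl[\sup_{g \in \G}\frac{1}{n}\sum_{i \in [n]}\bigl(g(Z_i')-g(Z_i)\bigr)\Bigr].
\]
Then I would introduce Rademacher signs $\sigma_i \in \{-1,+1\}$; since $(Z_i,Z_i')$ and $(Z_i',Z_i)$ share the same joint law, each difference $g(Z_i')-g(Z_i)$ has the same distribution as $\sigma_i(g(Z_i')-g(Z_i))$, so by a triangle inequality on the supremum the right-hand side is at most $2\,\E_{\sample}[\hat{\rademacherComplexity}_{\sample}(\G)]$. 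The usual measurability pedantry about $\sup$ is absorbed by taking finite subclasses exactly as in Definition \ref{rademacherComplexityDef}.

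Combining the two displays gives the one-sided bound. The other direction, $\sup_g(\hat{\E}_{\sample}(g)-\E_Z g)$, is treated by applying the identical argument to the class $-\G$, whose empirical Rademacher complexity coincides with that of $\G$ because the Rademacher variables are symmetric. A union bound at confidence level $\delta/2$ on each side then produces the two-sided inequality in the theorem.

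There is really no obstacle of substance here: the argument is purely the textbook Bartlett--Mendelson proof, and the only care required is bookkeeping the constants so that McDiarmid and the factor $2$ from symmetrization combine into precisely the form stated. A subtle point worth flagging is the absence of an explicit $\lossFunctionBound$ factor in front of $\sqrt{\log(2/\delta)/(2n)}$ in the stated bound, which is consistent with the scaling $\lossFunctionBound\leq 1$ implicit in the applications of this result within the paper (e.g.\ after composition with a self-bounding Lipschitz loss bounded by a problem-specific constant).
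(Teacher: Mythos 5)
Your proof is correct and is exactly the standard symmetrization-plus-McDiarmid argument of Bartlett and Mendelson; the paper does not reprove this result but simply cites it, so your route coincides with the intended one. Your remark about the missing $\lossFunctionBound$ factor is well taken: since the bounded-differences constant is $\lossFunctionBound/n$, the deviation term should read $\lossFunctionBound\sqrt{\log(2/\delta)/(2n)}$, and its absence in the stated theorem appears to be a typo (the factor $\lossFunctionBound$ is indeed reinstated when the result is used to prove Theorem \ref{additiveRademacherBoundForMultioutputPrediction}).
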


\begin{proof}[Proof of Theorem \ref{additiveRademacherBoundForMultioutputPrediction}] With the correspondence introduced in the proof of Theorem \ref{optimisticRademacherBoundForMultioutputPrediction},  Theorem \ref{standardRademacherBoundBartlett2002} implies that with probability at least $1-\delta$ over a sample $\sample = \{(X_i,Y_i)\}_{i \in [n]}$ with $(X_i,Y_i)\sim \probDistribution$ the following holds for all $f \in \F$,
\begin{align*}
\left|\risk(f,\probDistribution)-\empiricalRisk(f,\sample)\right|& \leq 2 \E_{\sample}\left[ \hat{\rademacherComplexity}_{\sample}\left(\lossFunction \circ \F\right)\right]+\sqrt{\frac{\log(2/\delta)}{2n}}.
\end{align*}
Hence, the result follows from Proposition \ref{localRademacherComplexityProp} by taking $r=\lossFunctionBound$ and $\selfLipschitzExponent=0$.
\end{proof}

\section{The self-bounding Lipschitz condition}\label{selfBoundingLipAppendix}
The proof of Lemma \ref{sufficiencyConditionForSelfBoundingLipschitz} starts with the following lemma.

\begin{lemma}\label{lemmaForProofOfSufficiencyConditionForSelfBoundingLipschitz} Suppose that $\varphi:\R \rightarrow [0,\infty)$ is a non-negative differentiable function satisfying:
\begin{enumerate}
\item The derivative $\varphi'(t)$ is non-negative on $[0,\infty)$;
\item $\forall t_0,t_1 >0$, $|\varphi'(t_1)-\varphi'(t_0)|\leq \left(\frac{\selfLipschitzConstant}{2}\right)^{\frac{1}{1-\selfLipschitzExponent}} \cdot |t_1-t_0|^{\frac{\selfLipschitzExponent}{1-\selfLipschitzExponent}}$.
\end{enumerate} 
Then $\forall t>0$, $\varphi'(t) \leq \selfLipschitzConstant \cdot \varphi(t)^{\selfLipschitzExponent}$. Moreover, for all $t>0$,  $\varphi(t)-\varphi(0)\leq \selfLipschitzConstant \cdot \varphi(t)^{\selfLipschitzExponent} \cdot t$.
\end{lemma}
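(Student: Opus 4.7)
The plan is to derive the pointwise bound $\varphi'(t) \leq \selfLipschitzConstant\cdot\varphi(t)^{\selfLipschitzExponent}$ by a self-bounding argument that generalises the familiar case $\selfLipschitzExponent = 1/2$ (where condition 2 reduces to Lipschitz continuity of $\varphi'$), and then obtain the integral form $\varphi(t)-\varphi(0) \leq \selfLipschitzConstant\cdot\varphi(t)^{\selfLipschitzExponent}\cdot t$ as an immediate corollary via the monotonicity afforded by condition 1.

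Set $M = (\selfLipschitzConstant/2)^{1/(1-\selfLipschitzExponent)}$ and $\alpha = \selfLipschitzExponent/(1-\selfLipschitzExponent)$, so condition 2 reads $|\varphi'(t_1)-\varphi'(t_0)| \leq M|t_1-t_0|^{\alpha}$, and note the identity $1+\alpha = 1/(1-\selfLipschitzExponent)$. Fix $t > 0$ and any $s > 0$. The H\"older bound on $\varphi'$ gives $\varphi'(t-r) \geq \varphi'(t) - M r^{\alpha}$ for $r \in [0,s]$; integrating over $r$ yields
\begin{align*}
\varphi(t) - \varphi(t-s) \;=\; \int_0^s \varphi'(t-r)\,dr \;\geq\; \varphi'(t)\,s - \frac{M s^{1+\alpha}}{1+\alpha} \;=\; \varphi'(t)\,s - (1-\selfLipschitzExponent)\,M\,s^{1/(1-\selfLipschitzExponent)}.
\end{align*}
Since $\varphi \geq 0$ everywhere, $\varphi(t-s) \geq 0$, so $\varphi(t) \geq \varphi'(t)\,s - (1-\selfLipschitzExponent)\,M\,s^{1/(1-\selfLipschitzExponent)}$ for every $s > 0$.

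The next step is to optimise the right-hand side over $s$. A routine calculation shows the maximiser is $s^* = (\varphi'(t)/M)^{(1-\selfLipschitzExponent)/\selfLipschitzExponent}$, and substituting back produces $\varphi(t) \geq \selfLipschitzExponent\cdot M^{-(1-\selfLipschitzExponent)/\selfLipschitzExponent}\cdot\varphi'(t)^{1/\selfLipschitzExponent}$. Rearranging and using $M^{1-\selfLipschitzExponent} = \selfLipschitzConstant/2$ gives
\begin{align*}
\varphi'(t) \;\leq\; \selfLipschitzExponent^{-\selfLipschitzExponent}\cdot \frac{\selfLipschitzConstant}{2}\cdot \varphi(t)^{\selfLipschitzExponent}.
\end{align*}
Since the map $\selfLipschitzExponent\mapsto \selfLipschitzExponent^{-\selfLipschitzExponent}$ attains its maximum on $(0,1)$ at $\selfLipschitzExponent = 1/e$ with value $e^{1/e} < 2$, the prefactor $\selfLipschitzExponent^{-\selfLipschitzExponent}/2$ is at most $1$, and the first conclusion $\varphi'(t) \leq \selfLipschitzConstant\cdot\varphi(t)^{\selfLipschitzExponent}$ follows. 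The degenerate cases ($\varphi'(t)=0$, or $\selfLipschitzExponent=0$ where one uses $0^0=1$ and argues directly from the bound $\varphi'(t) \leq M = \selfLipschitzConstant/2$) are handled by inspection.

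For the second inequality, condition 1 makes $\varphi$ non-decreasing on $[0,\infty)$, so $\varphi(s)^{\selfLipschitzExponent} \leq \varphi(t)^{\selfLipschitzExponent}$ for all $s \in [0,t]$. Combining this with the pointwise bound just proved,
\begin{align*}
\varphi(t) - \varphi(0) \;=\; \int_0^t \varphi'(s)\,ds \;\leq\; \selfLipschitzConstant\int_0^t \varphi(s)^{\selfLipschitzExponent}\,ds \;\leq\; \selfLipschitzConstant\cdot\varphi(t)^{\selfLipschitzExponent}\cdot t,
\end{align*}
as desired. The only genuinely non-routine step is the optimisation in $s$, whose main subtlety is the numerical check $\selfLipschitzExponent^{-\selfLipschitzExponent} \leq 2$ needed to absorb the constant into $\selfLipschitzConstant$; everything else is bookkeeping around the exponents $\alpha$, $1/(1-\selfLipschitzExponent)$, and $(1-\selfLipschitzExponent)/\selfLipschitzExponent$.
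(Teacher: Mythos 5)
Your proof is correct and follows essentially the same strategy as the paper's: both lower-bound $\varphi(t)-\varphi(t-s)$ using the H\"older condition on $\varphi'$, invoke non-negativity of $\varphi$ at the displaced point, and take a step of size on the order of $(\varphi'(t)/\selfLipschitzConstant)^{(1-\selfLipschitzExponent)/\selfLipschitzExponent}$ -- the paper via the mean value theorem with a fixed $\Delta$, you via integrating and optimising over $s$, which even yields the slightly sharper constant $\selfLipschitzExponent^{-\selfLipschitzExponent}\selfLipschitzConstant/2$. The second inequality is likewise obtained in both arguments from the monotonicity of $\varphi$ on $[0,\infty)$ (mean value theorem in the paper, integration in yours), so the two proofs differ only in bookkeeping.
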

\begin{proof} Fix $t>0$ and take $\Delta = 2 \selfLipschitzConstant^{-\frac{1}{\selfLipschitzExponent}}\cdot  \varphi'(t)^{\frac{1-\selfLipschitzExponent}{\selfLipschitzExponent}}$, which is positive by the first condition. By the non-negativity of $\varphi$ and the mean value theorem there exists some $s \in (t-\Delta,t)$
\begin{align*}
0 &\leq \varphi(t-\Delta) \leq \varphi(t)-\varphi'(s)\cdot \Delta  \\
& \leq \varphi(t) - \varphi'(t) \cdot \Delta + |\varphi'(s)-\varphi'(t)|\cdot \Delta\\
& \leq \varphi(t) - \varphi'(t) \cdot \Delta + \left(  (\selfLipschitzConstant/2)^{\frac{1}{1-\selfLipschitzExponent}} \cdot \Delta^{\frac{\selfLipschitzExponent}{1-\selfLipschitzExponent}}\right)\cdot \Delta\\
& \leq \varphi(t) - \varphi'(t) \cdot \Delta +  (\selfLipschitzConstant \cdot \Delta/2)^{\frac{1}{1-\selfLipschitzExponent}}\\
& \leq \varphi(t) - 2(\varphi'(t)/\selfLipschitzConstant)^{\frac{1}{\selfLipschitzExponent}}+ (\varphi'(t)/\selfLipschitzConstant)^{\frac{1}{\selfLipschitzExponent}}\\ &= \varphi(t) -(\varphi'(t)/\selfLipschitzConstant)^{\frac{1}{\selfLipschitzExponent}},
\end{align*}
where the fourth inequality follows from the second condition. Rearranging completes the proof of the first part of the lemma.

To prove the second part of the lemma we apply the mean value theorem combined with the first part of the lemma to obtain for some $s \in (0,t)$,
\begin{align*}
 \varphi(t)-\varphi(0)&= \varphi'(s) \cdot t \leq \left( \selfLipschitzConstant \cdot \varphi(s)^{\selfLipschitzExponent}\right) \cdot t\leq  \selfLipschitzConstant \cdot \varphi(t)^{\selfLipschitzExponent} \cdot t,
\end{align*}
where we used the non-negativity of $\varphi'$ on $[0,\infty)$ to ensure that $\varphi(s) \leq \varphi(t)$. This completes the proof of the lemma.
\end{proof}

\begin{proof}[Proof of Lemma \ref{sufficiencyConditionForSelfBoundingLipschitz}] Take $u,v \in \actionSpace$ and $y \in \Y$. Without loss of generality we assume that $\lossFunction(u,y) \leq \lossFunction(v,y)$ and let $\varphi_{u,y}$ be a function satisfying the conditions specified in the statement of the lemma. By combining the first two conditions with Lemma \ref{lemmaForProofOfSufficiencyConditionForSelfBoundingLipschitz} we see that $\varphi_{u,y}(t) - \varphi_{u,y}(0)   \leq  \selfLipschitzConstant \cdot \varphi_{u,y}(t)^{\selfLipschitzExponent}\cdot t$. Hence, by dividing through by $\varphi_{u,y}(t)^{\selfLipschitzExponent}$ and applying $\lossFunction(v,y) \leq \varphi_{u,y}(t)$ twice we have,
\begin{align*}
\lossFunction(v,y)^{1-\selfLipschitzExponent}- \selfLipschitzConstant \cdot t& \leq \varphi_{u,y}(t)^{1-\selfLipschitzExponent} - \selfLipschitzConstant \cdot t \\ &\leq \varphi_{u,y}(0)\cdot \varphi_{u,y}(t)^{-\selfLipschitzExponent} \\ &\leq \varphi_{u,y}(0)\cdot \lossFunction(v,y)^{-\selfLipschitzExponent}\\ &= \lossFunction(u,y)\cdot \lossFunction(v,y)^{-\selfLipschitzExponent}.
\end{align*}
Multiplying by $\lossFunction(v,y)^{\selfLipschitzExponent}$ and rearranging we have $\lossFunction(v,y)-\lossFunction(u,y) \leq \selfLipschitzConstant \cdot \lossFunction(v,y)^{\selfLipschitzExponent}$. Since $\lossFunction(u,y) \leq \lossFunction(v,y)$ this completes the proof of the lemma.
\end{proof}

\begin{proof}[Proof of Lemma \ref{cutOffOfSelfBoundingLipschitzLossIsAlsoSelfBoundingLipschitz}]  Take $u,v \in \actionSpace$ and $y \in \Y$. Without loss of generality we assume that $\tilde{\lossFunction}(u,y) \leq \tilde{\lossFunction}(v,y)$, so it suffices to show that 
\begin{align}\label{toProveInCutOffOfSelfBoundingLipschitzLossIsAlsoSelfBoundingLipschitz}\tilde{\lossFunction}(v,y)-\tilde{\lossFunction}(u,y) \leq \selfLipschitzConstant \cdot \tilde{\lossFunction}(v,y)^{\selfLipschitzExponent} \cdot \|u-v\|_{\infty}.
\end{align}
If $\lossFunction(u,y) \geq \lossFunctionBound$ then $\tilde{\lossFunction}(v,y)=\tilde{\lossFunction}(u,y)=\lossFunctionBound$, so (\ref{toProveInCutOffOfSelfBoundingLipschitzLossIsAlsoSelfBoundingLipschitz}) clearly holds. Thus, we can assume ${\lossFunction}(u,y)< \lossFunctionBound$, so  $\tilde{\lossFunction}(u,y)={\lossFunction}(u,y)$. By the $(\selfLipschitzConstant,\selfLipschitzExponent)$ self-bounding Lipschitz condition for $\lossFunction$ we have
\begin{align*}{\lossFunction}(v,y)-\tilde{\lossFunction}(u,y)&={\lossFunction}(v,y)-{\lossFunction}(u,y)\\ &\leq \selfLipschitzConstant \cdot {\lossFunction}(v,y)^{\selfLipschitzExponent} \cdot \|u-v\|_{\infty}.
\end{align*}
Equivalently, we have
\begin{align*}
{\lossFunction}(v,y)^{1-\selfLipschitzExponent}- \selfLipschitzConstant  \cdot \|u-v\|_{\infty} \leq \tilde{\lossFunction}(u,y) \cdot  {\lossFunction}(v,y)^{-\selfLipschitzExponent}.    
\end{align*}
Since $\tilde{\lossFunction}(v,y)\leq{\lossFunction}(v,y)$, we deduce
\begin{align*}
\tilde{\lossFunction}(v,y)^{1-\selfLipschitzExponent}- \selfLipschitzConstant  \cdot \|u-v\|_{\infty} \leq \tilde{\lossFunction}(u,y) \cdot  \tilde{\lossFunction}(v,y)^{-\selfLipschitzExponent}.    
\end{align*}
Rearranging gives (\ref{toProveInCutOffOfSelfBoundingLipschitzLossIsAlsoSelfBoundingLipschitz}) and completes the proof of the lemma.
\end{proof}

The following result shows an example application of Lemma \ref{sufficiencyConditionForSelfBoundingLipschitz}. We may verify the self-bounding Lipschitz condition for other loss functions in a similar manner. 

\begin{prop}\label{checkingTheSelfBoundingLipschitzConditionForMultinomialLogisticLoss}  Take $\Y = [\numClasses]$ and define the multinomial logistic loss $\lossFunction:\actionSpace \times \Y \rightarrow [0,\infty)$ is defined by \[\lossFunction(u,y) = \log (\sum_{j \in [\numClasses]} \exp(u_j-u_y)),\]
where $u = (u_j)_{j \in [\numClasses]}$ and $y \in [\numClasses]$. It follows that $\lossFunction$ is  $(\selfLipschitzConstant,\selfLipschitzExponent)$-\emph{self-bounding Lipschitz} with $\selfLipschitzConstant = 1$ and $\selfLipschitzExponent = 1/2$.
\end{prop}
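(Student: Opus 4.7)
The plan is to apply the sufficient condition (Lemma~\ref{sufficiencyConditionForSelfBoundingLipschitz}) with the choice $\varphi_{u,y}(t) := \log(1 + A_{u,y} e^{2t})$, where $A_{u,y} := \sum_{j \in [\numClasses] \setminus \{y\}} e^{u_j - u_y}$, exactly as flagged in Example~\ref{multiNomialLoss}. Condition (1) is immediate: $\varphi_{u,y}(0) = \log(1 + A_{u,y}) = \log \sum_{j} e^{u_j - u_y} = \lossFunction(u,y)$. For condition (2), observe that for any $v \in \actionSpace$ with $\|u-v\|_\infty \leq t$ one has $v_j - v_y \leq (u_j - u_y) + 2t$ for $j \neq y$ and $v_y - v_y = 0$, so $\lossFunction(v,y) = \log \sum_j e^{v_j - v_y} \leq \log(1 + \sum_{j \neq y} e^{u_j - u_y + 2t}) = \varphi_{u,y}(t)$. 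Condition (3) follows from $\varphi'_{u,y}(t) = 2 A_{u,y} e^{2t}/(1 + A_{u,y} e^{2t}) \geq 0$, and $\varphi_{u,y}$ is non-negative on all of $\R$ because $1 + A_{u,y} e^{2t} \geq 1$.

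The crux is condition (4). Differentiating once more,
\[
\varphi''_{u,y}(t) \;=\; \frac{4 A_{u,y} e^{2t}}{(1 + A_{u,y} e^{2t})^2},
\]
and by the AM--GM inequality $(1+x)^2 \geq 4x$ for $x \geq 0$, so $\sup_{t \in \R} \varphi''_{u,y}(t) \leq 1$. Hence $\varphi'_{u,y}$ is $1$-Lipschitz on $\R$. For $\selfLipschitzExponent = 1/2$, condition (4) reads $|\varphi'(t_1) - \varphi'(t_0)| \leq (\selfLipschitzConstant/2)^{2}\,|t_1 - t_0|$, so the above computation verifies it precisely when $(\selfLipschitzConstant/2)^2 \geq 1$, i.e.\ $\selfLipschitzConstant \geq 2$. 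Lemma~\ref{sufficiencyConditionForSelfBoundingLipschitz} then yields that $\lossFunction$ is $(2, 1/2)$-self-bounding Lipschitz.

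The main obstacle is the apparent gap between this computation and the stated constant $\selfLipschitzConstant = 1$: the natural application of Lemma~\ref{sufficiencyConditionForSelfBoundingLipschitz} reproduces the constant $\selfLipschitzConstant = 2$ of Example~\ref{multiNomialLoss}. To tighten the constant to $\selfLipschitzConstant = 1$ via this route one would need an alternative $\varphi_{u,y}$ satisfying $\sup_t \varphi''(t) \leq 1/4$ while still being non-negative on all of $\R$ and dominating the sup in condition (2). The linear candidate $\varphi(t) = \log(1+A_{u,y}) + 2t$ achieves $\varphi'' \equiv 0$ and satisfies conditions (1), (2), and (4) for $\selfLipschitzConstant = 1$, but fails the non-negativity requirement (it tends to $-\infty$ as $t \to -\infty$), which is used in the mean-value-theorem step of Lemma~\ref{lemmaForProofOfSufficiencyConditionForSelfBoundingLipschitz}. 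Any convex modification that restores non-negativity must re-introduce curvature somewhere and raise $\sup_t \varphi''$. For this reason I expect the stated $\selfLipschitzConstant = 1$ in Proposition~\ref{checkingTheSelfBoundingLipschitzConditionForMultinomialLogisticLoss} to be a typographical slip for $\selfLipschitzConstant = 2$, consistent with the main text.
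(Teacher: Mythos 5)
Your proof is correct and follows exactly the paper's own route (the same $\varphi_{u,y}(t)=\log(1+A_{u,y}e^{2t})$ fed into Lemma \ref{sufficiencyConditionForSelfBoundingLipschitz}), and your constant bookkeeping is the accurate version of it: $\sup_t\varphi_{u,y}''(t)=\max_{x>0}\frac{4x}{(1+x)^2}=1$ (attained at $A_{u,y}e^{2t}=1$), so condition 4 with $\selfLipschitzExponent=1/2$ forces $(\selfLipschitzConstant/2)^2\geq 1$, i.e.\ $\selfLipschitzConstant=2$. The paper's appendix argument contains a slip at precisely this point -- Lemma \ref{lemmaForCheckingTheSelfBoundingLipschitzConditionForMultinomialLogisticLoss} asserts $|\varphi''|\leq 1/4$, which is off by a factor of $4$, and even its stated conclusion ($\varphi'$ is $1$-Lipschitz) only yields $\selfLipschitzConstant=2$ via Lemma \ref{sufficiencyConditionForSelfBoundingLipschitz} -- so the constant $\selfLipschitzConstant=1$ in Proposition \ref{checkingTheSelfBoundingLipschitzConditionForMultinomialLogisticLoss} is indeed erroneous, and the main-text Example \ref{multiNomialLoss} with $\selfLipschitzConstant=2$ is the correct statement. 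Your diagnosis is moreover not merely that $\selfLipschitzConstant=1$ is unproven: it is false. For $\numClasses=2$, $y=1$, $u=(0,a)$ with $\log(1+e^a)=5/4$ and $v=(-t,a+t)$ (so $\|u-v\|_{\infty}=t$), one has $\lossFunction(v,y)-\lossFunction(u,y)\approx \frac{2e^a}{1+e^a}\,t\approx 1.43\,t$ while $\max\{\lossFunction(u,y),\lossFunction(v,y)\}^{1/2}\,t\approx 1.12\,t$ for small $t$, violating the $(1,1/2)$ condition; equivalently, $\sup_{s>0}2(1-e^{-s})/\sqrt{s}\approx 1.28>1$.
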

The proof of Proposition \ref{checkingTheSelfBoundingLipschitzConditionForMultinomialLogisticLoss} requires the following elementary lemma.
\begin{lemma}\label{lemmaForCheckingTheSelfBoundingLipschitzConditionForMultinomialLogisticLoss} Given any $A >0$ the function $\varphi:\R \rightarrow (0,\infty)$ defined by $\varphi(t) = \log(1+A \cdot \exp(2t))$ is differentiable $\varphi'(t_0)> 0$ and and  $|\varphi'(t_0)-\varphi'(t_1)|\leq  |t_1-t_0|$ for all $t_0,t_1 \in \R$.
\end{lemma}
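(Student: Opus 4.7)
The plan is a direct calculus exercise. First, I would differentiate $\varphi$ using the chain rule to obtain
\[
\varphi'(t) = \frac{2A e^{2t}}{1+A e^{2t}}.
\]
Since $A>0$ and $e^{2t}>0$, both numerator and denominator are strictly positive, so $\varphi'(t)>0$ for all $t\in\R$. It will be convenient to set $p(t):=\frac{A e^{2t}}{1+A e^{2t}}\in (0,1)$, so that $\varphi'(t)=2p(t)$.

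Next, I would establish the Lipschitz bound on $\varphi'$ by controlling the second derivative. A direct computation (or recognising $p$ as a logistic-type quantity with $p'=2p(1-p)$) gives
\[
\varphi''(t) = 2p'(t) = 4\, p(t)\bigl(1-p(t)\bigr).
\]
Since $p(t)\in(0,1)$, the AM--GM inequality yields $p(t)(1-p(t))\le 1/4$, hence $0<\varphi''(t)\le 1$ for every $t\in\R$. In particular, $|\varphi''(t)|\le 1$ uniformly.

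Finally, the Lipschitz claim $|\varphi'(t_0)-\varphi'(t_1)|\le |t_1-t_0|$ follows from the mean value theorem applied to $\varphi'$: there exists $s$ between $t_0$ and $t_1$ with $\varphi'(t_1)-\varphi'(t_0)=\varphi''(s)(t_1-t_0)$, and taking absolute values gives the desired inequality. Differentiability of $\varphi$ is automatic since $t\mapsto 1+A e^{2t}$ is smooth and strictly positive. There is really no hard step here; the only point to check carefully is the elementary bound $p(1-p)\le 1/4$, which is what yields the constant $1$ in the Lipschitz estimate.
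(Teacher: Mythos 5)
Your proposal is correct and follows essentially the same route as the paper: compute $\varphi'$, bound $\varphi''$ uniformly, and apply the mean value theorem to $\varphi'$. The only cosmetic difference is that you obtain the second-derivative bound via the logistic identity $\varphi''=4p(1-p)\le 1$ rather than locating the maximum of $\varphi''$ through the third derivative as the paper does — and your bound is in fact the accurate one (the paper's stated $|\varphi''|\le 1/4$ should read $\le 1$, a harmless slip since either constant yields the claimed Lipschitz estimate).
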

\begin{proof}
We begin by computing the first three derivatives,
\begin{align*}
\varphi'(t)& = \frac{2A \cdot \exp(2t)}{1+A \cdot \exp(2t)}\\
\varphi''(t)& = \frac{4A \cdot \exp(2t)}{\left(1+A \cdot \exp(2t)\right)^2}\\
\varphi'''(t)& = \frac{8A \cdot \exp(2t)}{\left(1+A \cdot \exp(2t)\right)^3}\cdot \left(1-A \cdot \exp(2t)\right).
\end{align*}
Clearly we have $\varphi'(t),\varphi''(t)> 0$ for all $t \in \R$. Moreover, by inspecting the third derivative we see that $\varphi''$ has a unique maximum where $A\cdot \exp(2t)=1$. This implies that $\varphi$ is twice differentiable with  $|\varphi''(t)| \leq 1/4$ for all $t \in \R$. By the mean value theorem this yields $|\varphi'(t_0)-\varphi'(t_1)|\leq |t_1-t_0|$ for all $t_0,t_1 \in \R$.
\end{proof}
\begin{proof}[Proof of Proposition \ref{checkingTheSelfBoundingLipschitzConditionForMultinomialLogisticLoss}] To complete the proof we $A_{u,y}:=\sum_{j \in [\numClasses]\backslash \{y\}} \exp(u_j-u_y)$ and define $\varphi_{u,y}(t):= \log\left(1+A_{u,y}\cdot \exp(2t)\right)$. We can apply Lemma \ref{lemmaForCheckingTheSelfBoundingLipschitzConditionForMultinomialLogisticLoss} to confirm that $\varphi_{u,y}$ satisfies the conditions of Lemma \ref{sufficiencyConditionForSelfBoundingLipschitz}. Hence, the conclusion of Proposition \ref{checkingTheSelfBoundingLipschitzConditionForMultinomialLogisticLoss} follows from Lemma \ref{sufficiencyConditionForSelfBoundingLipschitz}.
\end{proof}

\section{Minimax lower bounds}\label{pfOfMinimaxRateSec}

In this section we present the proofs for Theorems \ref{minimaxOptimalityRelizableProblemsHilbertSpaceThm} and \ref{weCannotExtendTheRangeOfTheSelfBoundingLipschitzExponentThm}. 

\subsection{Proof of Theorem \ref{minimaxOptimalityRelizableProblemsHilbertSpaceThm}}

We begin with Theorem \ref{minimaxOptimalityRelizableProblemsHilbertSpaceThm}, the proof of which consists of an upper bound (Proposition \ref{minimaxOptimalityRelizableProblemsHilbertSpaceUpperBoundProp}) and a lower bound (Proposition \ref{minimaxOptimalityRelizableProblemsHilbertSpaceLowerBoundProp}). Let $\X$ be an infinite set and take $\actionSpace=\Y = [-1,1]^{\numClasses}$.

\newcommand{\multiOutputomplexityTermNoDelta}{\Gamma_{n,\numClasses,1/n}^{\selfLipschitzConstant,\selfLipschitzExponent}(\F)}
\begin{prop}\label{minimaxOptimalityRelizableProblemsHilbertSpaceUpperBoundProp} Suppose that $\hat{\phi}$ is the \emph{empirical risk minimization} algorithm which takes as input a sample $\sample = \{(X_i,Y_i)\}_{i \in [n]}$, and for a given function class $\F \subseteq \measurableMaps(\X,\actionSpace)$ and a loss function $\lossFunction$, outputs an empirical risk minimizer $\hat{\classifier}_{\sample} \in  \argmin_{f \in \F}\{\empiricalRisk(f,\sample)\}$. Suppose that $\selfLipschitzConstant\geq 1$, $\selfLipschitzExponent \in [0,1/2]$, $n$, $\numClasses \in \N$ and $\kappa \geq 1$. Then
given any  $(\selfLipschitzConstant,\selfLipschitzExponent)$-self-bounding Lipschitz loss function $\lossFunction:\actionSpace\times \Y \rightarrow [0,1]$, any function class $\F \subseteq \measurableMaps(\X,\actionSpace)$ with  $\rademacherComplexity_{n\numClasses}(\Pi \circ \F) \leq \sqrt{\kappa/(nq)}$ and any $(\lossFunction,\F)$-realizable problem $\probDistribution$,
\begin{align*}
\E_{\sample}\left[   \risk\left(\hat{\phi}_{\sample},\probDistribution\right)\right] \leq (8C_1+1) \cdot \log^{3} \left(e n\numClasses\right) \cdot \left(\selfLipschitzConstant \cdot \sqrt{ \frac{\kappa}{n}}\right)^{\frac{1}{1-\selfLipschitzExponent}}.
\end{align*}
\end{prop}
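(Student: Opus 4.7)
[Proof sketch of Proposition \ref{minimaxOptimalityRelizableProblemsHilbertSpaceUpperBoundProp}]
The plan is to apply the second inequality of Theorem \ref{optimisticRademacherBoundForMultioutputPrediction} directly to the empirical risk minimizer $\hat{\phi}_{\sample}$, exploit the realizability assumption to kill the $\sqrt{\risk(f^*) \cdot \multiOutputomplexityTerm}$ cross term, and then integrate out the confidence parameter $\delta$ to pass from a high-probability bound to a bound in expectation.

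First, I would set up the parameters. Since $\actionSpace = [-1,1]^{\numClasses}$ we may take $\hypothesisClassBound = 1$, and since $\lossFunction$ takes values in $[0,1]$ we take $\lossFunctionBound = 1$. By realizability there exists $f^* \in \F$ with $\risk(f^*,\probDistribution) = 0$. Applying the second conclusion of Theorem \ref{optimisticRademacherBoundForMultioutputPrediction} with confidence $\delta$, with probability at least $1-\delta$ we have
\begin{align*}
\risk(\hat{\phi}_{\sample},\probDistribution) \leq C_1 \cdot \multiOutputomplexityTerm.
\end{align*}
Plugging the Rademacher complexity hypothesis $\rademacherComplexity_{n\numClasses}(\Pi\circ\F) \leq \sqrt{\kappa/(n\numClasses)}$ into the definition of $\multiOutputomplexityTerm$, the bracket inside the $\tfrac{1}{1-\selfLipschitzExponent}$ power becomes
$\selfLipschitzConstant\bigl(\log^{3/2}(en\numClasses)\sqrt{\kappa/n} + n^{-1/2}\bigr)$,
which, since $\kappa\geq 1$ and $\log(en\numClasses)\geq 1$, is at most $2\selfLipschitzConstant\log^{3/2}(en\numClasses)\sqrt{\kappa/n}$. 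Using $\tfrac{1}{1-\selfLipschitzExponent}\leq 2$ (because $\selfLipschitzExponent\in[0,1/2]$) I can absorb the constant $2^{1/(1-\selfLipschitzExponent)}\leq 4$ and the exponent $\tfrac{3}{2(1-\selfLipschitzExponent)}\leq 3$ on the log factor, obtaining
\begin{align*}
\multiOutputomplexityTerm \leq 4\log^{3}(en\numClasses)\Bigl(\selfLipschitzConstant\sqrt{\kappa/n}\Bigr)^{\frac{1}{1-\selfLipschitzExponent}} + \frac{1}{n}\bigl(\log(1/\delta)+\log\log n\bigr).
\end{align*}

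Next, I would convert this to an expectation by choosing $\delta = 1/n$. Since $\risk\leq 1$ trivially, splitting the expectation on the good event and its complement gives
\begin{align*}
\E_{\sample}[\risk(\hat{\phi}_{\sample},\probDistribution)] \leq C_1\cdot\multiOutputomplexityTerm\big|_{\delta=1/n} + \frac{1}{n}.
\end{align*}
With $\delta = 1/n$ the additive tail term is at most $\tfrac{2\log n}{n}$, which I would compare against the main term: since $\selfLipschitzConstant,\kappa\geq 1$ and $\tfrac{1}{1-\selfLipschitzExponent}\geq 1$, we have $(\selfLipschitzConstant\sqrt{\kappa/n})^{1/(1-\selfLipschitzExponent)} \geq 1/\sqrt{n}$, and a direct comparison shows that $\tfrac{2\log n}{n}$ and the residual $1/n$ from the trivial bound are both dominated by $\log^{3}(en\numClasses)\,(\selfLipschitzConstant\sqrt{\kappa/n})^{1/(1-\selfLipschitzExponent)}$, up to a numerical constant.

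Collecting these terms and tracking constants carefully (the main term carries factor $4C_1$ from Theorem \ref{optimisticRademacherBoundForMultioutputPrediction}, the tail contributes at most $4C_1$ more after the bound on $\tfrac{1}{n}(\log n + \log\log n)$, and the $1/n$ slack contributes the $+1$), the final bound $(8C_1+1)\log^{3}(en\numClasses)(\selfLipschitzConstant\sqrt{\kappa/n})^{1/(1-\selfLipschitzExponent)}$ drops out. The only part requiring care is the bookkeeping of constants and verifying the log-exponent bound $\tfrac{3}{2(1-\selfLipschitzExponent)}\leq 3$ on $[0,1/2]$; there is no new probabilistic content beyond invoking Theorem \ref{optimisticRademacherBoundForMultioutputPrediction}.
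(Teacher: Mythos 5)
Your proposal is correct and follows essentially the same route as the paper: apply the second (excess-risk) bound of Theorem \ref{optimisticRademacherBoundForMultioutputPrediction} with $\delta=1/n$, use realizability to annihilate the $\sqrt{\risk(f^*)\cdot\Gamma}$ term, simplify $\Gamma$ via $\rademacherComplexity_{n\numClasses}(\Pi\circ\F)\leq\sqrt{\kappa/(n\numClasses)}$ and $\tfrac{1}{1-\selfLipschitzExponent}\leq 2$, and integrate out the failure event using $\lossFunction\leq 1$, arriving at the same $(8C_1+1)$ constant. One small inaccuracy: your intermediate claim $(\selfLipschitzConstant\sqrt{\kappa/n})^{1/(1-\selfLipschitzExponent)}\geq n^{-1/2}$ need not hold (the exponent can be as large as $2$, so only $\geq n^{-1}$ is guaranteed), but this weaker bound, combined with $\log n\leq\log^{3}(en\numClasses)$, already yields the domination of the $\tfrac{2\log n}{n}$ and $\tfrac{1}{n}$ terms exactly as in the paper, so the argument stands.
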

\begin{proof} Since $\T = (\lossFunction,\probDistribution,\F)$ is a realizable problem, there exists $f^* \in \F$ with $\risk(f^*,\probDistribution)=0$. By applying the second part of Theorem \ref{optimisticRademacherBoundForMultioutputPrediction} we see that with probability at least $1-n^{-1}$ of $\sample$,
\begin{align*}
\risk\left(\hat{\phi}_{\sample},\probDistribution\right) \leq C_1 \cdot \multiOutputomplexityTermNoDelta,    
\end{align*}
where
\begin{align*}
\multiOutputomplexityTermNoDelta&= \left(\selfLipschitzConstant \left( \sqrt{\numClasses} \cdot  \log^{3/2} \left(e n\numClasses\right)\cdot \rademacherComplexity_{n\numClasses}(\Pi \circ \F) + \frac{1}{\sqrt{n}}\right)\right)^{\frac{1}{1-\selfLipschitzExponent}}+\frac{1}{n}\cdot (\log(n) +\log (\log n))\\
& \leq \left(\selfLipschitzConstant \left( \log^{3/2} \left(e n\numClasses\right)\cdot \sqrt{ \frac{\kappa}{n}} + \frac{1}{\sqrt{n}}\right)\right)^{\frac{1}{1-\selfLipschitzExponent}}+\frac{2 \log(n)}{n}\\
& \leq 4\log^{3} \left(e n\numClasses\right) \cdot \left(\selfLipschitzConstant \cdot \sqrt{ \frac{\kappa}{n}}\right)^{\frac{1}{1-\selfLipschitzExponent}}+\frac{2 \log(n)}{n} \leq 8\log^{3} \left(e n\numClasses\right) \cdot \left(\selfLipschitzConstant \cdot \sqrt{ \frac{\kappa}{n}}\right)^{\frac{1}{1-\selfLipschitzExponent}},
\end{align*}
where the final inequality uses the fact that $\selfLipschitzConstant$, $\kappa\geq 1$ and $\selfLipschitzExponent \leq 1/2$ so $1/(1-\selfLipschitzExponent) \leq 2$. Hence, given that the non-negative loss function $\lossFunction$ is bounded above by $1$ we can take expectations and obtain,
\begin{align*}
\E\left[\risk\left(\hat{\phi}_{\sample},\probDistribution\right)\right] \leq C_1 \left( 8\log^{3} \left(e n\numClasses\right) \cdot \left(\selfLipschitzConstant \cdot \sqrt{ \frac{\kappa}{n}}\right)^{\frac{1}{1-\selfLipschitzExponent}}\right)+\frac{1}{n} \leq (8C_1+1) \cdot \log^{3} \left(e n\numClasses\right) \cdot \left(\selfLipschitzConstant \cdot \sqrt{ \frac{\kappa}{n}}\right)^{\frac{1}{1-\selfLipschitzExponent}}.
\end{align*}
This completes the proof of the proposition.
\end{proof}
The lower bound is more interesting as it requires constructing a family of examples where no strategy does well.

\begin{prop}\label{minimaxOptimalityRelizableProblemsHilbertSpaceLowerBoundProp} Given any $\selfLipschitzConstant\geq 1$, $\selfLipschitzExponent \in [0,1/2]$, $n$, $\numClasses \in \N$, $\kappa \leq  {n}/\selfLipschitzConstant^{2}$ and $\actionSpace=\Y = [-1,1]^{\numClasses}$, there exists a $(\selfLipschitzConstant,\selfLipschitzExponent)$-self-bounding Lipschitz loss function $\lossFunction:\actionSpace \times \Y \rightarrow [0,1]$ and a function class $\F \subseteq \measurableMaps(\X,\actionSpace)$ with $\rademacherComplexity_{n \numClasses}(\Pi \circ \F) \leq \sqrt{\kappa/(n\numClasses)}$ such that for any algorithm $\hat{\phi}$, which takes as input a sample $\sample =  \{(X_i,Y_i)\}_{i \in [n]} \in (\X\times \Y)^n$ and outputs a function $\hat{\phi}_{\sample} \in \F$, there exists a $(\lossFunction,\F)$-realizable problem $\probDistribution$ with the following lower bound
\begin{align*}
\E_{\sample}\left[   \risk\left(\hat{\phi}_{\sample},\probDistribution\right)\right] \geq  2^{-8}\cdot \left(\selfLipschitzConstant \cdot \sqrt{ \frac{\kappa}{n}}\right)^{\frac{1}{1-\selfLipschitzExponent}}.
\end{align*}

\end{prop}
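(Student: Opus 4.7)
I will adapt the classical Ehrenfeucht et al.\ \cite{ehrenfeucht1989general} lower bound to the multi-output setting, instantiated with the sup-norm loss of Example \ref{multiOutputRegressionLosses}. Set $\gamma = 1/(1-\selfLipschitzExponent) \in [1,2]$ and $c = \selfLipschitzConstant^{\gamma}/8$, and take the clipped loss $\lossFunction(u,y) = \min\{c\,\|u-y\|_{\infty}^{\gamma},\,1\}$; Example \ref{multiOutputRegressionLosses} combined with Lemma \ref{cutOffOfSelfBoundingLipschitzLossIsAlsoSelfBoundingLipschitz} shows $\lossFunction$ is $(\selfLipschitzConstant,\selfLipschitzExponent)$-self-bounding Lipschitz with range in $[0,1]$. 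For the hypothesis class, pick $d = 2n$ distinct points $x_{1},\ldots,x_{d} \in \X$ and set $a = \sqrt{\kappa/(2n)}$; since $\kappa \leq n/\selfLipschitzConstant^{2} \leq n$, we have $a \leq 1$. For each $\sigma \in \{-1,+1\}^{d}$ let $f_{\sigma} \in \measurableMaps(\X,[-1,1]^{\numClasses})$ be the function sending $x_{i}$ to the vector with every coordinate equal to $a\sigma_{i}$ and any other point to $0$, and take $\F = \{f_{\sigma}\}_{\sigma \in \{-1,+1\}^{d}}$.

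\textbf{Rademacher bound.} For any $\bm{w} = ((x_{i_{k}},j_{k}))_{k \in [n\numClasses]} \in (\X\times[\numClasses])^{n\numClasses}$, write $N_{i} = |\{k : x_{i_{k}} = x_{i}\}|$ so $\sum_{i} N_{i} \leq n\numClasses$. A direct calculation gives $\hat{\rademacherComplexity}_{\bm{w}}(\Pi\circ\F) = \tfrac{a}{n\numClasses}\,\E_{\sigma}[\sum_{i=1}^{d}|\sum_{k:\, i_{k}=i}\sigma_{k}|] \leq \tfrac{a}{n\numClasses}\sum_{i=1}^{d}\sqrt{N_{i}} \leq \tfrac{a}{n\numClasses}\sqrt{d \cdot n\numClasses} = a\sqrt{d/(n\numClasses)}$, using Jensen on a Rademacher sum and then Cauchy--Schwarz across the $d$ bins. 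Since $a^{2}d = \kappa$, this yields $\rademacherComplexity_{n\numClasses}(\Pi\circ\F) \leq \sqrt{\kappa/(n\numClasses)}$, as required.

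\textbf{Realizable problems and averaging argument.} For each $\sigma^{*} \in \{-1,+1\}^{d}$ let $\probDistribution^{\sigma^{*}}$ have $X$ uniform on $\{x_{1},\ldots,x_{d}\}$ and $Y = f_{\sigma^{*}}(X)$; this is $(\lossFunction,\F)$-realizable via $f_{\sigma^{*}}$. For any $\sigma$, the difference $f_{\sigma}(x_{i}) - f_{\sigma^{*}}(x_{i})$ is either zero or has every coordinate equal to $\pm 2a$, so $\risk(f_{\sigma},\probDistribution^{\sigma^{*}}) = (M/d)\cdot H(\sigma,\sigma^{*})$, where $H$ is Hamming distance and $M = \min\{c(2a)^{\gamma},1\}$. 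The hypothesis $\kappa \leq n/\selfLipschitzConstant^{2}$ gives $c(2a)^{\gamma} = (\selfLipschitzConstant\sqrt{2\kappa/n})^{\gamma}/8 \leq 1$, so clipping is never triggered and $M = (\selfLipschitzConstant\sqrt{2\kappa/n})^{\gamma}/8$. Now for any learner, its output on $\sample$ takes the form $f_{\hat\sigma(\sample)}$ for some $\hat\sigma(\sample) \in \{-1,+1\}^{d}$; averaging over $\sigma^{*}$ uniform on $\{-1,+1\}^{d}$ and noting that, conditional on $\sample$, the coordinates $\sigma^{*}_{i}$ at indices $i$ with $x_{i}$ unseen in $\sample$ remain uniform and independent of $\hat\sigma(\sample)$, each such coordinate contributes $1/2$ in expectation to $H$. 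Since at most $n$ of the $d = 2n$ points can be seen, $\E_{\sigma^{*},\sample}[H] \geq (d-n)/2 = n/2$. Hence $\sup_{\sigma^{*}} \E_{\sample}[\risk(\hat\phi_{\sample},\probDistribution^{\sigma^{*}})] \geq M/4 = (\selfLipschitzConstant\sqrt{2\kappa/n})^{\gamma}/32 \geq 2^{-8}(\selfLipschitzConstant\sqrt{\kappa/n})^{1/(1-\selfLipschitzExponent)}$.

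\textbf{Main obstacle.} The delicate step is the interlocked tuning of constants: simultaneously keeping $a \leq 1$ so $\F \subseteq \measurableMaps(\X,[-1,1]^{\numClasses})$, ensuring $c(2a)^{\gamma} \leq 1$ so clipping does not spoil the Hamming decomposition of risk, and matching the Rademacher budget $\sqrt{\kappa/(n\numClasses)}$ exactly. The hypothesis $\kappa \leq n/\selfLipschitzConstant^{2}$ is precisely what makes all three constraints compatible. The information-theoretic averaging step over $\sigma^{*}$ is then standard.
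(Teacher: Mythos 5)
Your proposal is correct, and it follows the same overall route as the paper: the clipped sup-norm-power loss built from Example \ref{multiOutputRegressionLosses}, hard distributions supported uniformly on $2n$ points with i.i.d.\ sign perturbations of magnitude $\sqrt{\kappa/(2n)}$, realizability via the generating function, and the Ehrenfeucht-style averaging argument in which the at least $n$ unseen points each contribute an expected loss of order $\min\{c(2a)^{\gamma},1\}/2$. The only genuine difference is the hypothesis class: you take the finite class of $2^{2n}$ sign functions with all output coordinates equal and bound $\rademacherComplexity_{n\numClasses}(\Pi\circ\F)$ directly by Khintchine (Jensen) plus Cauchy--Schwarz over the $2n$ bins, whereas the paper realizes the same sign functions inside a ball $\{w:\|w\|_{\infty}\leq 1,\ \|w\|\leq\sqrt{\kappa}\}$ of an infinite-dimensional Hilbert space (perturbing only the first output coordinate) and bounds the complexity by Cauchy--Schwarz in the Hilbert norm; both give exactly the budget $\sqrt{\kappa/(n\numClasses)}$, and your finite-class version is, if anything, slightly more elementary, while also exploiting properness of the learner (output in $\F$), which the proposition permits but the paper's Lemma \ref{difficultLemmaInRealizableMinimaxLB} does not need. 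One small caveat: Example \ref{multiOutputRegressionLosses} is stated for multipliers in $[1,2]$, and your multiplier $\selfLipschitzConstant^{\gamma}/8$ can fall outside that range, so strictly you should re-verify the $(\selfLipschitzConstant,\selfLipschitzExponent)$-self-bounding Lipschitz property for your scaled loss rather than cite the example; the verification does go through for $\selfLipschitzConstant^{\gamma}/8$ via Lemma \ref{sufficiencyConditionForSelfBoundingLipschitz} together with Lemma \ref{standardHolderContLemma} (one needs the multiplier at most $\selfLipschitzConstant^{\gamma}/(4\gamma)$, and $4\gamma\leq 8$), which is exactly what the paper does in Lemma \ref{verifyingSelfBoundingLipschitzPropLemmaForMultiOutputRegLossLemma} with the more conservative constant $2^{-5}\selfLipschitzConstant^{\gamma}$. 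Your constants ($M/4\geq 2^{-5}(\selfLipschitzConstant\sqrt{\kappa/n})^{1/(1-\selfLipschitzExponent)}$) are in fact slightly stronger than the $2^{-8}$ required.
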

For the purpose of the proof we utilize a variant of the loss function considered in Example \ref{multiOutputRegressionLosses}. For each  $\selfLipschitzConstant\geq 1$, $\selfLipschitzExponent \in [0,1/2]$ we define a loss function $\lossFunction_{\selfLipschitzConstant,\selfLipschitzExponent}:\actionSpace \times \Y \rightarrow [0,1]$ by
\begin{align*}
\lossFunction_{\selfLipschitzConstant,\selfLipschitzExponent}(u,y) := \min\left\lbrace \frac{1}{2^5}\cdot \left( \selfLipschitzConstant \cdot \|u-y\|_{\infty}\right)^{\frac{1}{1-\selfLipschitzExponent}}, 1 \right\rbrace.
\end{align*}

\begin{lemma}\label{verifyingSelfBoundingLipschitzPropLemmaForMultiOutputRegLossLemma} Given any  $\selfLipschitzConstant\geq 1$, $\selfLipschitzExponent \in [0,1/2]$ the loss function $\lossFunction_{\selfLipschitzConstant,\selfLipschitzExponent}:\actionSpace \times \Y \rightarrow [0,1]$ is $(\selfLipschitzConstant,\selfLipschitzExponent)$-self-bounding Lipschitz.
\end{lemma}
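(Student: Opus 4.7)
The plan is to apply the sufficient condition (Lemma \ref{sufficiencyConditionForSelfBoundingLipschitz}) to the un-clipped loss
\[
\tilde{\lossFunction}(u,y) := \frac{1}{2^5}\bigl(\selfLipschitzConstant\|u-y\|_{\infty}\bigr)^{1/(1-\selfLipschitzExponent)},
\]
and then invoke Lemma \ref{cutOffOfSelfBoundingLipschitzLossIsAlsoSelfBoundingLipschitz} with $\lossFunctionBound=1$ to transfer the property to the clipped loss $\lossFunction_{\selfLipschitzConstant,\selfLipschitzExponent}=\min\{\tilde{\lossFunction},1\}$. For the sufficient-condition step the natural test function is
\[
\varphi_{u,y}(t) := \frac{1}{2^5}\bigl(\selfLipschitzConstant(\|u-y\|_{\infty}+t)\bigr)^{1/(1-\selfLipschitzExponent)},
\]
which I would use for every $(u,y)$. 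Setting $\gamma := 1/(1-\selfLipschitzExponent)$, note that $\selfLipschitzExponent\in[0,1/2]$ gives $\gamma\in[1,2]$.

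First I would dispose of conditions (1)--(3) of Lemma \ref{sufficiencyConditionForSelfBoundingLipschitz}. Condition (1) holds by construction. Condition (2) follows from the triangle inequality $\|v-y\|_{\infty}\leq \|u-y\|_{\infty}+\|u-v\|_{\infty}$ combined with monotonicity of $s\mapsto s^{\gamma}$ on $[0,\infty)$. A direct computation gives
\[
\varphi_{u,y}'(t) = \frac{\gamma\,\selfLipschitzConstant^{\gamma}}{2^{5}}(\|u-y\|_{\infty}+t)^{\gamma-1},
\]
which is non-negative, establishing condition (3).

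The only real content is condition (4). Since $\gamma-1 = \selfLipschitzExponent/(1-\selfLipschitzExponent)\in[0,1]$, the map $s\mapsto s^{\gamma-1}$ is $(\gamma-1)$-H\"older with constant $1$ on $[0,\infty)$, so for all $t_0,t_1\in\R$,
\[
|\varphi_{u,y}'(t_1)-\varphi_{u,y}'(t_0)|\leq \frac{\gamma\,\selfLipschitzConstant^{\gamma}}{2^{5}}|t_1-t_0|^{\gamma-1}.
\]
The target bound is $(\selfLipschitzConstant/2)^{\gamma}|t_1-t_0|^{\selfLipschitzExponent/(1-\selfLipschitzExponent)}=\selfLipschitzConstant^{\gamma}2^{-\gamma}|t_1-t_0|^{\gamma-1}$, so it suffices to check $\gamma\cdot 2^{\gamma}\leq 2^{5}$, which is immediate for $\gamma\in[1,2]$ since $\gamma\cdot 2^{\gamma}\leq 2\cdot 4=8\leq 32$. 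This inequality is exactly why the constant $2^{-5}$ appears in the definition of $\lossFunction_{\selfLipschitzConstant,\selfLipschitzExponent}$; it is the sole place numerical tuning matters.

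Having verified all four conditions, Lemma \ref{sufficiencyConditionForSelfBoundingLipschitz} delivers $(\selfLipschitzConstant,\selfLipschitzExponent)$-self-bounding Lipschitz-ness for $\tilde{\lossFunction}$, and Lemma \ref{cutOffOfSelfBoundingLipschitzLossIsAlsoSelfBoundingLipschitz} with $\lossFunctionBound=1$ then transfers the property to $\lossFunction_{\selfLipschitzConstant,\selfLipschitzExponent}=\min\{\tilde{\lossFunction},1\}$, completing the proof. There is no substantive obstacle: the argument is a mechanical application of the two prior lemmas, and the main thing to double-check is the numerical inequality $\gamma\cdot 2^{\gamma}\leq 32$ on $\gamma\in[1,2]$.
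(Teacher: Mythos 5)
Your overall strategy is exactly the paper's: reduce to the unclipped loss via Lemma \ref{cutOffOfSelfBoundingLipschitzLossIsAlsoSelfBoundingLipschitz}, then verify the sufficient condition of Lemma \ref{sufficiencyConditionForSelfBoundingLipschitz} with a power-type test function, with the constant $2^{-5}$ absorbing the numerical slack. However, there is a genuine gap in the verification step: Lemma \ref{sufficiencyConditionForSelfBoundingLipschitz} requires $\varphi_{u,y}:\R\to[0,\infty)$ to be non-negative and differentiable on \emph{all} of $\R$, with the H\"older condition on $\varphi_{u,y}'$ holding for all $t_0,t_1\in\R$, and this is not a cosmetic requirement --- the proof of that lemma (via Lemma \ref{lemmaForProofOfSufficiencyConditionForSelfBoundingLipschitz}) evaluates $\varphi$ and $\varphi'$ at points $t-\Delta$ that can be negative and uses $\varphi(t-\Delta)\geq 0$ there. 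Your choice $\varphi_{u,y}(t)=\frac{1}{2^5}\bigl(\selfLipschitzConstant(\|u-y\|_{\infty}+t)\bigr)^{\gamma}$ is not admissible as written: for $t<-\|u-y\|_{\infty}$ the expression is undefined when $\gamma=1/(1-\selfLipschitzExponent)$ is non-integer, and negative when $\gamma=1$ (the case $\selfLipschitzExponent=0$), so your claimed derivative formula and ``for all $t_0,t_1\in\R$'' H\"older bound do not make sense on the whole line. The paper repairs exactly this point by taking $\varphi_{u,y}(t)=\frac{1}{2^5}\bigl(\selfLipschitzConstant\,\bigl|\,\|u-y\|_{\infty}+t\,\bigr|\bigr)^{\gamma}$, which is globally defined, non-negative and (for $\gamma>1$) differentiable; the price is that the H\"older constant of $s\mapsto \mathrm{sign}(s)|s|^{\gamma-1}$ picks up an extra factor $2^{2-\gamma}$ (Lemma \ref{standardHolderContLemma}), so the numerical check becomes $\gamma\cdot 2^{2\gamma}\leq 2^5$ (tight at $\gamma=2$) rather than your $\gamma\cdot 2^{\gamma}\leq 2^5$, and the case $\selfLipschitzExponent=0$ must be treated separately since the absolute-value version is not differentiable when $\gamma=1$ (there it is just the plain Lipschitz condition, checked directly from the definition).

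To fix your write-up you would either adopt the paper's absolute-value test function and redo the constant check accordingly, or extend your $\varphi_{u,y}$ by $t\mapsto\frac{1}{2^5}\bigl(\selfLipschitzConstant\max\{\|u-y\|_{\infty}+t,0\}\bigr)^{\gamma}$ (for $\gamma>1$ this is differentiable with the same H\"older constant, so your inequality $\gamma\cdot 2^{\gamma}\leq 32$ then suffices), in both cases handling $\selfLipschitzExponent=0$ by a direct one-line argument. The remaining parts of your proposal --- conditions (1)--(3), the derivative computation on the relevant half-line, and the clipping step --- match the paper and are fine.
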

Before proving Lemma \ref{verifyingSelfBoundingLipschitzPropLemmaForMultiOutputRegLossLemma} let's recall a couple of standard lemmas.
\begin{lemma}\label{concavityLemma} Given any $a$, $b>0$ and $\gamma \in [0,1]$ we have $(a+b)^{\gamma}\leq a^{\gamma}+b^{\gamma} \leq 2^{1-\gamma}\cdot (a+b)^{\gamma}$.
\end{lemma}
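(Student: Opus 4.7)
The plan is to derive both inequalities from the concavity of the map $x \mapsto x^{\gamma}$ on $[0,\infty)$ when $\gamma \in [0,1]$ (the boundary cases $\gamma \in \{0,1\}$ are trivial and can be disposed of immediately, since both inequalities become equalities). For the upper bound, I would apply Jensen's inequality directly to the two points $a$ and $b$ with equal weights, obtaining
\begin{align*}
\frac{a^{\gamma} + b^{\gamma}}{2} \leq \left(\frac{a+b}{2}\right)^{\gamma} = 2^{-\gamma}(a+b)^{\gamma},
\end{align*}
and multiplying through by $2$ yields $a^{\gamma} + b^{\gamma} \leq 2^{1-\gamma}(a+b)^{\gamma}$, as required.

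For the lower bound, I would use subadditivity of $x \mapsto x^{\gamma}$, which is a standard consequence of concavity together with $0^{\gamma} = 0$. Concretely, I would fix $a > 0$ and define $\phi(t) := a^{\gamma} + t^{\gamma} - (a+t)^{\gamma}$ for $t \geq 0$. Then $\phi(0) = 0$, and differentiating gives $\phi'(t) = \gamma t^{\gamma-1} - \gamma (a+t)^{\gamma-1}$. Since $\gamma - 1 \leq 0$, the function $x \mapsto x^{\gamma-1}$ is non-increasing on $(0,\infty)$, so $t \leq a+t$ forces $\phi'(t) \geq 0$. Integrating from $0$ to $b$ gives $\phi(b) \geq 0$, which is exactly $(a+b)^{\gamma} \leq a^{\gamma} + b^{\gamma}$. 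There is no genuine obstacle here: this is a routine pair of one-line convexity computations, and the only care needed is to note the sign of $\gamma-1$ and to handle the boundary values of $\gamma$ separately.
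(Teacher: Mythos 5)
Your proof is correct. The upper bound is the same as the paper's: Jensen's inequality for the concave map $x \mapsto x^{\gamma}$ applied to $a$ and $b$ with equal weights. For the lower bound, however, you take a different route from the paper. The paper stays entirely within the Jensen/concavity framework: it applies concavity to the two points $a+b$ and $0$ with weights $\tfrac{a}{a+b}$ and $\tfrac{b}{a+b}$ to get $\tfrac{a}{a+b}(a+b)^{\gamma} \leq a^{\gamma}$, does the same with the roles of $a$ and $b$ swapped, and sums the two inequalities — so the whole lemma follows from a single principle with no calculus. You instead prove subadditivity by fixing $a>0$, setting $\phi(t) = a^{\gamma}+t^{\gamma}-(a+t)^{\gamma}$, and noting $\phi(0)=0$ and $\phi'(t)=\gamma\bigl(t^{\gamma-1}-(a+t)^{\gamma-1}\bigr)\geq 0$ since $x\mapsto x^{\gamma-1}$ is non-increasing. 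This is equally valid; the only small point of care is at $t=0$, where $t^{\gamma-1}$ blows up, but continuity of $\phi$ at $0$ together with monotonicity on $(0,b]$ (or integrability of $t^{\gamma-1}$ near $0$) disposes of it. One cosmetic remark: at $\gamma=0$ the first inequality reads $1\leq 2$, so it is trivially true but not an equality as you state; this does not affect anything.
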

\begin{proof} Since $\gamma \leq 1$, $z \mapsto z^{\gamma}$ is concave, so by Jensen's inequality we have
\begin{align*}
\frac{a}{a+b}\cdot (a+b)^{\gamma} &= \frac{a}{a+b}\cdot (a+b)^{\gamma} +\frac{b}{a+b} \cdot 0^{\gamma}\leq \left( \frac{a}{a+b}\cdot (a+b) +\frac{b}{a+b} \cdot 0\right)^{\gamma}  = a^{\gamma}.
\end{align*}
Similarly, $\frac{b}{a+b}\cdot (a+b)^{\gamma} \leq b^{\gamma}$. Summing up these two inequalities yields the lower bound. The upper bound also follows from Jensen's inequality.
\end{proof}

\begin{lemma}\label{standardHolderContLemma} Given $\gamma \in [0,1]$ the function $g:\R \rightarrow \R$ defined by $g(z) = \text{sign}(z) \cdot |z|^{\gamma}$ satisfies $|g(z_1)-g(z_0)| \leq 2^{1-\gamma}\cdot |z_1-z_0|^{\gamma}$ for all $z_0$, $z_1$ $\in \R$.
\end{lemma}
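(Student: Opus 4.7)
The plan is to proceed by a case split on the signs of $z_0$ and $z_1$, reducing each case to Lemma \ref{concavityLemma} applied above. Observe first that $g$ is odd, so replacing $(z_0,z_1)$ with $(-z_0,-z_1)$ preserves both sides of the target inequality; this allows WLOG reductions below.

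First I would handle the \emph{same-sign} case: assume $z_0,z_1 \geq 0$ (the case $z_0,z_1 \leq 0$ follows by the oddness of $g$). Setting $a = \min(z_0,z_1)$ and $b = |z_1 - z_0|$, the subadditivity direction of Lemma \ref{concavityLemma}, namely $(a+b)^\gamma \leq a^\gamma + b^\gamma$, yields
\[
|g(z_1)-g(z_0)| = \max(z_0,z_1)^\gamma - \min(z_0,z_1)^\gamma \leq |z_1-z_0|^\gamma,
\]
which is already stronger than the claimed bound since $2^{1-\gamma}\geq 1$.

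Next I would handle the \emph{opposite-sign} case: WLOG $z_0 \leq 0 \leq z_1$. Then $g(z_1)-g(z_0) = |z_1|^\gamma + |z_0|^\gamma$, while $|z_1 - z_0| = |z_1| + |z_0|$ because $z_0$ and $z_1$ lie on opposite sides of the origin. Applying the upper bound of Lemma \ref{concavityLemma}, namely $a^\gamma + b^\gamma \leq 2^{1-\gamma}(a+b)^\gamma$, with $a = |z_1|$ and $b = |z_0|$, gives
\[
|g(z_1)-g(z_0)| = |z_1|^\gamma + |z_0|^\gamma \leq 2^{1-\gamma} (|z_1|+|z_0|)^\gamma = 2^{1-\gamma} |z_1-z_0|^\gamma,
\]
as required.

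There is essentially no obstacle: the only subtlety is recognizing which direction of Lemma \ref{concavityLemma} is needed in each case. The constant $2^{1-\gamma}$ is in fact sharp and is attained in the opposite-sign case at $z_1 = -z_0$, which confirms both that the case split is the right decomposition and that neither case can be tightened to eliminate the factor.
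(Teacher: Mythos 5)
Your proposal is correct and follows essentially the same route as the paper: a case split on whether $z_0$ and $z_1$ share a sign, using the subadditivity direction of Lemma \ref{concavityLemma} in the same-sign case and the $2^{1-\gamma}$ upper bound in the opposite-sign case. The only cosmetic difference is your use of the oddness of $g$ to normalize signs (the paper instead assumes $|z_1|\geq|z_0|$), and your sharpness remark at $z_1=-z_0$ is a correct bonus observation.
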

\begin{proof} Take $z_0,z_1 \in \R$. Without loss of generality we may assume that $|z_1|\geq |z_0|$. If $\text{sign}(z_1)=\text{sign}(z_0)$ then
\begin{align*}
\left|g(z_1)-g(z_0)\right|  =  |z_1|^{\gamma}-|z_0|^{\gamma} \leq \left( |z_1|-|z_0|\right)^{\gamma} = \left|z_1-z_0\right|^{\gamma},
\end{align*}
where the inequality follows from Lemma \ref{concavityLemma} by taking $a=|z_0|$ and $b=|z_1|-|z_0|$. On the other hand, if  $\text{sign}(z_1)\neq \text{sign}(z_0)$ then
\begin{align*}
\left|g(z_1)-g(z_0)\right|  =  |z_1|^{\gamma}+|z_0|^{\gamma} \leq 2^{1-\gamma} \cdot \left(|z_1|+|z_0|\right)^{\gamma} = 2^{1-\gamma} \cdot\left|z_1-z_0\right|^{\gamma}.
\end{align*}

\end{proof}

\begin{proof}[Proof of Lemma \ref{verifyingSelfBoundingLipschitzPropLemmaForMultiOutputRegLossLemma}] By Lemma \ref{cutOffOfSelfBoundingLipschitzLossIsAlsoSelfBoundingLipschitz} it suffices to verify that the loss function  $\lossFunction_{\selfLipschitzConstant,\selfLipschitzExponent}^{\sharp}:\actionSpace \times \Y \rightarrow [0,\infty)$, defined by 
\begin{align*}
\lossFunction_{\selfLipschitzConstant,\selfLipschitzExponent}^{\sharp}(u,y) :=  \frac{1}{2^5}\cdot \left( \selfLipschitzConstant \cdot \|u-y\|_{\infty}\right)^{\frac{1}{1-\selfLipschitzExponent}}
\end{align*}
is $(\selfLipschitzConstant,\selfLipschitzExponent)$-self-bounding Lipschitz. For the special case of $\selfLipschitzExponent=0$ this follows straightforwardly from the definitions. Hence, we may assume without loss of generality that $\selfLipschitzExponent \in (0,1/2]$. To demonstrate that $\lossFunction_{\selfLipschitzConstant,\selfLipschitzExponent}^{\sharp}$ is $(\selfLipschitzConstant,\selfLipschitzExponent)$-self-bounding Lipschitz for $\selfLipschitzExponent \in (0,1/2]$ we apply Lemma \ref{sufficiencyConditionForSelfBoundingLipschitz} with $\varphi_{u,y}:\R \rightarrow [0,\infty)$ defined by 
\begin{align*}
\varphi_{u,y}(t):=  \frac{1}{2^5}\cdot \left( \selfLipschitzConstant \cdot \left| \|u-y\|_{\infty}+t\right|\right)^{\frac{1}{1-\selfLipschitzExponent}}.    
\end{align*}
We now check properties 1 - 4 for Lemma \ref{sufficiencyConditionForSelfBoundingLipschitz}. Property 1 is immediate from the definition. Property 2 follows straightforwardly from the triangle inequality. By computing the derivative we have
\begin{align*}
\varphi_{u,y}'(t)=  \frac{\selfLipschitzConstant^{\frac{1}{1-\selfLipschitzExponent}}}{2^5(1-\selfLipschitzExponent)}\cdot \text{sign}\left(\|u-y\|_{\infty}+t\right)\cdot  \left|  \|u-y\|_{\infty}+t\right|^{\frac{\selfLipschitzExponent}{1-\selfLipschitzExponent}} 
\end{align*}
Hence $\varphi_{u,y}$ is differentiable with non-negative derivative on $[0,\infty)$, so property 3 holds. Finally, by Lemma \ref{standardHolderContLemma} we have that for all $t_0$, $t_1 \in \R$
\begin{align*}
\left|\varphi_{u,y}'(t_1)-\varphi_{u,y}'(t_0)\right| \leq \frac{ (2\selfLipschitzConstant)^{\frac{1}{1-\selfLipschitzExponent}}}{2^5(1-\selfLipschitzExponent)}\cdot |t_1-t_0|^{\frac{\selfLipschitzExponent}{1-\selfLipschitzExponent}}\leq \left(\frac{ \selfLipschitzConstant}{2}\right)^{\frac{1}{1-\selfLipschitzExponent}}\cdot |t_1-t_0|^{\frac{\selfLipschitzExponent}{1-\selfLipschitzExponent}},
\end{align*}
which confirms Property 4. Thus, we may apply Lemma \ref{sufficiencyConditionForSelfBoundingLipschitz} to show that $\lossFunction_{\selfLipschitzConstant,\selfLipschitzExponent}^{\sharp}$, and hence $\lossFunction_{\selfLipschitzConstant,\selfLipschitzExponent}$ is $(\selfLipschitzConstant,\selfLipschitzExponent)$-self-bounding Lipschitz.

\end{proof}
We also introduce a function class $\F$ defined as follows. First let $\ell_2$ denote the canonical Hilbert space constructed by $\ell_2 = \{(a_r)_{r\in \N}:\sum_{r \in \N}(a_r)^2 <\infty\}$ with the standard inner product $\left\langle a, b \right\rangle = \sum_{r\in \N}a_r \cdot b_r$ for $a=(a_r)_{r \in \N}$, $b=(b_r)_{r \in \N} \in \ell_2$, and $\|\cdot \|_2$ the corresponding norm. Let $\mathbb{H}_{\numClasses} = (\ell_2)^{\numClasses} = \{ (a_r^j)_{j \in [\numClasses], r \in \N}: \sum_{r \in \N,j \in [\numClasses]}(a_r^j)^2<\infty\}$, with the inner product $\left\langle a,b \right\rangle_{\mathbb{H}_{\numClasses}} = \sum_{j \in [\numClasses]} \left\langle a^j , b^j \right\rangle$ for $a=(a^j)_{j \in [\numClasses]}$, $b=(b^j)_{j \in [\numClasses]} \in \mathbb{H}_{\numClasses}$ and $\| \cdot \|_{\mathbb{H}_{\numClasses}}$ the corresponding Hilbert space norm. We also let $\|\cdot \|_{\infty}$ be the norm on $\mathbb{H}_{\numClasses}$ defined by $\|a\|_{\infty}=\sup_{j \in [\numClasses], r \in \N}\{|a^j_r|\}$ for $a=(a^j_r)_{j \in [\numClasses],r \in \N} \in \mathbb{H}_{\numClasses}$. Note that whilst $\mathbb{H}_{\numClasses}$ is isomorphic to $\ell_2$ it is useful in this instance to view $\mathbb{H}_{\numClasses}$ as a distinct space. For each $t \in \N$ we let $e(t)=(e(t)_r)_{r \in \N} \in \ell_2$ denote the $r$-th canonical basis element where $e(t)_t=1$ and $e(t)_r=0$ for $r \in \N\backslash \{t\}$. In addition, we let $\{e(t,k)\}_{t \in \N, k \in [\numClasses]}$ be the canonical basis for $\mathbb{H}_{\numClasses}$ defined by $e(t,k) = (e(t,k)^{j})_{j \in [\numClasses]} \in \mathbb{H}_{\numClasses}$ where $e(t,k)^k = e(t) \in \ell_2$ and $e(t,k)^j = \bf{0} \in \ell_2$ for $j \in [\numClasses]\backslash \{k\}$. Let $\omega:\X \rightarrow \N$ be any surjective map, which must exist as $\X$ has infinite cardinality. Given $\kappa>0$ we let $ \mathcal{W}_{\kappa}:= \{ w \in \mathbb{H}_{\numClasses}\text{ with } \|w\|_{\infty} \leq 1\text{ and } \|w\|_{\mathbb{H}_{\kappa}}\leq \sqrt{\kappa}\}$. For each $w=(w^j)_{j \in [\numClasses]} \in \mathcal{W}_{\kappa}$, we define $f_{w}:\X\rightarrow \actionSpace$ by  $f_w(x) = \left( \left\langle w^j,e({\omega(x)}) \right\rangle  \right)_{j \in [\numClasses]} \in \actionSpace$. Finally, we let $\F:= \left\lbrace f_w: w \in \mathcal{W}_{\kappa}\right\rbrace$.

\begin{lemma}\label{rademacherComplexityBoundFKappa} We have $\rademacherComplexity_{n \numClasses}(\Pi \circ \F) \leq \sqrt{\kappa/(n\numClasses)}$.
\end{lemma}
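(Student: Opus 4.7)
The plan is to apply the standard Hilbert-space argument for bounding the Rademacher complexity of a ball in an RKHS, adapted to the embedding provided by the family $\{e(t,k)\}_{t\in\N, k\in[\numClasses]}$.

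First, I would rewrite each projected function as a linear functional on $\mathbb{H}_\numClasses$. Observe that for any $w=(w^j)_{j\in[\numClasses]}\in\mathcal{W}_\kappa$ and any $(x,j)\in\X\times[\numClasses]$,
\[
(\Pi\circ f_w)(x,j) \;=\; \langle w^j, e(\omega(x))\rangle \;=\; \langle w, e(\omega(x),j)\rangle_{\mathbb{H}_\numClasses},
\]
since by construction $e(\omega(x),j)$ has $e(\omega(x))$ in its $j$-th slot and zero elsewhere. This turns the supremum over $\F$ into a supremum of a linear functional over the Hilbert-space ball of radius $\sqrt{\kappa}$.

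Next, fix any sequence $\bm{w}=\{(x_i,j_i)\}_{i\in[n\numClasses]}\in(\X\times[\numClasses])^{n\numClasses}$ and Rademacher variables $(\sigma_i)$. Using the rewriting above,
\[
\sup_{w\in\mathcal{W}_\kappa}\frac{1}{n\numClasses}\sum_{i\in[n\numClasses]}\sigma_i\,(\Pi\circ f_w)(x_i,j_i)
\;\leq\;
\sup_{\|w\|_{\mathbb{H}_\numClasses}\leq\sqrt{\kappa}}\frac{1}{n\numClasses}\Big\langle w,\sum_{i\in[n\numClasses]}\sigma_i\,e(\omega(x_i),j_i)\Big\rangle_{\mathbb{H}_\numClasses},
\]
where I have relaxed the $\|\cdot\|_\infty$ constraint. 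By Cauchy--Schwarz this is bounded by $\tfrac{\sqrt{\kappa}}{n\numClasses}\,\|\sum_i\sigma_i e(\omega(x_i),j_i)\|_{\mathbb{H}_\numClasses}$. Taking expectations over $\sigma$ and applying Jensen's inequality,
\[
\hat{\rademacherComplexity}_{\bm{w}}(\Pi\circ\F)
\;\leq\;
\frac{\sqrt{\kappa}}{n\numClasses}\sqrt{\E_{\sigma}\Big\|\sum_{i\in[n\numClasses]}\sigma_i\,e(\omega(x_i),j_i)\Big\|_{\mathbb{H}_\numClasses}^2}.
\]

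Finally, expanding the squared norm and using $\E[\sigma_i\sigma_{i'}]=\mathbf{1}\{i=i'\}$ together with $\|e(t,k)\|_{\mathbb{H}_\numClasses}=1$, the cross terms vanish and the diagonal contributes $\sum_{i\in[n\numClasses]}1=n\numClasses$. Hence the right-hand side equals $\tfrac{\sqrt{\kappa}}{n\numClasses}\cdot\sqrt{n\numClasses}=\sqrt{\kappa/(n\numClasses)}$. Taking the supremum over $\bm{w}$ yields the claim. There is no real obstacle here; the only point that requires a line of care is verifying that the basis elements $e(\omega(x_i),j_i)$ have unit norm regardless of possible repetitions of $(x_i,j_i)$ in the sequence, which is why the cross-term expectation vanishes (the orthogonality of $e(t,k)$ and $e(t',k')$ for $(t,k)\neq(t',k')$ is what we use, while coincident indices yield equal signs and contribute to the diagonal).
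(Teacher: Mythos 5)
Your proof is correct and follows essentially the same route as the paper: rewrite $(\Pi\circ f_w)(x,j)$ as the linear functional $\langle w, e(\omega(x),j)\rangle_{\mathbb{H}_{\numClasses}}$, drop the $\|\cdot\|_\infty$ constraint, apply Cauchy--Schwarz and Jensen, and use $\E[\sigma_i\sigma_{i'}]=\one\{i=i'\}$ with unit-norm basis vectors to get $\sqrt{n\numClasses}$. One tiny remark: when $(\omega(x_i),j_i)=(\omega(x_{i'}),j_{i'})$ for $i\neq i'$, those cross terms vanish because the independent signs give $\E[\sigma_i\sigma_{i'}]=0$, not because they "contribute to the diagonal," but this does not affect the conclusion.
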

\begin{proof} It suffices to show that for all $z=(z_s)_{s \in [n \numClasses]}$ with $z_s = (x_s,j_s)\in \X \times [\numClasses]$, we have $\hat{\rademacherComplexity}_z(\Pi \circ \F) \leq \sqrt{\kappa/(n\numClasses)}$. For each $s \in [n \numClasses]$ take $t_s\in \N$ so that $t_s=\omega(x_s)$. We then have
\begin{align*}
\rademacherComplexity_{n \numClasses}(\Pi \circ \F) &= \sup_{\tilde{\G} \subseteq \Pi \circ \F: |\tilde{\G}|<\infty}  \E_{\sigmaSequenceSizeN}\left( \sup_{g \in \tilde{\G}} \frac{1}{n \numClasses} \sum_{s \in [n \numClasses]} \sigma_s \cdot g(z_s)\right)\\ &= \sup_{\tilde{\F} \subseteq  \F: |\tilde{\F}|<\infty}  \E_{\sigmaSequenceSizeN}\left( \sup_{g \in \tilde{\G}} \frac{1}{n \numClasses} \sum_{s \in [n \numClasses]} \sigma_s \cdot  (\Pi \circ f)(x_s,j_s)\right)\\
&=  \sup_{\tilde{\mathcal{W}}_{\kappa} \subseteq  \mathcal{W}_{\kappa}: |\tilde{\mathcal{W}}_{\kappa} |<\infty}  \E_{\sigmaSequenceSizeN}\left( \sup_{w \in \tilde{\mathcal{W}}_{\kappa}} \frac{1}{n \numClasses} \sum_{s \in [n \numClasses]} \sigma_s \cdot  \pi_{j_s}\left( f_w(x_s)\right)\right)\\
&=  \sup_{\tilde{\mathcal{W}}_{\kappa} \subseteq  \mathcal{W}_{\kappa}: |\tilde{\mathcal{W}}_{\kappa} |<\infty}  \E_{\sigmaSequenceSizeN}\left( \sup_{w \in \tilde{\mathcal{W}}_{\kappa}} \frac{1}{n \numClasses} \sum_{s \in [n \numClasses]} \sigma_s \cdot  \left\langle w^{j_s},e(\omega(x_s)) \right\rangle \right)\\
&=  \sup_{\tilde{\mathcal{W}}_{\kappa} \subseteq  \mathcal{W}_{\kappa}: |\tilde{\mathcal{W}}_{\kappa} |<\infty}  \E_{\sigmaSequenceSizeN}\left( \sup_{w \in \tilde{\mathcal{W}}_{\kappa}} \frac{1}{n \numClasses} \sum_{s \in [n \numClasses]} \sigma_s \cdot  \left\langle w^{j_s},e(t_s) \right\rangle \right)\\
&=  \sup_{\tilde{\mathcal{W}}_{\kappa} \subseteq  \mathcal{W}_{\kappa}: |\tilde{\mathcal{W}}_{\kappa} |<\infty}  \E_{\sigmaSequenceSizeN}\left( \sup_{w \in \tilde{\mathcal{W}}_{\kappa}} \frac{1}{n \numClasses} \sum_{s \in [n \numClasses]} \sigma_s \cdot  \left\langle w,e(t_s,j_s) \right\rangle_{\mathbb{H}_{\kappa}} \right)\\
&=  \sup_{\tilde{\mathcal{W}}_{\kappa} \subseteq  \mathcal{W}_{\kappa}: |\tilde{\mathcal{W}}_{\kappa} |<\infty}  \E_{\sigmaSequenceSizeN}\left( \sup_{w \in \tilde{\mathcal{W}}_{\kappa}} \frac{1}{n \numClasses}  \left\langle w,\sum_{s \in [n \numClasses]} \sigma_s \cdot e(t_s,j_s) \right\rangle_{\mathbb{H}_{\kappa}} \right)\\
&\leq  \sup_{\tilde{\mathcal{W}}_{\kappa} \subseteq  \mathcal{W}_{\kappa}: |\tilde{\mathcal{W}}_{\kappa} |<\infty}  \E_{\sigmaSequenceSizeN}\left( \sup_{w \in \tilde{\mathcal{W}}_{\kappa}} \frac{1}{n \numClasses}  \|w\|_{\mathbb{H}_{\kappa}} \cdot \left\|\sum_{s \in [n \numClasses]} \sigma_s \cdot e(t_s,j_s) \right\|_{\mathbb{H}_{\kappa}} \right)\\
&\leq  \frac{{\kappa}^{\frac{1}{2}}}{n \numClasses} \cdot \E_{\sigmaSequenceSizeN}\left(  \left\|\sum_{s \in [n \numClasses]} \sigma_s \cdot e(t_s,j_s) \right\|_{\mathbb{H}_{\kappa}} \right)\\
&\leq  \frac{{\kappa}^{\frac{1}{2}}}{n \numClasses} \cdot \left(\E_{\sigmaSequenceSizeN}\left(  \left\|\sum_{s \in [n \numClasses]} \sigma_s \cdot e(t_s,j_s) \right\|_{\mathbb{H}_{\kappa}}^2 \right)\right)^{\frac{1}{2}} \leq \sqrt{ \frac{\kappa}{n\numClasses}},
\end{align*}
where the penultimate inequality follows from Jensen's inequality.
\end{proof}

We now take $\Sigma:= \{-1,+1\}^{2n}$ and define a family of distributions $\{\probDistribution(\sigma)\}_{\sigma \in \Sigma}$ as follows. First, for each $r \in [2n]$ we choose $x^{(r)} \in \omega^{-1}(r) \subseteq \X$ and let $\mu$ be the uniform measure on the set $\{x^{(r)}\}_{r \in [2n]}$, so $\mu(\{x^{(r)}\})=1/(2n)$ for $r \in [2n]$. We then fix $\Delta= \sqrt{\kappa/(2n)}$. The choice of $\Delta$ will be explained shortly. For each $\sigma = (\sigma_r)_{r \in [2n]} \in \Sigma$ we define $w(\sigma)=(w(\sigma)^j_r)_{j \in [\numClasses],r \in \N} \in \mathbb{H}_{\numClasses}$ by 
\begin{align*}
w(\sigma)^j_r= \begin{cases}\Delta \cdot \sigma_r &\text{ for }j=1\text{ and }r \in [2n]\\    
0 &\text{ otherwise. }.
\end{cases}
\end{align*}
The choice of $\Delta$ is maximal so that for all $\sigma \in \Sigma$, $\|w(\sigma)\|_{\infty}\leq 1$ and $\|w(\sigma)\|_{\mathbb{H}_{\numClasses}} \leq \sqrt{\kappa}$, which ensures that $f_{w(\sigma)} \in \F$. Indeed, since $\kappa \leq n \cdot \selfLipschitzConstant^{-2}$ and $\selfLipschitzConstant \geq 1$, we have $\Delta \in [0,1]$ and so $\|w(\sigma)\|_{\infty} \leq 1$. Moreover, $\|w(\sigma)\|_{\mathbb{H}_{\numClasses}}  = \sqrt{ \sum_{r \in [2n]} \Delta^2 }=\sqrt{\kappa}$. Thus,  for all $\sigma \in \Sigma$, $f_{w(\sigma)} \in \F$. Finally, for each $\sigma \in \Sigma$ we let $P(\sigma)$ be the unique distribution on $\X \times \Y$ such that $P(\sigma)_X=\mu$ is the marginal distribution over $\X$ and for each $x \in \X$, the conditional distribution of $Y$ given $X$ is $P(\sigma)_{Y|x}$ is concentrated on the single point $f_{w(\sigma)}(x)$.

\begin{lemma}\label{realizableProblemsLowerBoundLemma} For all $\sigma \in \Sigma$, the probability distribution $\probDistribution(\sigma)$ is a $(\lossFunction,\F)$-realizable problem.
\end{lemma}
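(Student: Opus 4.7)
The plan is to take $f^* = f_{w(\sigma)}$ as the candidate realizer and verify the two requirements of Definition \ref{realizableProblems}: that $f^* \in \F$, and that $\risk(f^*,\probDistribution(\sigma)) = 0$.

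First, I will show $f_{w(\sigma)} \in \F$, which by definition of $\F$ amounts to showing $w(\sigma) \in \mathcal{W}_{\kappa}$, i.e., $\|w(\sigma)\|_{\infty} \leq 1$ and $\|w(\sigma)\|_{\mathbb{H}_{\numClasses}} \leq \sqrt{\kappa}$. Both bounds have essentially been verified in the paragraph immediately preceding the lemma: with $\Delta = \sqrt{\kappa/(2n)}$, the hypothesis $\kappa \leq n/\selfLipschitzConstant^2$ combined with $\selfLipschitzConstant \geq 1$ gives $\Delta \leq 1$, so $\|w(\sigma)\|_{\infty} = \Delta \leq 1$; and direct computation gives $\|w(\sigma)\|_{\mathbb{H}_{\numClasses}}^2 = \sum_{r \in [2n]} \Delta^2 = 2n \cdot \kappa/(2n) = \kappa$. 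So I would simply recall these computations for the reader.

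Next, to show $\risk(f_{w(\sigma)},\probDistribution(\sigma)) = 0$, I invoke the construction of $\probDistribution(\sigma)$: the conditional distribution $\probDistribution(\sigma)_{Y \mid x}$ is a point mass at $f_{w(\sigma)}(x)$ for every $x \in \supp(\mu)$. Consequently, under $(X,Y) \sim \probDistribution(\sigma)$ we have $Y = f_{w(\sigma)}(X)$ with probability one, hence $\|f_{w(\sigma)}(X) - Y\|_{\infty} = 0$ a.s. Plugging into the definition of $\lossFunction_{\selfLipschitzConstant,\selfLipschitzExponent}$ gives
\begin{align*}
\lossFunction_{\selfLipschitzConstant,\selfLipschitzExponent}(f_{w(\sigma)}(X),Y) = \min\left\{ \tfrac{1}{2^5}\left(\selfLipschitzConstant \cdot 0\right)^{\frac{1}{1-\selfLipschitzExponent}}, 1\right\} = 0
\end{align*}
almost surely, so $\risk(f_{w(\sigma)},\probDistribution(\sigma)) = \E[\lossFunction_{\selfLipschitzConstant,\selfLipschitzExponent}(f_{w(\sigma)}(X),Y)] = 0$, which witnesses realizability.

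There is no genuine obstacle: the lemma is essentially a bookkeeping step confirming that the distributions $\probDistribution(\sigma)$ constructed for the minimax lower bound argument actually lie in the class of realizable problems, so that the subsequent Le Cam / Fano-style reduction is legitimate. The substantive content (the norm bounds on $w(\sigma)$ via the choice $\Delta = \sqrt{\kappa/(2n)}$ and the constraint $\kappa \leq n/\selfLipschitzConstant^{2}$) has already been established before the lemma statement, so the proof itself is very short.
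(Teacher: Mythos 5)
Your proposal is correct and follows essentially the same route as the paper: take $f^* = f_{w(\sigma)}$, recall that the norm bounds $\|w(\sigma)\|_{\infty}\leq 1$ and $\|w(\sigma)\|_{\mathbb{H}_{\numClasses}}=\sqrt{\kappa}$ (established just before the lemma) give $f_{w(\sigma)}\in\F$, and use the point-mass conditional together with $\lossFunction(y,y)=0$ to conclude $\risk(f_{w(\sigma)},\probDistribution(\sigma))=0$. No gaps.
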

\begin{proof} It suffices to show that for each $\sigma \in \Sigma$, $\risk\left(f_{w(\sigma)},\probDistribution(\sigma)\right)=0$, since $f_{w(\sigma)} \in \F$. Moreover, this follows from the fact that for each $x \in \X$, the conditional distribution of $Y$ given $X$ is $P(\sigma)_{Y|x}$ is concentrated on the single point $f_{w(\sigma)}(x)$ and by construction $\lossFunction(y,y)=0$ for all $y \in \R$, so 
\begin{align*}
\risk\left(f_{w(\sigma)},\probDistribution(\sigma)\right) &= \E_{(X,Y) \sim \probDistribution(\sigma)}\left( \lossFunction(f_{w(\sigma)}(X),Y)\right) = \E_{X \sim \mu}\left( \lossFunction\left(f_{w(\sigma)}(X),f_{w(\sigma)}(X)\right)\right)=0. 
\end{align*}
\end{proof}

We now show that no mapping $\phi$ can do well on a large set of possible distributions. 
\begin{lemma}\label{difficultLemmaInRealizableMinimaxLB} Take $A \subseteq [2n]$ and choose $\{\sigma_r\}_{r \in A} \in \{-1,+1\}^A$. Given any mapping $\phi \in \measurableMaps(\X,\actionSpace)$ we have
\begin{align*}
\E_{\{\sigma_r\}_{r \in [2n]\backslash A}}\left[\risk\left(\phi,\probDistribution(\sigma)\right)\right] \geq  \frac{2n - |A|}{2^7 \cdot n}\cdot (\selfLipschitzConstant \cdot \Delta)^{ \frac{1}{1-\selfLipschitzExponent}}.
\end{align*}
where $\{\sigma_r\}_{r \in [2n]\backslash A}$ is sampled from the uniform distribution on $\{-1,+1\}^{[2n]\backslash A}$  and  $\sigma=\{\sigma_r\}_{r \in [2n]} \in \Sigma$.
\end{lemma}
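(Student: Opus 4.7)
The plan is to reduce the risk, which is an expectation over $\mu$, to a sum of $2n$ point losses, discard the $r\in A$ terms as non-negative, and then exploit the independence of the remaining signs $\{\sigma_r\}_{r\in[2n]\setminus A}$ to bound each surviving term by a one-dimensional Rademacher-style estimate. Concretely, since $\mu$ is uniform on $\{x^{(r)}\}_{r\in[2n]}$ and the conditional of $Y$ given $X=x^{(r)}$ is concentrated at $f_{w(\sigma)}(x^{(r)})$, we have
\[
\risk(\phi,\probDistribution(\sigma)) \;=\; \frac{1}{2n}\sum_{r\in[2n]} \lossFunction_{\selfLipschitzConstant,\selfLipschitzExponent}\bigl(\phi(x^{(r)}),\, f_{w(\sigma)}(x^{(r)})\bigr).
\]
Dropping indices $r\in A$ (non-negative loss) and using that by construction $f_{w(\sigma)}(x^{(r)}) = \Delta\sigma_r\cdot \bm{e}_1$ for $r\notin A$, where $\bm{e}_1\in\R^{\numClasses}$ denotes the first standard basis vector, reduces the problem to bounding $\E_{\sigma_r}\lossFunction_{\selfLipschitzConstant,\selfLipschitzExponent}(\phi(x^{(r)}),\Delta\sigma_r\bm{e}_1)$ from below for each $r\notin A$ individually.

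For the per-coordinate bound, I would set $u = \phi(x^{(r)})\in\actionSpace$, $a = u_1$, and use that $\|u-\Delta\sigma_r\bm{e}_1\|_\infty \geq |a-\Delta\sigma_r|$ combined with monotonicity of $z\mapsto \min\{2^{-5}(\selfLipschitzConstant z)^{1/(1-\selfLipschitzExponent)},1\}$. The key elementary observation is the triangle inequality $|a-\Delta|+|a+\Delta|\geq 2\Delta$, which implies $\max\{|a-\Delta|,|a+\Delta|\}\geq \Delta$, so at least one of the two equally likely outcomes of $\sigma_r$ yields $\|u-\Delta\sigma_r\bm{e}_1\|_\infty\geq \Delta$. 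Averaging over $\sigma_r\in\{-1,+1\}$ therefore gives
\[
\E_{\sigma_r}\bigl[\lossFunction_{\selfLipschitzConstant,\selfLipschitzExponent}(u,\Delta\sigma_r\bm{e}_1)\bigr] \;\geq\; \tfrac{1}{2}\min\bigl\{2^{-5}(\selfLipschitzConstant\Delta)^{1/(1-\selfLipschitzExponent)},\,1\bigr\}.
\]

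Next I would verify that the clipping against $1$ is inactive under the hypothesis of the Proposition. Since $\Delta=\sqrt{\kappa/(2n)}$ and $\kappa\leq n/\selfLipschitzConstant^{2}$, we have $\selfLipschitzConstant\Delta\leq 2^{-1/2}$, whence $2^{-5}(\selfLipschitzConstant\Delta)^{1/(1-\selfLipschitzExponent)}\leq 2^{-5}<1$, so the $\min$ equals its first argument. This yields the per-term lower bound $2^{-6}(\selfLipschitzConstant\Delta)^{1/(1-\selfLipschitzExponent)}$. Summing over the $2n-|A|$ indices $r\notin A$ and dividing by $2n$ gives exactly $(2n-|A|)/(2^7 n)\cdot(\selfLipschitzConstant\Delta)^{1/(1-\selfLipschitzExponent)}$, as desired.

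The proof is essentially a standard no-free-lunch argument and the only mildly delicate step is the constant management: one must check the clipping is not active (which is precisely where the assumption $\kappa\leq n/\selfLipschitzConstant^{2}$ enters) and track the factor $\tfrac12$ from the symmetric averaging over $\sigma_r$ alongside the $2^{-5}$ in the definition of $\lossFunction_{\selfLipschitzConstant,\selfLipschitzExponent}$. I do not anticipate any genuine obstacle; the range restriction $\selfLipschitzExponent\in[0,1/2]$ is used only mildly (via $1/(1-\selfLipschitzExponent)\leq 2$) to ensure $\selfLipschitzConstant\Delta\leq 2^{-1/2}$ propagates into the required clipping inequality without extra care.
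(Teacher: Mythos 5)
Your proposal is correct and follows essentially the same argument as the paper: write the risk as a uniform average of the $2n$ point losses, drop the $r\in A$ terms, project onto the first coordinate, note that for each remaining $r$ at least one of the two signs forces $\lvert\pi_1(\phi(x^{(r)}))-\Delta\sigma_r\rvert\geq\Delta$ while $\selfLipschitzConstant\Delta\leq 1$ keeps the clipping inactive, and average to collect the $\tfrac12\cdot 2^{-5}=2^{-6}$ per-term constant, giving $(2n-|A|)/(2^{7}n)$. The only difference is presentational (you take the expectation coordinate-wise rather than summing over all sign patterns), and the constants match exactly.
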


\begin{proof}
Observe that for each  $\sigma=\{\sigma_s\}_{s \in [2n]} \in \Sigma$,
\begin{align*}
\risk\left(\phi,\probDistribution(\sigma)\right) &= \E_{(X,Y) \sim \probDistribution(\sigma)}\left[\lossFunction(\phi(X),Y)\right] \\
&= \E_{X \sim \mu}\left[ \lossFunction\left(\phi(X),f_{w(\sigma)}(X)\right)\right] \\
& = \frac{1}{2n}\sum_{s \in [2n]}\lossFunction\left(\phi(x^{(s)}),f_{w(\sigma)}(x^{(s)})\right)\\
& = \frac{1}{2n}\sum_{s \in [2n]} \min\left\lbrace \frac{1}{2^5}\cdot \left( \selfLipschitzConstant \cdot \left\|\phi(x^{(s)})- \left( \left\langle w(\sigma)^j,e({\omega(x^{(s)})}) \right\rangle  \right)_{j \in [\numClasses]}\right\|_{\infty}\right)^{\frac{1}{1-\selfLipschitzExponent}}, 1 \right\rbrace\\
& \geq \frac{1}{2n}\sum_{s \in [2n]} \min\left\lbrace \frac{1}{2^5}\cdot \left( \selfLipschitzConstant \cdot \left|\pi_1(\phi(x^{(s)}))-  \left\langle w(\sigma)^1,e(s) \right\rangle  \right |\right)^{\frac{1}{1-\selfLipschitzExponent}}, 1 \right\rbrace\\
& = \frac{1}{2n}\sum_{r \in [2n]} \min\left\lbrace \frac{1}{2^5}\cdot \left( \selfLipschitzConstant \cdot \left|\pi_1(\phi(x^{(s)}))-   w(\sigma)^1_s  \right |\right)^{\frac{1}{1-\selfLipschitzExponent}}, 1 \right\rbrace\\
& \geq \frac{1}{2n}\sum_{s \in [2n]\backslash A} \min\left\lbrace \frac{1}{2^5}\cdot \left( \selfLipschitzConstant \cdot \left|\pi_1(\phi(x^{(s)}))- \Delta \cdot \sigma_s  \right |\right)^{\frac{1}{1-\selfLipschitzExponent}}, 1 \right\rbrace.
\end{align*}
Observe also that for any $\hat{y} \in \R$, we have $\min_{\sigma_r \in \{-1,+1\}}\left\lbrace  \left|\hat{y}- \Delta \cdot \sigma_r  \right|\right\rbrace \geq \Delta$, by considering the cases $\hat{y}\geq 0$ and $\hat{y}<0$. Moreover, since $\Delta = \sqrt{\kappa/(2n)}$ and $\kappa \leq n \cdot \selfLipschitzConstant^{-2}$ we also have $\selfLipschitzConstant \cdot \Delta \leq 1$. Thus, for all $\hat{y} \in \R$,
\begin{align*}
\min_{\sigma_r \in \{-1,+1\}}\left\lbrace \min\left\lbrace \frac{1}{2^5}\cdot \left( \selfLipschitzConstant \cdot \left|\hat{y}- \Delta \cdot \sigma_r  \right |\right)^{\frac{1}{1-\selfLipschitzExponent}}, 1 \right\rbrace\right\rbrace  \geq \frac{1}{2^5}\cdot (\selfLipschitzConstant \cdot \Delta)^{ \frac{1}{1-\selfLipschitzExponent}}.
\end{align*}
Putting these observations together we have,
\begin{align*}
\E&_{\{\sigma_r\}_{r \in [2n]\backslash A}}\left[\risk\left(\phi,\probDistribution(\sigma)\right)\right]\\ &= \frac{1}{2^{2n-|A|}}\sum_{\{\sigma_r\}_{r \in [2n]\backslash A} \in \{-1,+1\}^{ [2n]\backslash A}}\risk\left(\phi,\probDistribution(\sigma)\right)\\
&\geq  \frac{1}{2^{2n-|A|} \cdot 2n}\sum_{\{\sigma_r\}_{r \in [2n]\backslash A} \in \{-1,+1\}^{ [2n]\backslash A}}\sum_{s \in [2n]\backslash A} \min\left\lbrace \frac{1}{2^5}\cdot \left( \selfLipschitzConstant \cdot \left|\pi_1(\phi(x^{(s)}))- \Delta \cdot \sigma_s  \right |\right)^{\frac{1}{1-\selfLipschitzExponent}}, 1 \right\rbrace\\
&\geq  \frac{1}{2 \cdot 2n} \sum_{s \in [2n]\backslash A}\sum_{\sigma_s \in \{-1,+1\}} \min\left\lbrace \frac{1}{2^5}\cdot \left( \selfLipschitzConstant \cdot \left|\pi_1(\phi(x^{(s)}))- \Delta \cdot \sigma_s  \right |\right)^{\frac{1}{1-\selfLipschitzExponent}}, 1 \right\rbrace\\
&\geq  \frac{1}{2 \cdot 2n} \sum_{s \in [2n]\backslash A} \frac{1}{2^5}\cdot (\selfLipschitzConstant \cdot \Delta)^{ \frac{1}{1-\selfLipschitzExponent}}  = \frac{2n - |A|}{2^6 \cdot 2n}\cdot (\selfLipschitzConstant \cdot \Delta)^{ \frac{1}{1-\selfLipschitzExponent}}.
\end{align*}
\end{proof}

For each $\xSequenceSizeN = \{x_i\}_{ i\in [n]}\in \X^n$ we let $A(\xSequenceSizeN) =\{\omega(x_i)\}_{i \in [n]} \subseteq [2n]$. We have the following independence property.
\begin{lemma}\label{independenceOfManyFlipsFromSampleProofOfLowerBoundLemma} Given any $\xSequenceSizeN = \{x_i\}_{ i\in [n]}\in \X^n$, $\sample = \left\lbrace \left(x_i,f_{w(\sigma)}(x_i)\right)\right\rbrace_{i \in [n]}$ does not depend upon $\{\sigma_r\}_{r \in [2n]\backslash A(x)}$.
\end{lemma}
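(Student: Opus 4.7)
The plan is to directly unfold the definition of $f_{w(\sigma)}(x_i)$ and verify that its dependence on $\sigma$ is confined to coordinates indexed by $A(\xSequenceSizeN)$. First I would observe that
\[
f_{w(\sigma)}(x_i) = \left( \langle w(\sigma)^j, e(\omega(x_i)) \rangle \right)_{j \in [\numClasses]} = \left( w(\sigma)^j_{\omega(x_i)} \right)_{j \in [\numClasses]},
\]
since an inner product against the canonical basis vector $e(\omega(x_i)) \in \ell_2$ simply extracts the $\omega(x_i)$-th coordinate of $w(\sigma)^j$.

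Next, by construction of $w(\sigma)$, the coefficient $w(\sigma)^j_r$ equals $\Delta \cdot \sigma_r$ when $j = 1$ and $r \in [2n]$, and equals $0$ in all other cases. Consequently, for every $i \in [n]$ and every $j \in [\numClasses]$, the value $w(\sigma)^j_{\omega(x_i)}$ depends on $\sigma$ through at most the single coordinate $\sigma_{\omega(x_i)}$ (and is entirely independent of $\sigma$ when $j \neq 1$ or $\omega(x_i) \notin [2n]$). Since $\omega(x_i) \in A(\xSequenceSizeN)$ by definition of $A(\xSequenceSizeN) = \{\omega(x_i)\}_{i \in [n]}$, any such dependence lies inside $\{\sigma_r\}_{r \in A(\xSequenceSizeN)}$.

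Taking the union over $i \in [n]$ then shows that the entire sample $\sample = \{(x_i, f_{w(\sigma)}(x_i))\}_{i \in [n]}$ is a function of $\xSequenceSizeN$ together with $\{\sigma_r\}_{r \in A(\xSequenceSizeN)}$ alone, which is precisely the desired independence from $\{\sigma_r\}_{r \in [2n]\setminus A(\xSequenceSizeN)}$. There is no substantive obstacle here; the lemma is essentially a definition-unfolding. The only thing to track carefully is the case analysis in the construction of $w(\sigma)$ — the $\sigma$-dependence is placed only on the first output coordinate and only on indices $r \in [2n]$ — so that no hidden dependence on the flipped coordinates is overlooked.
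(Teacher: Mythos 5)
Your proof is correct and follows essentially the same route as the paper: unfold $f_{w(\sigma)}(x_i)$ via the inner product with the basis vector $e(\omega(x_i))$, observe from the construction of $w(\sigma)$ that the only possible $\sigma$-dependence is through $\sigma_{\omega(x_i)}$ with $\omega(x_i)\in A(\xSequenceSizeN)$, and conclude. Your explicit handling of the cases $j\neq 1$ and $\omega(x_i)\notin[2n]$ is a slightly more careful bookkeeping than the paper's one-line computation, but it is the same argument.
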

\begin{proof} For each $i \in [n]$, we have $f_{w(\sigma)}(x_i) = \left( \left\langle w(\sigma)^j,e(\omega(x_i))\right\rangle\right)_{j \in [\numClasses]}= (\Delta \cdot \sigma_{\omega(x_i)} ,0,\cdots,0)$. Hence,  $\sample = \left\lbrace \left(x_i,f_{w(\sigma)}(x_i)\right)\right\rbrace_{i \in [n]}$ does not depend upon $\{\sigma_r\}_{r \in [2n]\backslash A(x)}$. 
\end{proof}
This leads to the following expectation lower bound.
\begin{lemma}\label{centralInExpectationLBLemma} Suppose we have algorithm $\hat{\phi}$, which takes as input a sample $\sample = \{(X_i,Y_i)\}_{ i \in [n]} \in (\X\times \Y)^n$ and outputs a mapping $\hat{\phi}_{\sample} \in \F$. Then we have,
\begin{align*}
\E_{\sigma}\left[ \E_{\sample \sim \probDistribution(\sigma)^n}\left(  \risk\left(\hat{\phi}_{\sample},\probDistribution(\sigma)\right)\right)\right] \geq  \frac{1}{2^7}\cdot (\selfLipschitzConstant \cdot \Delta)^{ \frac{1}{1-\selfLipschitzExponent}}.      
\end{align*}
\end{lemma}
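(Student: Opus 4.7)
The plan is to carry out a Fubini-style swap of expectations so that the $\sigma$-randomization can be applied after conditioning on the sample support, and then invoke Lemma \ref{difficultLemmaInRealizableMinimaxLB} pointwise. The heuristic behind the proof is that a sample of size $n$ can touch at most $n$ of the $2n$ coordinates of $\sigma$, so at least half of the sign bits remain independent of the learner's output, and on each of these untouched coordinates the learner cannot beat the $\frac{1}{2^5}(\selfLipschitzConstant\Delta)^{1/(1-\selfLipschitzExponent)}$ penalty guaranteed by Lemma \ref{difficultLemmaInRealizableMinimaxLB}.

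First, I would use the fact that under $\probDistribution(\sigma)^n$ the covariates $\xSequenceSizeN = (X_i)_{i \in [n]}$ are drawn i.i.d.\ from the marginal $\mu$ (independently of $\sigma$) and the labels are deterministic given $\sigma$ and $\xSequenceSizeN$, namely $Y_i = f_{w(\sigma)}(X_i)$. Writing $\sample(\sigma,\xSequenceSizeN) = \{(X_i,f_{w(\sigma)}(X_i))\}_{i \in [n]}$, Fubini gives
\begin{align*}
\E_{\sigma}\E_{\sample \sim \probDistribution(\sigma)^n}\!\bigl[\risk(\hat{\phi}_\sample,\probDistribution(\sigma))\bigr] = \E_{\xSequenceSizeN \sim \mu^n}\E_{\sigma}\!\bigl[\risk(\hat{\phi}_{\sample(\sigma,\xSequenceSizeN)},\probDistribution(\sigma))\bigr].
\end{align*}

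Next, I would fix $\xSequenceSizeN$ and condition on $A := A(\xSequenceSizeN) = \{\omega(X_i)\}_{i \in [n]} \subseteq [2n]$, which satisfies $|A| \leq n$. Split $\sigma = (\sigma_A,\sigma_{A^c})$ where $\sigma_A := \{\sigma_r\}_{r \in A}$. By Lemma \ref{independenceOfManyFlipsFromSampleProofOfLowerBoundLemma}, $\sample(\sigma,\xSequenceSizeN)$ depends only on $\sigma_A$, so the random function $\hat{\phi}_{\sample(\sigma,\xSequenceSizeN)}$ is independent of $\sigma_{A^c}$. Conditioning on $\sigma_A$ and applying Lemma \ref{difficultLemmaInRealizableMinimaxLB} with this particular $\phi = \hat{\phi}_{\sample(\sigma_A,\xSequenceSizeN)}$ and the set $A$ yields
\begin{align*}
\E_{\sigma_{A^c}}\!\bigl[\risk(\hat{\phi}_{\sample(\sigma_A,\xSequenceSizeN)},\probDistribution(\sigma))\bigr] \geq \frac{2n - |A|}{2^7 \cdot n}\cdot (\selfLipschitzConstant \Delta)^{\frac{1}{1-\selfLipschitzExponent}} \geq \frac{1}{2^7}\cdot (\selfLipschitzConstant \Delta)^{\frac{1}{1-\selfLipschitzExponent}},
\end{align*}
where the last inequality uses $|A| \leq n$. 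Averaging back over $\sigma_A$ and then over $\xSequenceSizeN$ preserves this pointwise lower bound, which completes the argument.

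I do not anticipate any substantive obstacle: the bulk of the work has already been isolated into Lemmas \ref{difficultLemmaInRealizableMinimaxLB} and \ref{independenceOfManyFlipsFromSampleProofOfLowerBoundLemma}. The only point requiring care is the bookkeeping of measurability when conditioning on $\sigma_A$: since $A$ is a random set (a function of $\xSequenceSizeN$), one should first fix $\xSequenceSizeN$ to make $A$ deterministic, then decompose $\sigma$ according to the product structure $\{-1,+1\}^A \times \{-1,+1\}^{A^c}$, apply the pointwise bound of Lemma \ref{difficultLemmaInRealizableMinimaxLB} for each fixed $(\sigma_A,\xSequenceSizeN)$, and finally integrate.
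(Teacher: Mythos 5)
Your proposal is correct and follows essentially the same route as the paper's own proof: a Fubini swap to bring $\E_{\bm{X}\sim\mu^n}$ outside, the observation via Lemma \ref{independenceOfManyFlipsFromSampleProofOfLowerBoundLemma} that the learner's output depends only on $\{\sigma_r\}_{r\in A(\bm{X})}$, a pointwise application of Lemma \ref{difficultLemmaInRealizableMinimaxLB} over $\{\sigma_r\}_{r\in[2n]\setminus A(\bm{X})}$, and the bound $|A(\bm{X})|\leq n$ to conclude. Your handling of the conditioning (fixing $\xSequenceSizeN$ first so that $A$ is deterministic before splitting $\sigma$) matches the paper's bookkeeping, so no changes are needed.
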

\begin{proof} By Lemma \ref{independenceOfManyFlipsFromSampleProofOfLowerBoundLemma} a data set of the form $ \left\lbrace \left(x_i,f_{w(\sigma)}(x_i)\right)\right\rbrace_{i \in [n]}$ for some $\xSequenceSizeN = \{x_i\}_{ i\in [n]}\in \X^n$ and $\sigma = \{\sigma_r\}_{r \in [2n]}$ is determined solely by $\xSequenceSizeN$ and $\{\sigma_r\}_{r \in A(x)}$, so we write $\sample\left( \xSequenceSizeN, \{\sigma_r\}_{r \in A(x)} \right) = \left\lbrace \left(x_i,f_{w(\sigma)}(x_i)\right)\right\rbrace_{i \in [n]}$. Recall that  for each $\sigma \in \Sigma$ we let $P(\sigma)$ has marginal distribution $\mu$ and for each $x \in \X$, the conditional distribution of $Y$ given $X$ is $P(\sigma)_{Y|x}$ is concentrated on the single point $f_{w(\sigma)}(x)$. Thus, by Lemma \ref{difficultLemmaInRealizableMinimaxLB} we have
\begin{align*}
\E_{\sigma}\left[ \E_{\sample \sim \probDistribution(\sigma)^n}\left[  \risk\left(\hat{\phi}_{\sample},\probDistribution(\sigma)\right)\right]\right] & =\E_{\sigma}\left[ \E_{\bm{X}=\{X_i\}_{i \in [n]} \sim \mu^n}\left[  \risk\left(\hat{\phi}_{\left\lbrace \left(X_i,f_{w(\sigma)}(X_i)\right)\right\rbrace_{i \in [n]}},\probDistribution(\sigma)\right)\right]\right]     \\
 &=\E_{\bm{X} \sim \mu^n}\left[ \E_{\sigma}\left[  \risk\left(\hat{\phi}_{\sample\left( \bm{X}, \{\sigma_r\}_{r \in A(\bm{X})} \right) },\probDistribution(\sigma)\right)\right]\right] \\
&=\E_{\bm{X} \sim \mu^n}\left[ \E_{\{\sigma_r\}_{r \in A(\bm{X})}}\left[ \E_{\{\sigma_r\}_{r \in [2n]\backslash A(\bm{X})}}\left[ \risk\left(\hat{\phi}_{\sample\left( \bm{X}, \{\sigma_r\}_{r \in A(\bm{X})} \right) },\probDistribution(\sigma)\right)\right]\right]\right] \\
&\geq\E_{\bm{X} \sim \mu^n}\left[ \E_{\{\sigma_r\}_{r \in A(\bm{X})}}\left[\frac{2n - |A(\bm{X})|}{2^7\cdot n}\cdot (\selfLipschitzConstant \cdot \Delta)^{ \frac{1}{1-\selfLipschitzExponent}}\right]\right] \geq 2^{-7} \cdot (\selfLipschitzConstant \cdot \Delta)^{ \frac{1}{1-\selfLipschitzExponent}}.
\end{align*}

\end{proof}

We can now complete the proof of Proposition \ref{minimaxOptimalityRelizableProblemsHilbertSpaceLowerBoundProp}.
\begin{proof}[Proof of Proposition \ref{minimaxOptimalityRelizableProblemsHilbertSpaceLowerBoundProp}] Recall that $\Delta = \sqrt{\kappa/(2n)}$, so by Lemma \ref{centralInExpectationLBLemma}  there exists at least one $\sigma \in \Sigma$ for which 
\begin{align*}
\E_{\sample}\left[   \risk\left(\hat{\phi}_{\sample},\probDistribution(\sigma)\right)\right] \geq  2^{-8}\cdot \left(\selfLipschitzConstant \cdot \sqrt{ \frac{\kappa}{n}}\right)^{\frac{1}{1-\selfLipschitzExponent}}.
\end{align*}
Moreover, by Lemma \ref{realizableProblemsLowerBoundLemma} For all $\sigma \in \Sigma$, the probability distribution $\probDistribution(\sigma)$ is a $(\lossFunction,\F)$-realizable problem. This completes the proof of the lower bound.
\end{proof}

\subsection{Proof of Theorem \ref{weCannotExtendTheRangeOfTheSelfBoundingLipschitzExponentThm}}

In this section we take $\X = \{(x_r)_{r \in \N}:\sum_{r \in \N} x_r^2\leq 1\}$, $\Y = \{-1,+1\}$ and $\actionSpace=\R$, and investigate the bounded exponential loss $\lossFunction_{\text{exp}}(u,y)= \min\{1,\exp(-u\cdot y)\}$.

\begin{lemma}\label{boundedExpLossIsVerySelfBoundingLipschitz} Given any $\selfLipschitzExponent \in [0,1]$ and $\selfLipschitzConstant=1$, the bounded exponential loss $\lossFunction_{\text{exp}}$ is $(\selfLipschitzConstant,\selfLipschitzExponent)$-self-bounding Lipschitz.
\end{lemma}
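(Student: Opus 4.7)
The plan is to first establish the lemma for the extreme exponent $\selfLipschitzExponent=1$ and then use the structural lemmas from Section \ref{verifyingSec} to descend to the full range $\selfLipschitzExponent \in [0,1]$. Note that $\actionSpace = \R$, so $\|u-v\|_\infty = |u-v|$, and $y \in \{-1,+1\}$, so $|y|=1$.

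First, I would verify directly that the unclipped exponential loss $\ell(u,y) := \exp(-u\cdot y)$ is $(1,1)$-self-bounding Lipschitz. Fix $u,v \in \R$, $y \in \{-1,+1\}$, and without loss of generality assume $\ell(u,y) \leq \ell(v,y)$, equivalently $-uy \leq -vy$, i.e.\ $y(u-v) \geq 0$. By the mean value theorem applied to $t \mapsto \exp(t)$ on $[-uy,-vy]$, there exists $c$ in this interval with
\begin{align*}
\ell(v,y)-\ell(u,y) = \exp(c)\cdot(uy-vy) \leq \exp(-vy)\cdot y(u-v) \leq \ell(v,y)\cdot |u-v|,
\end{align*}
which is exactly the $(1,1)$-self-bounding Lipschitz inequality since $\ell(v,y) = \max\{\ell(u,y),\ell(v,y)\}$.

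Next, Lemma \ref{cutOffOfSelfBoundingLipschitzLossIsAlsoSelfBoundingLipschitz} applies with $\selfLipschitzConstant=1$ and $\selfLipschitzExponent=1$ (the lemma permits $\selfLipschitzExponent \in [0,1]$, so the endpoint is allowed), and tells us that the clipped loss $\lossFunction_{\text{exp}}(u,y) = \min\{\ell(u,y),1\}$ is also $(1,1)$-self-bounding Lipschitz, with $\lossFunctionBound = 1$. Finally, Lemma \ref{monotonicitySelfLipschitzLemma}, applied with $\selfLipschitzConstant=1$, $\selfLipschitzExponent=1$, $\lossFunctionBound = 1$ and any target $\tilde{\selfLipschitzExponent} = \selfLipschitzExponent \in [0,1]$, yields that $\lossFunction_{\text{exp}}$ is $(\tilde{\selfLipschitzConstant},\selfLipschitzExponent)$-self-bounding Lipschitz with $\tilde{\selfLipschitzConstant} = 1 \cdot 1^{1-\selfLipschitzExponent} = 1$, which is the claim.

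The only step that genuinely requires attention is the first one, because Lemma \ref{sufficiencyConditionForSelfBoundingLipschitz} cannot be used as a shortcut here: that lemma restricts $\selfLipschitzExponent$ to $[0,1/2]$, and moreover the natural envelope $\varphi_{u,y}(t) = \exp(-uy+t)$ has a derivative that grows without a H\"older bound of the type required. Instead the proof leans on the simple identity $\exp(a)-\exp(b) \leq \exp(\max\{a,b\})\cdot(a-b)$ for $a\geq b$, together with the observation that the sign assumption makes $uy-vy$ coincide with $|u-v|$ up to sign. After that, Lemmas \ref{cutOffOfSelfBoundingLipschitzLossIsAlsoSelfBoundingLipschitz} and \ref{monotonicitySelfLipschitzLemma} do all the remaining bookkeeping.
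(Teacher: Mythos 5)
Your proposal is correct, but it takes a genuinely different route from the paper. The paper proves the lemma by a direct, self-contained verification of the definition for the \emph{clipped} loss $\lossFunction_{\text{exp}}$ and for every $\selfLipschitzExponent\in[0,1]$ simultaneously: after reducing to $y=+1$ and ordering the losses, it splits into three cases ($v\le u\le 0$, $0\le v\le u$, and $v\le 0\le u$), using in the middle case the mean value theorem together with the observation $e^{-w}\le e^{-v}\le (e^{-v})^{\selfLipschitzExponent}$ (valid because $e^{-v}\le 1$), which silently absorbs both the clipping and the exponent-monotonicity into the case analysis. You instead isolate the only genuinely analytic step at the extreme exponent: a single mean value theorem computation showing the \emph{unclipped} loss $\exp(-uy)$ is $(1,1)$-self-bounding Lipschitz (your computation is correct, including the identification $uy-vy=|u-v|$ under the ordering assumption), and then you delegate the rest to the paper's structural Lemmas \ref{cutOffOfSelfBoundingLipschitzLossIsAlsoSelfBoundingLipschitz} (clipping at $\lossFunctionBound=1$, which does allow $\selfLipschitzExponent=1$) and \ref{monotonicitySelfLipschitzLemma} (descent from $\selfLipschitzExponent=1$ to any $\tilde{\selfLipschitzExponent}\in[0,1]$ with $\tilde{\selfLipschitzConstant}=1$ since $\lossFunctionBound=1$). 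Your version is more modular and makes transparent why the constant stays equal to $1$ across the whole exponent range, at the cost of leaning on the clipping lemma as a black box; the paper's version is self-contained and verifies the clipped loss directly, which is why it needs the extra case $v\le 0\le u$ that your argument never has to see. Your side remark that Lemma \ref{sufficiencyConditionForSelfBoundingLipschitz} is unavailable here (it is restricted to $\selfLipschitzExponent\in[0,1/2]$) is also accurate.
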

\begin{proof} Fix $\selfLipschitzExponent \in [0,1]$. We must show that for all $u$,$v \in \R$ and $y \in \{-1,+1\}$,
\begin{align}\label{firstClaimOfOfBoundedExpLossIsVerySelfBoundingLipschitz}
|\lossFunction_{\text{exp}}(u,y)-\lossFunction_{\text{exp}}(v,y)| \leq \max\{ \lossFunction_{\text{exp}}(u,y),\lossFunction_{\text{exp}}(v,y)\}^{\selfLipschitzExponent} \cdot |u-v|.
\end{align}
It suffices to prove the claim for the case $y = +1$, since $\lossFunction_{\text{exp}}(u,-1)=\lossFunction_{\text{exp}}(-u,+1)$, so the claim for $y=-1$ will follow. Moreover, without loss of generality we may assume that $\lossFunction_{\text{exp}}(u,y)\leq \lossFunction_{\text{exp}}(v,y)$ which entails $v \leq u$ since $u \mapsto \lossFunction_{\text{exp}}(u,1)= \min\{1,e^{-u}\}$ is non-increasing. There are three cases. Firstly, if $v \leq u \leq 0$ then $\lossFunction_{\text{exp}}(v,y)=\lossFunction_{\text{exp}}(u,y)$ so the claim (\ref{firstClaimOfOfBoundedExpLossIsVerySelfBoundingLipschitz}) holds trivially. Secondly, we consider the case $0 \leq v \leq u$. By the mean value theorem there exists some $w \in [v,u]$ so that 
\begin{align}\label{secondClaimOfOfBoundedExpLossIsVerySelfBoundingLipschitz}
|\lossFunction_{\text{exp}}(u,y)-\lossFunction_{\text{exp}}(v,y)|&= e^{-v}-e^{-u}= e^{-w}\cdot |u-v| \\ &\leq (e^{-v})^{\selfLipschitzExponent}\cdot |u-v| \\ &= \max\{ \lossFunction_{\text{exp}}(u,y),\lossFunction_{\text{exp}}(v,y)\}^{\selfLipschitzExponent} \cdot |u-v|.
\end{align}
This proves the claim (\ref{firstClaimOfOfBoundedExpLossIsVerySelfBoundingLipschitz}) in the second case where $0 \leq v \leq u$. Finally, we turn to the case where $v \leq 0 \leq u$. Here we apply the second case (\ref{secondClaimOfOfBoundedExpLossIsVerySelfBoundingLipschitz}) to obtain
\begin{align*}
|\lossFunction_{\text{exp}}(u,y)-\lossFunction_{\text{exp}}(v,y)|&=|\lossFunction_{\text{exp}}(u,y)-\lossFunction_{\text{exp}}(0,y)|\\ &\leq  \max\{ \lossFunction_{\text{exp}}(u,y),\lossFunction_{\text{exp}}(0,y)\}^{\selfLipschitzExponent}  \cdot |u|\\ &\leq  \max\{ \lossFunction_{\text{exp}}(u,y),\lossFunction_{\text{exp}}(v,y)\}^{\selfLipschitzExponent}  \cdot |u-v|. 
\end{align*}
This covers all possible cases and completes the proof of the lemma.
\end{proof}
To prove Theorem \ref{weCannotExtendTheRangeOfTheSelfBoundingLipschitzExponentThm}, we shall relate the bounded exponential loss $\lossFunction_{\text{exp}}$ to the standard zero one loss $\lossFunction_{\text{0,1}}:\R \times \{-1,+1\} \rightarrow \{0,1\}$ by $\lossFunction_{\text{0,1}}(u,y) = \one\{u \cdot y \leq 0\}$. We shall utilize the following classical result due to \cite{ehrenfeucht1989general} (see also Theorem 3.6, \cite{mohri2012foundations}).

\begin{theorem}[\cite{ehrenfeucht1989general}]\label{realizableVCLowerBoundThm} Let $\X$ be any measurable space and $\HClass \subseteq \measurableMaps(\X,\{-1,+1\})$ of VC dimension at least $d>1$. Then, given any learning algorithm $\hat{\phi}$ which takes as input a sample $\sample = \{(X_i,Y_i)\}_{i \in [n]}$ and outputs $\hat{\phi}_{\sample} \in \HClass$, there exists a probability distribution $\mu$ on $\X$ and a function $\phi^* \in \HClass$ with the following property. Suppose that $\probDistribution_{\mu, \phi^*}$ is the unique probability distribution on $\X \times \Y$ such that $\mu$ is the marginal distribution over $\X$, and for each $x \in \X$, the conditional distribution of $Y$ given $X$, $\probDistribution_{Y|x}$ is concentrated on the point $\phi^*(x)$. Then given a sample $\sample = \{(X_i,Y_i)\}_{i \in [n]}$ with $(X_i,Y_i) \sim \probDistribution_{\mu, \phi^*}$ i.i.d., the following holds with probability at least $1/100$,
\begin{align*}
\riskArg_{\lossFunction_{\text{0,1}}}\left(  \hat{\phi}_{\sample},\probDistribution_{\mu, \phi^*} \right) \geq \frac{d-1}{32 \cdot n}.
\end{align*}
\end{theorem}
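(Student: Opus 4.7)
The plan follows the classical probabilistic-method lower bound of \cite{ehrenfeucht1989general}. Since $\HClass$ has VC dimension at least $d$, fix a shattered set $\{x_0,x_1,\dots,x_{d-1}\}\subseteq \X$. Set $\epsilon:=(d-1)/(32n)$ and (without loss of generality) assume $\epsilon\leq 1/9$. Let $\mu$ place mass $1-8\epsilon$ on $x_0$ and mass $8\epsilon/(d-1)=1/(4n)$ on each $x_i$, $i\geq 1$. By shattering, for every $\sigma\in\{-1,+1\}^{d-1}$ there exists $\phi^*_\sigma\in\HClass$ with $\phi^*_\sigma(x_0)=+1$ and $\phi^*_\sigma(x_i)=\sigma_i$ for $i\geq 1$; the desired $\phi^*$ will be selected from this family by pigeonhole at the end. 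Write $R(\sample,\sigma):=\riskArg_{\lossFunction_{0,1}}(\hat\phi_\sample,\probDistribution_{\mu,\phi^*_\sigma})$ for brevity.

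The central calculation considers the joint experiment in which $\sigma$ is uniform on $\{-1,+1\}^{d-1}$ and $\sample$ is then drawn from $\probDistribution_{\mu,\phi^*_\sigma}^n$. Condition on the unlabelled points $\mathbf{X}=(X_1,\dots,X_n)$ and let $I\subseteq\{1,\dots,d-1\}$ collect the ``unseen'' indices $i$ with $x_i\notin\{X_j\}$. The key independence observation is that the labels of $\sample$ are determined by $\{\sigma_i:i\notin I\}$ alone, so conditionally on the fully labelled sample $\sample$ the coordinates $\{\sigma_i:i\in I\}$ remain uniform on $\{-1,+1\}^{I}$. Hence $\Pr_\sigma[\hat\phi_\sample(x_i)\neq\sigma_i\mid\sample]=1/2$ for every $i\in I$, yielding $\E_\sigma[R(\sample,\sigma)\mid\sample]\geq|I|/(8n)$. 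Since $\E[|I|]\geq (d-1)(1-1/(4n))^n\geq 3(d-1)/4$, Markov's inequality applied to $d-1-|I|$ gives $|I|\geq(d-1)/2$ with probability at least $1/2$ over $\mathbf{X}$, which makes the conditional expectation at least $(d-1)/(16n)=2\epsilon$ on this event.

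To convert the expectation bound into a probability lower bound I split on the sample-determined value $\hat\phi_\sample(x_0)$. If $\hat\phi_\sample(x_0)=+1$ the $x_0$-term contributes zero, so $R\leq 8\epsilon$ (the total hard-point mass), and reverse Markov applied to $R$ as a function of $\sigma$ gives $\Pr_\sigma[R\geq\epsilon\mid\sample]\geq (2\epsilon-\epsilon)/(8\epsilon-\epsilon)=1/7$. If instead $\hat\phi_\sample(x_0)=-1$, then $R\geq\mu(x_0)=1-8\epsilon\geq\epsilon$ deterministically in $\sigma$, so the conditional probability is one. Combining both branches with the ``many unseen'' event yields $\Pr_{\sigma,\sample}[R\geq\epsilon]\geq (1/2)(1/7)=1/14\geq 1/100$, and a final pigeonhole over $\sigma$ delivers a specific $\sigma^*$ so that $\phi^*:=\phi^*_{\sigma^*}$ satisfies the claimed tail bound. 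The main delicate point, and the main place I expect care to be needed, is the joint calibration of the constant $8$ in the definition of $\mu$: it must simultaneously keep $(1-1/(4n))^n\geq 3/4$ so that most hard points are unseen, leave a big-enough gap between $2\epsilon$ and the cap $8\epsilon$ for reverse Markov to produce a useful constant, and keep $\mu(x_0)=1-8\epsilon\geq\epsilon$ so the mislabel-$x_0$ branch automatically lies in the event of interest.
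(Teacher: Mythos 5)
This theorem is not proved in the paper at all: it is imported verbatim from Ehrenfeucht et al.\ (with a pointer to Mohri et al., Theorem 3.6), so there is no in-paper argument to compare against. Your reconstruction is a correct version of that classical proof — the distribution with mass $1-8\epsilon$ on an easy point and $8\epsilon/(d-1)=1/(4n)$ on each shattered hard point, the randomized target $\phi^*_\sigma$, conditional uniformity of the unseen labels giving $\E_\sigma[R\mid\sample]\geq |I|/(8n)$, Markov for the number of unseen points, and reverse Markov for the tail — and it even yields the better constant $1/14\geq 1/100$ before the final pigeonhole over $\sigma$. One caveat: the step ``without loss of generality $\epsilon\leq 1/9$'' is not literally a WLOG reduction; it is the regime in which your construction (and the case split at $x_0$, which needs $1-8\epsilon\geq\epsilon$) works, and outside of which the theorem's conclusion degenerates anyway since the construction requires $8\epsilon\leq 1$ and the claim is vacuous once $(d-1)/(32n)>1$. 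This restriction is implicit in the cited sources and harmless for the paper's only use of the result (there $d=\sqrt{n}$, so $\epsilon\leq 1/32$); if you want to avoid the case split and the $1/9$ threshold entirely, apply reverse Markov to the error restricted to the hard points $x_1,\dots,x_{d-1}$, which lies in $[0,8\epsilon]$ regardless of the value of $\hat{\phi}_{\sample}(x_0)$.
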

The proof of  Theorem \ref{weCannotExtendTheRangeOfTheSelfBoundingLipschitzExponentThm} consists in showing that if the bound in Theorem \ref{optimisticRademacherBoundForMultioutputPrediction} held for some $\selfLipschitzExponent \in (1/2,1]$ then we could produce an algorithm which contradicts the lower bound in Theorem \ref{realizableVCLowerBoundThm}. We require the following conversion from $\measurableMaps(\X,\R)$ to $\measurableMaps(\X,\{-1,+1\})$. Given $f \in \measurableMaps(\X,\R)$ we let $\phi_f \in \measurableMaps(\X,\{-1,+1\})$ denote the map given by
\begin{align*}
\phi_f(x) = \text{sign}\left(f(x)\right)=\begin{cases}+1 &\text{ if }f(x) \geq 0,\\
-1&\text{ if }f(x)<0.
\end{cases}
\end{align*}

\begin{lemma}\label{zeroOneLossLessThanExpLossLemma} Given any $f \in  \measurableMaps(\X,\R)$ and any probability distribution $\probDistribution$ on $\X \times \Y$ we have $\riskArg_{\lossFunction_{\text{0,1}}}\left(  {\phi}_{f},\probDistribution \right) \leq \riskArg_{\lossFunction_{\exp}}\left(  f,\probDistribution \right)$.
\end{lemma}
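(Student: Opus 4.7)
The plan is to reduce this to a pointwise inequality and then integrate, since both risks are just expectations of their respective pointwise losses under the same distribution $\probDistribution$. Concretely, it suffices to show that for every $(x,y) \in \X \times \Y$,
\[
\lossFunction_{0,1}(\phi_f(x), y) \leq \lossFunction_{\exp}(f(x), y),
\]
because taking $\E_{(X,Y)\sim \probDistribution}$ of both sides then gives exactly the claimed risk inequality.

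To prove the pointwise inequality, I would split on whether $\phi_f(x)$ agrees with $y$. If $\phi_f(x) = y$ then the left-hand side is $\mathbf{1}\{\phi_f(x)\cdot y \leq 0\} = 0$, and the right-hand side is non-negative, so the inequality is immediate. If $\phi_f(x) \neq y$ then the left-hand side is $1$, and I need to verify that $\lossFunction_{\exp}(f(x),y) = \min\{1,\exp(-y f(x))\} = 1$, i.e., that $y f(x) \leq 0$. This is a short case check using the definition of $\phi_f$: if $y = +1$ and $\phi_f(x) = -1$, then by definition $f(x) < 0$, hence $y f(x) < 0$; if $y = -1$ and $\phi_f(x) = +1$, then $f(x) \geq 0$, hence $y f(x) \leq 0$. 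In either subcase $\exp(-y f(x)) \geq 1$, so $\lossFunction_{\exp}(f(x),y) = 1$, and the pointwise inequality becomes an equality.

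I do not anticipate any real obstacle here: the whole statement is an instance of the well-known fact that the exponential surrogate upper-bounds the $0/1$ loss, and the only wrinkle is that our $\lossFunction_{\exp}$ has been clipped at $1$, which is harmless since clipping only matters when $\exp(-y f(x)) > 1$, precisely the regime where the $0/1$ loss equals $1$ as well. The last step is then simply monotonicity of expectation, yielding $\riskArg_{\lossFunction_{0,1}}(\phi_f,\probDistribution) \leq \riskArg_{\lossFunction_{\exp}}(f,\probDistribution)$ as required.
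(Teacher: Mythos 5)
Your proposal is correct and follows essentially the same argument as the paper: reduce to the pointwise inequality $\lossFunction_{0,1}(\phi_f(x),y) \leq \lossFunction_{\exp}(f(x),y)$, observe that misclassification forces $y\cdot f(x)\leq 0$ so the clipped exponential loss equals $1$, and conclude by monotonicity of expectation. No gaps.
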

\begin{proof} It suffices to show that for any $x \in \X$ and $y \in \{-1,+1\}$ we have $\lossFunction_{\text{0,1}}(\phi_f(x),y) \leq \lossFunction_{\exp}(f(x),y)$. Suppose $\lossFunction_{\text{0,1}}(\phi_f(x),y)=1$, so $\phi_f(x) \cdot y \leq 0$ so $f(x) \cdot y \leq 0$, so $1=\lossFunction_{\exp}(f(x),y)= \lossFunction_{\text{0,1}}(\phi_f(x),y)$. Otherwise, $\lossFunction_{\text{0,1}}(\phi_f(x),y)=0 \leq \lossFunction_{\exp}(f(x),y)$.
\end{proof}

 We take $\X = \ell_2 = \{(x_r)_{r \in \N}: \sum_{r \in \N}x_r^2 <\infty\}$. For each $d \in \N $ we take 
\begin{align*}
\HClass_d = \left\lbrace \text{sign}(\left\langle w, x\right\rangle): w = (w_r)_{r \in \N}\in \ell_2\text{ with }w_r= 0 \text{ for }r>d\right\rbrace.
\end{align*}

\begin{lemma}\label{vcDimAndRadComplexityOfDDimensionalHypotheses} For all $d \in \N$ the function class $\HClass_d$ has VC dimension $d$ and for each $n \in \N$ we have 
\begin{align*}
\rademacherComplexity_n(\HClass_d) \leq \sqrt{\frac{2d \log(en/d)}{n}}.
\end{align*}
\end{lemma}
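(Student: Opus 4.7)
The plan is to decompose the lemma into two independent parts: a VC dimension computation for $\HClass_d$, and a Rademacher complexity bound obtained by combining the Sauer--Shelah lemma with Massart's finite class lemma.

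For the VC dimension, my plan is to identify $\HClass_d$ with the class of homogeneous half-spaces in $\R^d$ via the coordinate projection $x \mapsto (x_1,\ldots,x_d)$, and then appeal to (or re-prove) the classical fact that homogeneous half-spaces in $\R^d$ have VC dimension exactly $d$. The lower bound is immediate: the standard basis vectors $e_1,\ldots,e_d \in \X$ are shattered, since for any target labels $(y_1,\ldots,y_d) \in \{-1,+1\}^d$ the weight vector $w = (y_1,\ldots,y_d,0,0,\ldots)$ satisfies $\mathrm{sign}(\langle w,e_i\rangle) = y_i$. For the upper bound, one argues that any $d+1$ points in $\R^d$ are linearly dependent, so some sign pattern is unrealizable by a homogeneous linear threshold; this is a textbook argument that I will invoke rather than reprove.

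For the Rademacher complexity, I will first apply the Sauer--Shelah lemma. Fix any $\bm{x} = (x_i)_{i\in[n]} \in \X^n$; since $\HClass_d$ has VC dimension at most $d$, the number of distinct sign patterns realized on $\bm{x}$ is at most $\sum_{k=0}^d \binom{n}{k} \le (en/d)^d$ (valid for $n \ge d$; the case $n < d$ is easier and gives a tighter bound). Let $A(\bm{x}) \subseteq \{-1,+1\}^n$ denote this finite set of realized patterns. Since each $a \in A(\bm{x})$ has $\|a\|_2 = \sqrt{n}$, Massart's finite class lemma gives
\begin{align*}
\hat{\rademacherComplexity}_{\bm{x}}(\HClass_d)
= \E_{\sigmaSequenceSizeN}\left[\sup_{a \in A(\bm{x})} \frac{1}{n}\sum_{i\in[n]} \sigma_i a_i\right]
\le \frac{\sqrt{n}}{n}\sqrt{2\log|A(\bm{x})|}
\le \sqrt{\frac{2d \log(en/d)}{n}}.
\end{align*}
Taking the supremum over $\bm{x} \in \X^n$ yields the stated bound on $\rademacherComplexity_n(\HClass_d)$.

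Neither step presents a substantive obstacle; the only mild care required is in the Sauer--Shelah bound when $n < d$, where one simply replaces $(en/d)^d$ by $2^n$ and verifies the inequality trivially, and in confirming that the finite supremum inside the Rademacher expectation is well-defined because $A(\bm{x})$ is finite, so the measurability technicality from the footnote in Definition \ref{rademacherComplexityDef} is automatic.
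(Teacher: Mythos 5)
Your proof is correct and follows essentially the same route as the paper, whose entire proof is a citation to the standard textbook chain in \cite{mohri2012foundations} (VC dimension of homogeneous halfspaces, Sauer--Shelah, and Massart's lemma) that you simply spell out explicitly. One small caveat: your aside that the case $n<d$ is "trivial" is not quite accurate, since for $n<d/e$ the stated bound has $\log(en/d)<0$ and degenerates -- this is an implicit $n\ge d$ assumption inherited from the cited corollary and is harmless in the paper's only application, where $d=\sqrt{n}\le n$.
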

\begin{proof}
See \cite{mohri2012foundations}, Chapter 3 (Corollary 3.1, Example 3.2 and Corollary 3.3).
\end{proof}

We can now conclude the proof by contradiction.
\begin{proof}[Proof of Theorem \ref{weCannotExtendTheRangeOfTheSelfBoundingLipschitzExponentThm}]
Now given $\hypothesisClassBound\geq 1$ and $d \in \N$ we define a corresponding algorithm  $\hat{\phi}^{\hypothesisClassBound,d}$ as follows. Let $\F_{\hypothesisClassBound,d} = \{\hypothesisClassBound \cdot h : h \in \HClass_d\}$. Given a data sample $\sample = \{(X_i,Y_i)\}_{ i \in [n]} \in (\X\times \Y)^n$, we choose $\hat{f}_{\sample} \in \F_{\hypothesisClassBound,d}$ by applying empirical risk minimization within the class $\F_{\hypothesisClassBound,d}$ with respect to the bounded exponential loss $\lossFunction_{\exp}$.  We then take $\hat{\phi}^{\hypothesisClassBound,d}_{\sample}:= \hypothesisClassBound^{-1} \cdot {\hat{f}_{\sample}} \in \HClass_d$. By Theorem \ref{realizableVCLowerBoundThm} and Lemma \ref{vcDimAndRadComplexityOfDDimensionalHypotheses} there exists a probability distribution $\mu$ on $\X$ and a function $\phi^* \in \HClass_d$ such that given a sample $\sample = \{(X_i,Y_i)\}_{i \in [n]}$ with $(X_i,Y_i) \sim \probDistribution_{\mu, \phi^*}$ i.i.d., the following holds with probability at least $1/100$,
\begin{align*}
\riskArg_{\lossFunction_{\text{0,1}}}\left(  \hat{\phi}^{\hypothesisClassBound,d}_{\sample},\probDistribution_{\mu, \phi^*} \right) \geq \frac{d-1}{32 \cdot n}.
\end{align*}
Note also that by construction $\hat{\phi}^{\hypothesisClassBound,d}_{\sample}(x)= \text{sign}(\hat{f}_{\sample}(x))$ for each $x \in \X$, so by Lemma \ref{zeroOneLossLessThanExpLossLemma} the following holds with probability at least $1/100$ over $\sample$,
\begin{align}\label{lowerBoundOnTrueExpRiskInPfOfweCannotExtendTheRangeOfTheSelfBoundingLipschitzExponentThmIneq}
\riskArg_{\lossFunction_{\exp}}\left(  \hat{f}_{\sample},\probDistribution_{\mu, \phi^*} \right) \geq \riskArg_{\lossFunction_{\text{0,1}}}\left(  \hat{\phi}^{\hypothesisClassBound,d}_{\sample},\probDistribution_{\mu, \phi^*} \right)\geq  \frac{d-1}{32 \cdot n}.
\end{align}
By the construction of $\probDistribution_{\mu, \phi^*}$, we have $Y_i = \phi^*(X_i)$ for all $i \in [n]$, with probability one. Moreover, $\phi^* \in \HClass_d$, so $f^* = \hypothesisClassBound \cdot \phi^* \in \F_{\hypothesisClassBound, d}$. Hence, for each $i \in [n]$, $\lossFunction(f^*(X_i),Y_i) = \min\{1, \exp(-f^*(X_i)\cdot Y_i)\} = \exp(-\hypothesisClassBound)$. Thus, with probability one over the sample $\sample$,
\begin{align}\label{upperBoundOnEmpiricalExpRiskInPfOfweCannotExtendTheRangeOfTheSelfBoundingLipschitzExponentThmIneq}
\empiricalRiskArg_{\lossFunction_{\exp}}(\hat{f}_{\sample},\sample)\leq \empiricalRiskArg_{\lossFunction_{\exp}}(f^*,\sample) \leq \exp(-\hypothesisClassBound),
\end{align}
since $\hat{f}_{\sample}$ is the empirical risk minimizer.

\newcommand{\reductioProofComplexityTerm}{\tilde{\Gamma}_{n,\delta}^{\selfLipschitzExponent}(\F_{\beta,d})}

Now suppose, for the purpose of a reductio ad absurdam, that there exists some $\selfLipschitzExponent \in (1/2,1]$ such that the bound in Theorem \ref{optimisticRademacherBoundForMultioutputPrediction} holds. By Lemma \ref{boundedExpLossIsVerySelfBoundingLipschitz} the loss $\lossFunction_{\text{exp}}$ is $(1,\selfLipschitzExponent)$-self-bounding Lipschitz. Hence, there exists a numerical constant $C_0$ such that for any $n\in \N$ and $\delta \in (0,1)$,
\begin{align}\label{relativeBoundOnExpRiskInPfOfweCannotExtendTheRangeOfTheSelfBoundingLipschitzExponentThmIneq}
\riskArg_{\lossFunction_{\exp}}\left(  \hat{f}_{\sample},\probDistribution_{\mu, \phi^*} \right) \leq \empiricalRiskArg_{\lossFunction_{\exp}}(\hat{f}_{\sample},\sample)+ C_0 \cdot \left( \sqrt{\empiricalRiskArg_{\lossFunction_{\exp}}(\hat{f}_{\sample},\sample) \cdot \reductioProofComplexityTerm} +\reductioProofComplexityTerm\right),
\end{align}
where 
\begin{align*}
\reductioProofComplexityTerm:&= \left(  \log^{3/2} \left(e\hypothesisClassBound n\right)\cdot \rademacherComplexity_{n}(\F_{\hypothesisClassBound,d}) + \frac{1}{\sqrt{n}}\right)^{\frac{1}{1-\selfLipschitzExponent}}+\frac{1}{n}\cdot (\log(1/\delta) +\log (\log n))\\
&\leq \left( \hypothesisClassBound \cdot \log^{2} \left(e\hypothesisClassBound n\right)\cdot \sqrt{\frac{2d }{n}} + \frac{1}{\sqrt{n}}\right)^{\frac{1}{1-\selfLipschitzExponent}}+\frac{1}{n}\cdot (\log(1/\delta) +\log (\log n)).
\end{align*}
For the second inequality follows from Lemma \ref{vcDimAndRadComplexityOfDDimensionalHypotheses} since
\begin{align*}
 \rademacherComplexity_{n}(\F_{\hypothesisClassBound,d}) & =  \hypothesisClassBound \cdot \rademacherComplexity_{n}(\HClass_{d}) \leq \hypothesisClassBound \cdot \sqrt{\frac{2d \log(en/d)}{n}} \leq \hypothesisClassBound \cdot \log^{1/2}(e\hypothesisClassBound n) \cdot \sqrt{\frac{2d}{n}}.
\end{align*}
Now take $\hypothesisClassBound = \log n$, $d =\sqrt{n}$ and $\delta= 1/n$, so by combining with (\ref{upperBoundOnEmpiricalExpRiskInPfOfweCannotExtendTheRangeOfTheSelfBoundingLipschitzExponentThmIneq}) we have
\begin{align*}
\empiricalRiskArg_{\lossFunction_{\exp}}(\hat{f}_{\sample},\sample)&\leq \exp(-\hypothesisClassBound) = \frac{1}{n}  \leq \reductioProofComplexityTerm\\
& \leq  \left( \hypothesisClassBound \cdot \log^{2} \left(e\hypothesisClassBound n\right)\cdot \sqrt{\frac{2d }{n}} + \frac{1}{\sqrt{n}}\right)^{\frac{1}{1-\selfLipschitzExponent}}+\frac{1}{n}\cdot (\log(1/\delta) +\log (\log n))\\
& \leq  \left( \log n \cdot \log^{2} \left(e\log n \cdot n\right)\cdot \sqrt{\frac{2n^{1/2} }{n}} + \frac{1}{\sqrt{n}}\right)^{\frac{1}{1-\selfLipschitzExponent}}+\frac{2 \log n}{n}\\
& \leq  100 \cdot  \log^6 (e\cdot n) \cdot n^{-\frac{1}{4(1-\selfLipschitzExponent)}}.
\end{align*}
Moreover, by (\ref{relativeBoundOnExpRiskInPfOfweCannotExtendTheRangeOfTheSelfBoundingLipschitzExponentThmIneq}) this implies that with probability at least $1/n$ we have
\begin{align*}
\riskArg_{\lossFunction_{\exp}}\left(  \hat{f}_{\sample},\probDistribution_{\mu, \phi^*} \right) \leq (1+2C_0) \cdot \reductioProofComplexityTerm \leq  (1+2C_0) \cdot 100 \cdot  \log^6 (e\cdot n) \cdot n^{-\frac{1}{4(1-\selfLipschitzExponent)}}.
\end{align*}
On the other hand, by (\ref{lowerBoundOnTrueExpRiskInPfOfweCannotExtendTheRangeOfTheSelfBoundingLipschitzExponentThmIneq}) the following holds with probability at least $1/100$ for $n\geq 4$,
\begin{align}
\riskArg_{\lossFunction_{\exp}}\left(  \hat{f}_{\sample},\probDistribution_{\mu, \phi^*} \right) \geq \frac{d-1}{32 \cdot n} = \frac{\sqrt{n}-1}{32 \cdot n} \geq 2^{-6} \cdot n^{-1/2}.
\end{align}
Finally, since $\selfLipschitzExponent>1/2$ we have $1/(4(1-\selfLipschitzExponent))>1/2$, so letting $n\rightarrow \infty$ and combining the previous two inequalities gives a contradiction. This contradiction implies that the bound in Theorem \ref{optimisticRademacherBoundForMultioutputPrediction} cannot remain true for $\selfLipschitzExponent \in (1/2,1]$.
\end{proof}

\section{Application to gradient boosting}\label{applicationToGBAppendix}

\newcommand{\lOneBall}{\Lambda_{\lOneRegConstant}}
\newcommand{\lOneUnitBall}{\Lambda_{1}}
\newcommand{\lOneUnitBallCorners}{\Lambda_{1}^{\text{ex}}}
\newcommand{\setOfDecisionTrees}{\mathcal{T}_{\numLeaves,d}}

In this section we complete the proof of Theorem \ref{resultForEnsemblesOfDecisionTrees}. This follows from Theorem \ref{optimisticRademacherBoundForMultioutputPrediction} via the Lemma \ref{rademacherComplexityOfL1NormConstrainedDecisionTrees}, which we shall prove first. Throughout this section we shall assume that $\X = \R^d$. We first define function classes $\overline{\HClass}_{\numLeaves,\lOneRegConstant} \subseteq \measurableMaps(\X,\R^{\numClasses})$ and  ${\HClass}_{\numLeaves,\lOneRegConstant}\subseteq \measurableMaps(\X,[-1,1]^{\numClasses})$ as follows. Firstly, given $\lOneRegConstant>0$ we let $\lOneBall:= \{(a_j)_{j \in [\numClasses]}:\sum_{j \in [\numClasses]}|a_j|\leq \lOneRegConstant\}$. In addition, we let $\setOfDecisionTrees$ be the set of decision trees $t: \R^{d} \rightarrow [\numLeaves]$ with $\numLeaves$ leaves, where each internal node performs a binary split along a single feature. We let $\HClass_{\numLeaves,\lOneRegConstant}$ consists of all functions of the form $h(x) = ( w_{t(x), j})_{j \in [\numClasses]}$, where $t \in \setOfDecisionTrees$, and $\bm{w}=(w_{l,j})_{(l,j) \in [\numLeaves]\times [\numClasses]} \in \left(\lOneBall\right)^{\numLeaves}$, i.e. for each $l \in [\numLeaves]$, we have $(w_{lj})_{j \in [\numClasses]} \in \lOneBall$. We also have ${\HClass}_{\numLeaves,\lOneRegConstant}=\overline{\HClass}_{\numLeaves,\lOneRegConstant}\cap \measurableMaps(\X,[-1,1]^{\numClasses})$, which is equivalent to the definition given in Section \ref{applicationToGBEnsemblesSec}. For the purpose of the proof it is useful to focus on the function classes $\overline{\HClass}_{\numLeaves,\lOneRegConstant}$ for which the output magnitudes are not restricted. This will be necessary for a re-weighting trick in Lemma \ref{convexityOfRademacherComplexity}. We now prove Lemma \ref{rademacherComplexityOfL1NormConstrainedDecisionTrees} and Theorem \ref{resultForEnsemblesOfDecisionTrees}. We begin with the following lemma.

\begin{lemma}\label{empiricalRadEmpacherBoundLOneRegDecisionTreesLemma} For all $m \in \N$ and $\bm{z} \in (\X\times [\numClasses])^m$ we have,
\begin{align*}
\hat{\rademacherComplexity}_{\bm{z}}\left(\Pi \circ \overline{\HClass}_{\numLeaves,\lOneRegConstant} \right) & \leq 2\lOneRegConstant \cdot \sqrt{\frac{\numLeaves\cdot \log (2 \cdot \max\{d\cdot m , \numClasses\})}{m}}.
\end{align*}
\end{lemma}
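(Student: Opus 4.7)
The plan is a three-step argument: first, reduce the leaf weights to the $2\numClasses$ extreme points of the $\ell_1$-ball via convex-hull invariance of Rademacher complexity; second, bound combinatorially the number of distinct behaviours of the resulting extremal class on $\bm{z}$; third, apply Massart's finite-class lemma.

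For the first step, I would introduce a sub-class $\mathcal{E} \subseteq \overline{\HClass}_{\numLeaves,\lOneRegConstant}$ consisting of all $h$ for which each leaf weight $(w_{l,j})_{j \in [\numClasses]}$ lies in $\{\pm \lOneRegConstant e_j : j \in [\numClasses]\}$, the set of extreme points of $\lOneBall$. Given any $h \in \overline{\HClass}_{\numLeaves,\lOneRegConstant}$ with tree $t$ and weights $\{w_l\}_{l \in [\numLeaves]}$, since $\lOneBall = \mathrm{conv}\{\pm \lOneRegConstant e_j\}$ one can write $w_l = \sum_k \alpha_{l,k}\, v_{l,k}$ with $v_{l,k} \in \{\pm \lOneRegConstant e_j\}$, $\alpha_{l,k} \geq 0$, $\sum_k \alpha_{l,k} = 1$, and then
\begin{align*}
h(x) = \sum_{\bm{k}} \Bigl(\prod_{l \in [\numLeaves]} \alpha_{l,k_l}\Bigr)\, h_{\bm{k}}(x),
\end{align*}
where $\bm{k} = (k_l)_{l \in [\numLeaves]}$ and $h_{\bm{k}} \in \mathcal{E}$ uses tree $t$ together with leaf weights $v_{l,k_l}$. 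Thus $h$ lies in the convex hull of $\mathcal{E}$; since $\Pi$ is linear and the empirical Rademacher complexity is invariant under the convex hull operation, one concludes $\hat{\rademacherComplexity}_{\bm{z}}(\Pi \circ \overline{\HClass}_{\numLeaves,\lOneRegConstant}) = \hat{\rademacherComplexity}_{\bm{z}}(\Pi \circ \mathcal{E})$.

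For the remaining two steps, each $g \in \Pi \circ \mathcal{E}$ has the form $g(x,j) = s_{t(x)}\lOneRegConstant\, \mathbf{1}[j = j_{t(x)}]$ for some choice of $(j_l, s_l) \in [\numClasses] \times \{-1,+1\}$ per leaf, hence takes values in $\{-\lOneRegConstant, 0, +\lOneRegConstant\}$, so the evaluation vector on $\bm{z}$ has $\ell_2$-norm at most $\lOneRegConstant\sqrt{m}$. I would bound the number $N$ of distinct such vectors combinatorially: an axis-aligned decision tree with $\numLeaves$ leaves has $\numLeaves - 1$ internal nodes, each admitting at most $dm$ (feature, threshold) pairs that induce distinct splits on $\{x_i\}_{i=1}^m$, and combined with the binary tree topology contributes at most a $(C\cdot dm)^{\numLeaves}$ factor; multiplying by the $(2\numClasses)^{\numLeaves}$ extremal weight labellings yields $\log N \leq O(\numLeaves \log \max\{dm, \numClasses\})$. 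Massart's finite-class lemma then delivers $\hat{\rademacherComplexity}_{\bm{z}}(\Pi \circ \mathcal{E}) \leq \lOneRegConstant \sqrt{2\log N / m}$, which matches the claimed inequality after collecting logarithmic factors.

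The main obstacle is to control the tree-partition count tightly enough (in particular, the topology contribution) to recover the explicit constant $2$ in the stated bound; this requires a careful accounting of how many distinct partitions of $\{x_i\}$ can be induced by $\numLeaves$-leaf axis-aligned trees once a canonical traversal is fixed. The convex-hull reduction and the Massart step are then routine.
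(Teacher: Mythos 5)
Your proposal follows essentially the same route as the paper's proof: reduce the $\ell_1$-constrained leaf weights to the $2\numClasses$ extreme points of the ball, count the finitely many behaviours of the resulting extremal class on $\bm{z}$, and finish with Massart's lemma (Theorem \ref{massartsLemma}). The only structural difference is the first step: the paper fixes $\bm{\sigma}$ and applies H\"older's inequality leafwise to pass from the $\ell_1$-ball to its extreme points, whereas you write each $h$ as a convex combination of extremal-weight trees (your product-over-leaves decomposition is valid) and invoke Lemma \ref{convexityOfRademacherComplexity}. The two devices do the same work, and your Massart step (evaluation vectors with entries in $\{-\lOneRegConstant,0,\lOneRegConstant\}$, hence $\ell_2$-norm at most $\lOneRegConstant\sqrt{m}$) matches the paper's.

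The one piece you leave open is exactly the piece the paper supplies: an explicit bound on the number of distinct label vectors $(t(x_i))_{i\in[m]}$ induced by $\numLeaves$-leaf axis-aligned trees. The paper's Lemma \ref{countingNumPartitionsLemma} bounds this by $(d(m+1))^{\numLeaves-1}$, charging at most $d(m+1)$ sample-distinct splits to each of the $\numLeaves-1$ internal nodes and charging nothing extra for the tree topology. With that count one needs only $2\bigl[(\numLeaves-1)\log(d(m+1)) + \numLeaves\log(2\numClasses)\bigr] \le 4\numLeaves\log\bigl(2\max\{dm,\numClasses\}\bigr)$, which holds since $d(m+1)\le 2dm$ and $2\numClasses\le 2\max\{dm,\numClasses\}$, and the stated prefactor $2$ drops out. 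Your more conservative count $(C\cdot dm)^{\numLeaves}$, which allows for a shape factor (the number of binary tree topologies is at most $4^{\numLeaves-1}$), still proves the lemma but only with a slightly larger absolute constant or an inflated argument inside the logarithm; this is immaterial downstream, where Theorem \ref{resultForEnsemblesOfDecisionTrees} absorbs such constants into $C_0$, but it does not literally reproduce the displayed bound. So the only outstanding item in your write-up is the explicit partition count — the "careful accounting" you flag is resolved in the paper simply by the shape-free count of Lemma \ref{countingNumPartitionsLemma}; everything else is the paper's argument in mildly different clothing.
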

We begin by counting the number of possible partitions that can be made by a decision tree in $\setOfDecisionTrees$ on a given sequence of points. Given a sequence $\bm{x} = (x_i)_{i \in [m]}\in \X^m$ we let $\setOfDecisionTrees(\bm{x}):= \left\lbrace (t(x_i))_{i \in [m]}:t \in \setOfDecisionTrees\right\rbrace \subseteq [\numLeaves]^m$.

\begin{lemma}\label{countingNumPartitionsLemma} For all $m \in \N$ and $\bm{x} \in \X^m$, $|\setOfDecisionTrees(\bm{x})| \leq (d \cdot (m+1))^{\numLeaves-1}$.
\end{lemma}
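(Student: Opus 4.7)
I would prove this by induction on the number of leaves $\numLeaves$. The combinatorial fact underlying the argument is that on a fixed data set $\bm{x} \in (\R^d)^m$, axis-aligned splits of the form $\{x : x_j \leq \tau\}$ with $j \in [d]$ and $\tau \in \R$ have at most $d(m+1)$ equivalence classes when restricted to $\bm{x}$: each of the $d$ features contributes at most $m+1$ effective thresholds, one per interval between consecutive sorted values of $(x_{i,j})_{i \in [m]}$.

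The base case $\numLeaves = 1$ is immediate, since the unique decision tree with a single leaf sends every point to leaf $1$ and so $|\setOfDecisionTrees(\bm{x})|=1$. For the inductive step $\numLeaves \geq 2$, I would decompose each tree $t \in \setOfDecisionTrees$ by its root split. There are at most $d(m+1)$ such splits on $\bm{x}$, and each one partitions the data as $\bm{x} = \bm{x}_L \sqcup \bm{x}_R$ with $|\bm{x}_L| = m_L$ and $|\bm{x}_R| = m - m_L$. The left subtree then operates on $\bm{x}_L$ with $\numLeaves_L$ leaves and the right on $\bm{x}_R$ with $\numLeaves - \numLeaves_L$ leaves, for some $\numLeaves_L \in [\numLeaves-1]$. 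The inductive hypothesis applied to each subtree, together with $m_L, m - m_L \leq m$, bounds the combined number of distinct tuples on $\bm{x}_L$ and $\bm{x}_R$ by $(d(m+1))^{\numLeaves_L - 1}(d(m+1))^{\numLeaves - \numLeaves_L - 1} = (d(m+1))^{\numLeaves - 2}$, which is independent of $\numLeaves_L$.

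The main obstacle is closing the induction without accumulating an extra factor of $\numLeaves - 1$ from summing over the possible allocations $\numLeaves_L$ and a further structural Catalan factor from summing over unlabelled binary trees with $\numLeaves$ leaves. I would address this by encoding each tree canonically, for example via a depth-first list of the split rules at its $\numLeaves - 1$ internal nodes, so that the tree (and hence the tuple it induces on $\bm{x}$) is determined by a single sequence of $\numLeaves - 1$ split rules rather than by an additional choice of structure. This gives the clean bound $(d(m+1))^{\numLeaves - 1}$ directly. The subtle point to verify is that this canonical encoding remains effectively injective on tuples: two trees producing the same tuple on $\bm{x}$ are identified under the canonical representative (in particular, a root split that separates no points of $\bm{x}$ yields the same tuple as a tree with one fewer leaf), so no overcounting results from the nested recursion.
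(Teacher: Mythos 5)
Your inductive setup is reasonable, and you correctly diagnose the obstacle: decomposing by the root split and applying the inductive hypothesis to the two subtrees gives a recursion of the shape $N(\numLeaves) \leq d(m+1)\sum_{\numLeaves_L=1}^{\numLeaves-1} N(\numLeaves_L)\,N(\numLeaves-\numLeaves_L)$, whose solution carries a Catalan-type factor rather than the clean bound. The problem is that your proposed repair does not work: a depth-first list of the $\numLeaves-1$ split rules does \emph{not} determine the tree, because it records no shape information (which child of each internal node is itself internal). Concretely, take $\numLeaves=3$, $d\geq 2$, and the rule list consisting of ``$x_1\leq a$'' followed by ``$x_2\leq b$''. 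This list corresponds to two different trees, one in which the second rule sits at the left child of the root and one in which it sits at the right child; with the canonical left-to-right leaf numbering, on the two data points $(a-1,b+1)$ and $(a+1,b-1)$ the first tree induces the tuple $(2,3)$ and the second the tuple $(1,2)$. So the same rule sequence produces two distinct elements of $\setOfDecisionTrees(\bm{x})$, the map from sequences to tuples is not even well defined, and counting sequences does not bound $|\setOfDecisionTrees(\bm{x})|$. The ``injectivity on tuples'' you defer to is therefore the unproved heart of the argument, not a detail (and the specific identification you offer, that a trivial root split reproduces a smaller tree's tuple, also fails, since the leaf indices shift). As it stands the proof has a genuine gap.

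For comparison, the paper does not induct at all: it pads every tree to exactly $\numLeaves-1$ internal splits by allowing trivial splits, notes that each split restricted to the data has at most $d(m+1)$ distinct behaviours (a choice of feature times at most $m+1$ threshold classes on $m$ points), and multiplies these counts over the $\numLeaves-1$ internal nodes. The structural subtlety you identified is real and the paper's one-line proof is terse about it; but the honest ways to close it are either to encode the shape explicitly, which costs at most an extra factor $4^{\numLeaves-1}$ and is harmless downstream in Lemma \ref{empiricalRadEmpacherBoundLOneRegDecisionTreesLemma} since it only adds $O(\numLeaves)$ inside the logarithm (at the price of adjusting the stated constant), or equivalently to keep your induction and accept the Catalan factor. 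What you cannot do is obtain the stated bound by asserting that the rule sequence alone determines the tree.
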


\begin{proof} First note that for decision trees $t \in \setOfDecisionTrees$ with $\numLeaves$ leaves which makes binary splits, there are at most $\numLeaves-1$ internal splits. By allowing for \emph{trivial splits} (where all points go along a single branch) we may assume that there exactly $\numLeaves-1$ splits. Each split is a along one of $d$-dimensions and there at most $m+1$ possible ways of performing a binary split of $\{x_i \}_{ \in [m]}$ along a single feature. Putting these facts together proves the lemma.
\end{proof}
We shall utilize Massart's lemma.
\begin{theorem}[\cite{massart2000some}]\label{massartsLemma} Given a bounded set $A \subseteq \R^m$ we have,
\begin{align*}
\E_{\bm{\sigma}}\left( \sup_{(a_i)_{i \in [m]}\in A} \frac{1}{m}\sum_{i \in m} \sigma_i \cdot a_i\right)  \leq \sup_{\bm{a} \in A}\|\bm{a}\|_2 \cdot \frac{ \sqrt{2 \log |A| }}{m},
\end{align*}
where $\sigma_i \in \{-1,+1\}$ are independent Rademacher random variables.
\end{theorem}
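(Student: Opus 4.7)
The plan is to apply the classical Chernoff-style exponential moment bound for the maximum of finitely many sub-Gaussian random variables. The statement is vacuous when $|A|=\infty$, so I assume $A$ is finite; write $R := \sup_{\bm{a}\in A}\|\bm{a}\|_2$ and $X_{\bm{a}} := \sum_{i\in[m]}\sigma_i a_i$ for each $\bm{a}\in A$. The goal is to bound $\E_{\bm{\sigma}}[\sup_{\bm{a}\in A} X_{\bm{a}}]$ and then divide by $m$.

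First, for any $t>0$, I would use Jensen's inequality applied to the convex map $x\mapsto e^{tx}$, followed by the trivial union bound $\sup_{\bm{a}} e^{t X_{\bm{a}}} \leq \sum_{\bm{a}\in A} e^{t X_{\bm{a}}}$, to obtain
\begin{align*}
\exp\Bigl(t\cdot \E_{\bm{\sigma}}\bigl[\sup_{\bm{a}\in A} X_{\bm{a}}\bigr]\Bigr) \leq \E_{\bm{\sigma}}\Bigl[\sup_{\bm{a}\in A} e^{t X_{\bm{a}}}\Bigr] \leq \sum_{\bm{a}\in A}\E_{\bm{\sigma}}\bigl[e^{t X_{\bm{a}}}\bigr].
\end{align*}
For each fixed $\bm{a}$, independence of the $\sigma_i$ together with Hoeffding's lemma for symmetric $\pm 1$ variables (which gives $\E[e^{t\sigma_i a_i}] \leq e^{t^2 a_i^2/2}$) yields $\E[e^{t X_{\bm{a}}}]\leq e^{t^2\|\bm{a}\|_2^2/2}\leq e^{t^2 R^2/2}$. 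Taking logarithms of both sides and dividing by $t$ then gives the parameterised bound $\E_{\bm{\sigma}}[\sup_{\bm{a}\in A} X_{\bm{a}}] \leq \log|A|/t + tR^2/2$.

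Finally, I would optimise the resulting one-variable upper bound over $t>0$: the minimum occurs at $t^{*}=\sqrt{2\log|A|}/R$ and delivers the clean bound $\E_{\bm{\sigma}}[\sup_{\bm{a}\in A} X_{\bm{a}}] \leq R\sqrt{2\log|A|}$; dividing through by $m$ recovers the stated inequality. I do not anticipate any genuine obstacle here, since Hoeffding's lemma for bounded symmetric variables and Jensen's inequality are both elementary and the optimisation is a routine one-variable calculus exercise. The only mildly delicate points are the degenerate cases $|A|\leq 1$ or $R=0$, in which the supremum inside the expectation reduces to a single fixed sum whose mean is zero, so the inequality holds trivially.
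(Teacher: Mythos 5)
Your proof is correct: the paper itself does not prove this lemma but simply cites Theorem 3.3 of \cite{mohri2012foundations}, and the argument given there is exactly your exponential-moment argument (Jensen's inequality, union bound inside the exponential, Hoeffding's lemma for bounded symmetric variables, and optimisation over $t$), so you have reproduced essentially the same proof. Your handling of the degenerate cases $|A|\leq 1$ and $R=0$ is also fine.
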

\begin{proof} See Theorem 3.3 from \cite{mohri2012foundations}.
\end{proof}
We complete the proof of Lemma \ref{empiricalRadEmpacherBoundLOneRegDecisionTreesLemma} as follows.
\begin{proof}[Proof of Lemma \ref{empiricalRadEmpacherBoundLOneRegDecisionTreesLemma}] 
Let $\{e(j)\}_{j \in [\numClasses]} \subseteq \R^{\numClasses}$ be the canonical orthonormal basis. Let $\lOneUnitBallCorners \subseteq \lOneUnitBall$ denote the subset of extreme points in $\lOneUnitBall$, so $\lOneUnitBallCorners = \{u \cdot e(j): u \in \{-1,+1\}\text{ and }j \in [\numClasses]\}$. Note that $|\lOneUnitBallCorners| = 2\numClasses$. Now fix $\bm{z} = (z_i)_{i \in [m]} \in (\X \times [\numClasses])^m$ with each $z_i = (x_i,j_i)$ and let $\bm{x}=(x_i)_{i \in [m]} \in \X^m$. For each $\bm{\sigma} = (\sigma_i)_{i \in [m]} \in \{-1,+1\}^m$ we have,
\begin{align*}
\sup_{g \in \Pi \circ \overline{\HClass}_{\numLeaves,\lOneRegConstant}} \left\lbrace \frac{1}{m}\sum_{i \in [m]}\sigma_i \cdot g(z_i)\right\rbrace&= \sup_{h \in  \overline{\HClass}_{\numLeaves,\lOneRegConstant}} \left\lbrace\frac{1}{m}\sum_{i \in [m]}\sigma_i \cdot (\Pi\circ h)(x_i,j_i)\right\rbrace\\&=
\sup_{h \in  \overline{\HClass}_{\numLeaves,\lOneRegConstant}} \left\lbrace\frac{1}{m}\sum_{i \in [m]}\sigma_i \cdot \pi_{j_i}(h(x_i))\right\rbrace\\
&=\sup_{t \in  \setOfDecisionTrees} \left\lbrace \sup_{ w \in \left(\lOneBall\right)^{\numLeaves}} \left\lbrace \frac{1}{m}\sum_{i \in [m]}\sigma_i \cdot w_{t(x_i),j_i}\right\rbrace\right\rbrace\\
&=\sup_{(l_i)_{i \in [m]} \in  \setOfDecisionTrees(\bm{x})} \left\lbrace \sup_{ w \in \left(\lOneBall\right)^{\numLeaves}} \left\lbrace \frac{1}{m}\sum_{i \in [m]}\sigma_i \cdot w_{l_i,j_i}\right\rbrace\right\rbrace\\
&= \sup_{(l_i)_{i \in [m]} \in  \setOfDecisionTrees(\bm{x})} \left\lbrace \sup_{ w \in \left(\lOneBall\right)^{\numLeaves}} \left\lbrace \frac{1}{m} \sum_{r \in [\numLeaves]} \sum_{s \in [\numClasses]} \sum_{i: l_i = r \text{ \& }j_i = s}\sigma_i \cdot w_{r,s}\right\rbrace\right\rbrace\\
&= \sup_{(l_i)_{i \in [m]} \in  \setOfDecisionTrees(\bm{x})} \left\lbrace \frac{1}{m} \sum_{r \in [\numLeaves]} \sup_{ (w_{r,s})_{s \in [\numClasses]} \in {\lOneBall}} \left\lbrace  \sum_{s \in [\numClasses]} w_{r,s}\left( \sum_{i: l_i = r \text{ \& }j_i = s}\sigma_i\right) \right\rbrace\right\rbrace\\
&\leq \sup_{(l_i)_{i \in [m]} \in  \setOfDecisionTrees(\bm{x})} \left\lbrace \frac{1}{m} \sum_{r \in [\numLeaves]} \sup_{ (w_{r,s})_{s \in [\numClasses]} \in {\lOneBall}} \left\lbrace  \left(\sum_{s \in [\numClasses]} |w_{r,s}| \right) \cdot\max_{s \in [\numClasses]}\left|\sum_{i: l_i = r \text{ \& }j_i = s}\sigma_i\right| \right\rbrace\right\rbrace\\
&\leq \lOneRegConstant \cdot \sup_{(l_i)_{i \in [m]} \in  \setOfDecisionTrees(\bm{x})} \left\lbrace \frac{1}{m} \sum_{r \in [\numLeaves]} \max_{s \in [\numClasses]}\left|\sum_{i: l_i = r \text{ \& }j_i = s}\sigma_i\right| \right\rbrace\\
&= \lOneRegConstant \cdot  \sup_{(l_i)_{i \in [m]} \in  \setOfDecisionTrees(\bm{x})} \left\lbrace \frac{1}{m} \sum_{r \in [\numLeaves]} \sup_{ (u_{r,s})_{s \in [\numClasses]} \in {\lOneUnitBallCorners}} \left\lbrace  \sum_{s \in [\numClasses]} u_{r,s}\left( \sum_{i: l_i = r \text{ \& }j_i = s}\sigma_i\right) \right\rbrace\right\rbrace \\
&= \lOneRegConstant \cdot  \sup_{(l_i)_{i \in [m]} \in  \setOfDecisionTrees(\bm{x})} \left\lbrace  \sup_{ (u_{r,s}) \in \left(\lOneUnitBallCorners\right)^{\numLeaves}} \left\lbrace \frac{1}{m}  \sum_{i \in [m]}\sigma_i\cdot u_{l_i,j_i}\right\rbrace\right\rbrace,
\end{align*}
where the first inequality follows from H\"{o}lder's inequality. By taking expectations over Rademacher random variables $\bm{\sigma}=(\sigma_i)_{i \in [m]}$, and applying Massart's inequality (Theorem \ref{massartsLemma}) followed by Lemma \ref{countingNumPartitionsLemma} we deduce,
\begin{align*}
\hat{\rademacherComplexity}_{\bm{z}}\left(\Pi \circ \overline{\HClass}_{\numLeaves,\lOneRegConstant}\right)&=\E_{\bm{\sigma}}\left(\sup_{g \in \Pi \circ \overline{\HClass}_{\numLeaves,\lOneRegConstant}} \left\lbrace \frac{1}{m}\sum_{i \in [m]}\sigma_i \cdot g(z_i)\right\rbrace\right)\\
& \leq \lOneRegConstant \cdot  \E_{\bm{\sigma}}\left(\sup_{(l_i)_{i \in [m]} \in  \setOfDecisionTrees(\bm{x})} \left\lbrace  \sup_{ (u_{r,s}) \in \left(\lOneUnitBallCorners\right)^{\numLeaves}} \left\lbrace \frac{1}{m}  \sum_{i \in [m]}\sigma_i\cdot u_{l_i,j_i}\right\rbrace\right\rbrace\right)\\
& \leq \lOneRegConstant \cdot \sup_{(l_i)_{i \in [m]} \in  \setOfDecisionTrees(\bm{x}),\hspace{2mm} (u_{r,s}) \in \left(\lOneUnitBallCorners\right)^{\numLeaves}} \left\lbrace \sqrt{\sum_{i \in [m]} u_{l_i,j_i}^2} \right\rbrace  \cdot \frac{ \sqrt{2 \log \left(\left|\setOfDecisionTrees(\bm{x}) \right| \cdot \left|\lOneUnitBallCorners\right|^{\numLeaves} \right) }}{m}\\
& \leq \lOneRegConstant \cdot \sqrt{\frac{ 2 \left( (\numLeaves-1) \log (d \cdot (m+1))+\numLeaves \cdot \log(2 \numClasses) \right) }{m}}  \leq 2\lOneRegConstant \cdot \sqrt{\frac{\numLeaves\cdot \log (2 \cdot \max\{d\cdot m , \numClasses\})}{m}}.
\end{align*}
\end{proof}

\begin{proof}[Proof of lemma \ref{rademacherComplexityOfL1NormConstrainedDecisionTrees}] Follows immediately from Lemma \ref{empiricalRadEmpacherBoundLOneRegDecisionTreesLemma} by letting $m = n \cdot \numClasses$ and taking a supremum over $\bm{z} \in (\X\times [\numClasses])^{n\cdot \numClasses}$.
\end{proof}
We can now deduce Theorem \ref{resultForEnsemblesOfDecisionTrees} from Theorem \ref{optimisticRademacherBoundForMultioutputPrediction} with the help of a re-weighting argument along with the following standard result. 
\begin{lemma}\label{convexityOfRademacherComplexity} Given a measurable space $\Z$, along with a function class $\G \subseteq \measurableMaps(\Z,\R)$ and a sequence $\bm{z} \in \Z^m$ we have $\hat{\rademacherComplexity}_{\bm{z}}\left(\conv(\G)\right) \leq \hat{\rademacherComplexity}_{\bm{z}}\left(\G\right)$, where $\conv(\G)=\{ \sum_{t \in [T]}\gamma_t\cdot g_t: g_t \in \G,\hspace{2mm}\gamma_t \geq 0\text{ and }\sum_{t \in [T]}\gamma_t \leq 1 \}$.
\end{lemma}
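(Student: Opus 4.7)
The plan is to use the standard extreme-point argument for the Rademacher complexity of a convex hull. Concretely, I would fix an arbitrary finite subset $\tilde{\mathcal{F}} \subseteq \conv(\G)$ and write each $f \in \tilde{\mathcal{F}}$ in some explicit representation $f = \sum_t \gamma_t g_t$ with $g_t \in \G$, $\gamma_t \geq 0$, $\sum_t \gamma_t \leq 1$. The (finite) union of all base functions $g_t$ appearing across the representations of elements of $\tilde{\mathcal{F}}$ then forms a finite subset $\tilde{\G} \subseteq \G$ such that $\tilde{\mathcal{F}} \subseteq \conv(\tilde{\G})$.

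The key step is to exploit linearity: since $\tilde{\mathcal{F}} \subseteq \conv(\tilde{\G})$, for any fixed Rademacher sequence $\bm{\sigma}$ the map $f \mapsto \frac{1}{m}\sum_i \sigma_i f(z_i)$ is linear in $f$, so
\[
\sup_{f \in \tilde{\mathcal{F}}} \frac{1}{m}\sum_i \sigma_i f(z_i) \;\leq\; \sup_{f \in \conv(\tilde{\G})} \frac{1}{m}\sum_i \sigma_i f(z_i).
\]
The right-hand side is a linear program over the compact subsimplex $\{\gamma \geq 0 : \sum_t \gamma_t \leq 1\}$, whose extreme points are the zero vector and the standard basis vectors; taking the maximum over these vertices yields $\max\!\bigl(0,\, \sup_{g \in \tilde{\G}} \tfrac{1}{m}\sum_i \sigma_i g(z_i)\bigr)$.

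The main obstacle is the final step, absorbing the $\max(0,\cdot)$ into the Rademacher complexity of $\G$: naively one has $\E\max(0,X) \geq \E X$, which goes in the wrong direction. The clean resolution uses the fact that in every setting where Lemma \ref{convexityOfRademacherComplexity} is invoked (namely the derivation of Theorem \ref{resultForEnsemblesOfDecisionTrees}), the zero function lies in $\G = \Pi \circ \overline{\HClass}_{\numLeaves,\lOneRegConstant}$ since it corresponds to the all-zero leaf-weight configuration $\bm{w}=0$. Hence $\tilde{\G} \cup \{0\}$ is a finite subset of $\G$ and
\[
\max\!\left(0,\, \sup_{g \in \tilde{\G}} \frac{1}{m}\sum_i \sigma_i g(z_i)\right) \;=\; \sup_{g \in \tilde{\G} \cup \{0\}} \frac{1}{m}\sum_i \sigma_i g(z_i).
\]
Taking expectations over $\bm{\sigma}$ and then passing to the supremum over all finite $\tilde{\mathcal{F}} \subseteq \conv(\G)$ on the left and all finite $\tilde{\G}' \subseteq \G$ on the right yields $\hat{\rademacherComplexity}_{\bm{z}}(\conv(\G)) \leq \hat{\rademacherComplexity}_{\bm{z}}(\G)$, as required.
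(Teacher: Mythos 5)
The paper never actually proves this lemma --- it is invoked as ``the following standard result'' with no argument attached --- so there is nothing to match your proposal against; what matters is whether your argument stands on its own, and it does. Your extreme-point argument is the standard one and is carried out correctly: restriction to a finite $\tilde{\mathcal{F}}\subseteq\conv(\G)$, passage to the finite base class $\tilde{\G}$, linearity of $f\mapsto \frac{1}{m}\sum_i\sigma_i f(z_i)$, and evaluation of the resulting linear program at the vertices $\{0\}\cup\{e_t\}$ of the sub-simplex. More importantly, you have put your finger on a real subtlety that the paper glosses over: because the coefficients are only required to satisfy $\sum_t\gamma_t\leq 1$ (rather than $=1$) and the Rademacher complexity here carries no absolute value, the lemma as literally stated is false for general $\G$. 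For instance, if $\G=\{g\}$ is a single function with $g(z_1)=1$ and $m=1$, then $\hat{\rademacherComplexity}_{\bm{z}}(\G)=0$ while $\hat{\rademacherComplexity}_{\bm{z}}(\conv(\G))=\E[\max(0,\sigma_1)]=1/2$; the culprit is exactly the $\max(0,\cdot)$ term you isolated. Your repair --- observing that in the sole application $\G=\Pi\circ\overline{\HClass}_{\numLeaves,\zeta}$ contains the zero function (the all-zero leaf weights), so that $\max\bigl(0,\sup_{g\in\tilde{\G}}\frac{1}{m}\sum_i\sigma_i g(z_i)\bigr)$ is itself a supremum over a finite subset of $\G$ --- is precisely what is needed, and it is worth noting that the same conclusion follows from the weaker observation that $\G$ is closed under negation there (so the inner supremum is almost surely nonnegative), or by restating the lemma with $\sum_t\gamma_t=1$, in which case one even gets equality. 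So: correct proof, and a legitimate (fixable) imprecision in the paper's statement identified along the way.
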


\begin{proof}[Proof of Theorem \ref{resultForEnsemblesOfDecisionTrees}] Take $\zeta>0$ and let
\begin{align*}
\F:= \left\lbrace f= \sum_{t \in [T]}\alpha_t \cdot h_t:\hspace{2mm}h_t \in \HClass_{\numLeaves,\lOneRegConstant_t}, \hspace{2mm}\alpha_t \geq 0,\hspace{2mm}\sum_{t \in [T]}\alpha_t\cdot \lOneRegConstant_t \leq \zeta \text{ and }\sum_{t \in [T]}\alpha_t \leq \hypothesisClassBound\right\rbrace.
\end{align*}
Observe that $\F \subseteq \text{conv}\left(\HClass_{\numLeaves,\zeta}\right)$. Indeed, given  $f= \sum_{t \in [T]}\alpha_t \cdot h_t$ with $h_t \in \HClass_{\numLeaves,\lOneRegConstant_t}$ and  $\sum_{t \in [T]}\alpha_t\cdot \lOneRegConstant_t \leq \zeta$, we have can rewrite 
\begin{align*}
f= \sum_{t \in [T]}\left(\frac{\alpha_t \cdot \lOneRegConstant_t}{\zeta}\right) \cdot ({\zeta}\cdot{\lOneRegConstant_t}^{-1}\cdot h_t),
\end{align*}
with $\sum_{t \in [T]}({\alpha_t \cdot \lOneRegConstant_t}\cdot{\zeta}^{-1})\leq 1$ and for each $t \in [T]$, we have ${\zeta}\cdot{\lOneRegConstant_t}^{-1}\cdot h_t \in \overline{\HClass}_{\numLeaves,\zeta}$. Thus, $\Pi \circ \F \subseteq \Pi \circ \text{conv}\left(\overline{\HClass}_{\numLeaves,\zeta}\right) = \text{conv}\left( \Pi \circ \overline{\HClass}_{\numLeaves,\zeta}\right)$. Hence, by Lemmas \ref{convexityOfRademacherComplexity} and 
$\bm{z} \in (\X\times [\numClasses])^{n\numClasses}$ we have,
\begin{align*}
\hat{\rademacherComplexity}_{\bm{z}}\left(\Pi \circ \F\right) & \leq \hat{\rademacherComplexity}_{\bm{z}}\left(\text{conv}\left( \Pi \circ \overline{\HClass}_{\numLeaves,\zeta}\right) \right) \leq \hat{\rademacherComplexity}_{\bm{z}}\left( \Pi \circ \overline{\HClass}_{\numLeaves,\zeta} \right)\\ &\leq  2\zeta \cdot \sqrt{\frac{\numLeaves\cdot \log (2 \cdot \max\{d\cdot (n \numClasses)  , \numClasses\})}{n \numClasses}}= 2\zeta \cdot \sqrt{\frac{\numLeaves\cdot \log (2 d n \numClasses) }{n \numClasses}}.
\end{align*}
Taking a supremum over all $\bm{z} \in  (\X\times [\numClasses])^{n\numClasses}$ we have $\rademacherComplexity_{n\numClasses}(\Pi \circ \F) \leq 2\zeta \cdot \sqrt{(\numLeaves\cdot \log (2 d n \numClasses))(n \numClasses)^{-1}}$. Note also that $\F \subseteq \measurableMaps(\X,[-\hypothesisClassBound,\hypothesisClassBound]^{\numClasses})$, since for each $f\in \F$ is of the form $f= \sum_{t \in [T]}\alpha_t \cdot h_t$ with $h_t \in \HClass_{\numLeaves,{\lOneRegConstant_t}} \subseteq \measurableMaps(\X,[-1,+1]^{\numClasses})$ and $\sum_{t \in [T]}\alpha_t \leq \beta$. Thus, plugging the bound on  $\rademacherComplexity_{n\numClasses}(\Pi \circ \F) $ into Theorem \ref{optimisticRademacherBoundForMultioutputPrediction} yields the bound in Theorem \ref{resultForEnsemblesOfDecisionTrees}.
\end{proof}


\begin{thebibliography}{43}
\providecommand{\natexlab}[1]{#1}
\providecommand{\url}[1]{\texttt{#1}}
\expandafter\ifx\csname urlstyle\endcsname\relax
  \providecommand{\doi}[1]{doi: #1}\else
  \providecommand{\doi}{doi: \begingroup \urlstyle{rm}\Url}\fi

\bibitem[Crammer and Singer(2001)]{crammer2001algorithmic}
Koby Crammer and Yoram Singer.
\newblock On the algorithmic implementation of multiclass kernel-based vector
  machines.
\newblock \emph{Journal of machine learning research}, 2\penalty0
  (Dec):\penalty0 265--292, 2001.

\bibitem[Tsoumakas and Katakis(2007)]{tsoumakas2007multi}
Grigorios Tsoumakas and Ioannis Katakis.
\newblock Multi-label classification: An overview.
\newblock \emph{International Journal of Data Warehousing and Mining (IJDWM)},
  3\penalty0 (3):\penalty0 1--13, 2007.

\bibitem[Zhang and Zhou(2013)]{zhang2013review}
Min-Ling Zhang and Zhi-Hua Zhou.
\newblock A review on multi-label learning algorithms.
\newblock \emph{IEEE transactions on knowledge and data engineering},
  26\penalty0 (8):\penalty0 1819--1837, 2013.

\bibitem[Borchani et~al.(2015)Borchani, Varando, Bielza, and
  Larra{\~n}aga]{borchani2015survey}
Hanen Borchani, Gherardo Varando, Concha Bielza, and Pedro Larra{\~n}aga.
\newblock A survey on multi-output regression.
\newblock \emph{Wiley Interdisciplinary Reviews: Data Mining and Knowledge
  Discovery}, 5\penalty0 (5):\penalty0 216--233, 2015.

\bibitem[Geng(2016)]{geng2016label}
Xin Geng.
\newblock Label distribution learning.
\newblock \emph{IEEE Transactions on Knowledge and Data Engineering},
  28\penalty0 (7):\penalty0 1734--1748, 2016.

\bibitem[Cortes et~al.(2016)Cortes, Kuznetsov, Mohri, and
  Yang]{cortes2016struct}
Corinna Cortes, Vitaly Kuznetsov, Mehryar Mohri, and Scott Yang.
\newblock Structured prediction theory based on factor graph complexity.
\newblock In \emph{International Conference on Machine Learning}, page
  2522–2530, 2016.

\bibitem[Xu et~al.(2019)Xu, Shi, Tsang, Ong, Gong, and Shen]{xu2019survey}
Donna Xu, Yaxin Shi, Ivor~W Tsang, Yew-Soon Ong, Chen Gong, and Xiaobo Shen.
\newblock Survey on multi-output learning.
\newblock \emph{IEEE transactions on neural networks and learning systems},
  2019.

\bibitem[Agrawal et~al.(2013)Agrawal, Gupta, Prabhu, and
  Varma]{agrawal2013multi}
Rahul Agrawal, Archit Gupta, Yashoteja Prabhu, and Manik Varma.
\newblock Multi-label learning with millions of labels: Recommending advertiser
  bid phrases for web pages.
\newblock In \emph{Proceedings of the 22nd international conference on World
  Wide Web}, pages 13--24, 2013.

\bibitem[Babbar and Sch{\"o}lkopf(2017)]{babbar2017dismec}
Rohit Babbar and Bernhard Sch{\"o}lkopf.
\newblock Dismec: Distributed sparse machines for extreme multi-label
  classification.
\newblock In \emph{Proceedings of the Tenth ACM International Conference on Web
  Search and Data Mining}, pages 721--729, 2017.

\bibitem[Bhatia et~al.(2015)Bhatia, Jain, Kar, Varma, and
  Jain]{bhatia2015sparse}
Kush Bhatia, Himanshu Jain, Purushottam Kar, Manik Varma, and Prateek Jain.
\newblock Sparse local embeddings for extreme multi-label classification.
\newblock In \emph{Advances in neural information processing systems}, pages
  730--738, 2015.

\bibitem[Jain et~al.(2019)Jain, Balasubramanian, Chunduri, and
  Varma]{jain2019slice}
Himanshu Jain, Venkatesh Balasubramanian, Bhanu Chunduri, and Manik Varma.
\newblock Slice: Scalable linear extreme classifiers trained on 100 million
  labels for related searches.
\newblock In \emph{Proceedings of the Twelfth ACM International Conference on
  Web Search and Data Mining}, pages 528--536, 2019.

\bibitem[Ledoux and Talagrand(2013)]{ledoux2013probability}
Michel Ledoux and Michel Talagrand.
\newblock \emph{Probability in Banach Spaces: isoperimetry and processes}.
\newblock Springer Science \& Business Media, 2013.

\bibitem[Maurer(2016)]{maurer2016vector}
Andreas Maurer.
\newblock A vector-contraction inequality for rademacher complexities.
\newblock In \emph{International Conference on Algorithmic Learning Theory},
  pages 3--17. Springer, 2016.

\bibitem[Lei et~al.(2019)Lei, Dogan, Zhou, and Kloft]{lei2019data}
Yunwen Lei, {\"U}r{\"u}n Dogan, Ding-Xuan Zhou, and Marius Kloft.
\newblock Data-dependent generalization bounds for multi-class classification.
\newblock \emph{IEEE Transactions on Information Theory}, 65\penalty0
  (5):\penalty0 2995--3021, 2019.

\bibitem[Guermeur(2017)]{guermeur2017lp}
Yann Guermeur.
\newblock Lp-norm sauer--shelah lemma for margin multi-category classifiers.
\newblock \emph{Journal of Computer and System Sciences}, 89:\penalty0
  450--473, 2017.

\bibitem[Musayeva et~al.(2019)Musayeva, Lauer, and Guermeur]{musayeva2019}
Khadija Musayeva, Fabien Lauer, and Yann Guermeur.
\newblock Rademacher complexity and generalization performance of
  multi-category margin classifiers.
\newblock \emph{Neurocomputing}, pages 6--15, 11 2019.

\bibitem[Chzhen et~al.(2017)Chzhen, Denis, Hebiri, and
  Salmon]{chzhen2017benefits}
Evgenii Chzhen, Christophe Denis, Mohamed Hebiri, and Joseph Salmon.
\newblock On the benefits of output sparsity for multi-label classification.
\newblock \emph{arXiv preprint arXiv:1703.04697}, 2017.

\bibitem[Bousquet(2002)]{bousquet2002concentration}
Olivier Bousquet.
\newblock Concentration inequalities and empirical processes theory applied to
  the analysis of learning algorithms.
\newblock \emph{PhD Thesis}, 2002.

\bibitem[Bartlett et~al.(2005)Bartlett, Bousquet, Mendelson,
  et~al.]{bartlett2005local}
Peter~L Bartlett, Olivier Bousquet, Shahar Mendelson, et~al.
\newblock Local rademacher complexities.
\newblock \emph{The Annals of Statistics}, 33\penalty0 (4):\penalty0
  1497--1537, 2005.

\bibitem[Koltchinskii et~al.(2006)]{koltchinskii2006local}
Vladimir Koltchinskii et~al.
\newblock Local rademacher complexities and oracle inequalities in risk
  minimization.
\newblock \emph{The Annals of Statistics}, 34\penalty0 (6):\penalty0
  2593--2656, 2006.

\bibitem[Lei et~al.(2016)Lei, Ding, and Bi]{lei2016local}
Yunwen Lei, Lixin Ding, and Yingzhou Bi.
\newblock Local rademacher complexity bounds based on covering numbers.
\newblock \emph{Neurocomputing}, 218:\penalty0 320--330, 2016.

\bibitem[Liu et~al.(2019)Liu, Li, Ding, Liu, and Wang]{liu2019learning}
Yong Liu, Jian Li, Lizhong Ding, Xinwang Liu, and Weiping Wang.
\newblock Learning vector-valued functions with local rademacher complexity and
  unlabeled data, 2019.

\bibitem[Lei et~al.(2015)Lei, Dogan, Binder, and Kloft]{lei2015multi}
Yunwen Lei, Urun Dogan, Alexander Binder, and Marius Kloft.
\newblock Multi-class svms: From tighter data-dependent generalization bounds
  to novel algorithms.
\newblock In \emph{Advances in Neural Information Processing Systems}, pages
  2035--2043, 2015.

\bibitem[Li et~al.(2019)Li, Liu, Yin, and Wang]{li2019multi}
Jian Li, Yong Liu, Rong Yin, and Weiping Wang.
\newblock Multi-class learning using unlabeled samples: theory and algorithm.
\newblock In \emph{Proceedings of the 28th International Joint Conference on
  Artificial Intelligence}, pages 2880--2886. AAAI Press, 2019.

\bibitem[{Xu} et~al.(2016){Xu}, {Liu}, {Tao}, and {Xu}]{xu2016MultiLabel}
C.~{Xu}, T.~{Liu}, D.~{Tao}, and C.~{Xu}.
\newblock Local rademacher complexity for multi-label learning.
\newblock \emph{IEEE Transactions on Image Processing}, 25\penalty0
  (3):\penalty0 1495--1507, 3 2016.

\bibitem[Chzhen(2019)]{chzhen2019classification}
Evgenii Chzhen.
\newblock Classification of sparse binary vectors.
\newblock \emph{arXiv preprint arXiv:1903.11867}, 2019.

\bibitem[Srebro et~al.(2010)Srebro, Sridharan, and
  Tewari]{srebro2010smoothness}
Nathan Srebro, Karthik Sridharan, and Ambuj Tewari.
\newblock Smoothness, low noise and fast rates.
\newblock In \emph{Advances in neural information processing systems}, pages
  2199--2207, 2010.

\bibitem[Mammen and Tsybakov(1999)]{mammen1999}
Enno Mammen and Alexandre~B. Tsybakov.
\newblock Smooth discrimination analysis.
\newblock \emph{Ann. Statist.}, 27\penalty0 (6):\penalty0 1808--1829, 12 1999.
\newblock \doi{10.1214/aos/1017939240}.

\bibitem[Li et~al.(2018)Li, Liu, Yin, Zhang, Ding, and Wang]{Li2018}
Jian Li, Yong Liu, Rong Yin, Hua Zhang, Lizhong Ding, and Weiping Wang.
\newblock Multi-class learning: From theory to algorithm.
\newblock In S.~Bengio, H.~Wallach, H.~Larochelle, K.~Grauman, N.~Cesa-Bianchi,
  and R.~Garnett, editors, \emph{Advances in Neural Information Processing
  Systems 31}, pages 1586--1595. Curran Associates, Inc., 2018.

\bibitem[Menon et~al.(2019)Menon, Rawat, Reddi, and Kumar]{menon2019multilabel}
Aditya~K Menon, Ankit~Singh Rawat, Sashank Reddi, and Sanjiv Kumar.
\newblock Multilabel reductions: what is my loss optimising?
\newblock In \emph{Advances in Neural Information Processing Systems}, pages
  10599--10610, 2019.

\bibitem[Reddi et~al.(2019)Reddi, Kale, Yu, Holtmann-Rice, Chen, and
  Kumar]{reddi2019stochastic}
Sashank~J Reddi, Satyen Kale, Felix Yu, Daniel Holtmann-Rice, Jiecao Chen, and
  Sanjiv Kumar.
\newblock Stochastic negative mining for learning with large output spaces.
\newblock In \emph{The 22nd International Conference on Artificial Intelligence
  and Statistics}, pages 1940--1949, 2019.

\bibitem[Talagrand(2014)]{talagrand2014upper}
Michel Talagrand.
\newblock \emph{Upper and lower bounds for stochastic processes: modern methods
  and classical problems}, volume~60.
\newblock Springer Science \& Business Media, 2014.

\bibitem[Dudley(1967)]{dudley1967sizes}
Richard~M Dudley.
\newblock The sizes of compact subsets of hilbert space and continuity of
  gaussian processes.
\newblock \emph{Journal of Functional Analysis}, 1\penalty0 (3):\penalty0
  290--330, 1967.

\bibitem[Ehrenfeucht et~al.(1989)Ehrenfeucht, Haussler, Kearns, and
  Valiant]{ehrenfeucht1989general}
Andrzej Ehrenfeucht, David Haussler, Michael Kearns, and Leslie Valiant.
\newblock A general lower bound on the number of examples needed for learning.
\newblock \emph{Information and Computation}, 82\penalty0 (3):\penalty0
  247--261, 1989.

\bibitem[Schapire and Freund(2013)]{schapire2013boosting}
Robert~E Schapire and Yoav Freund.
\newblock Boosting: Foundations and algorithms.
\newblock \emph{Kybernetes}, 2013.

\bibitem[Chen and Guestrin(2016)]{chen2016xgboost}
Tianqi Chen and Carlos Guestrin.
\newblock Xgboost: A scalable tree boosting system.
\newblock In \emph{Proceedings of the 22nd acm sigkdd international conference
  on knowledge discovery and data mining}, pages 785--794, 2016.

\bibitem[Kuznetsov et~al.(2014)Kuznetsov, Mohri, and Syed]{kuznetsov2014multi}
Vitaly Kuznetsov, Mehryar Mohri, and Umar Syed.
\newblock Multi-class deep boosting.
\newblock In \emph{Advances in Neural Information Processing Systems}, pages
  2501--2509, 2014.

\bibitem[Kuznetsov et~al.(2015)Kuznetsov, Mohri, and
  Syed]{kuznetsov2015rademacher}
Vitaly Kuznetsov, Mehryar Mohri, and U~Syed.
\newblock Rademacher complexity margin bounds for learning with a large number
  of classes.
\newblock In \emph{ICML Workshop on Extreme Classification: Learning with a
  Very Large Number of Labels}, 2015.

\bibitem[Bernstein(1924)]{bernstein1924modification}
Sergei Bernstein.
\newblock On a modification of chebyshev’s inequality and of the error
  formula of laplace.
\newblock \emph{Ann. Sci. Inst. Sav. Ukraine, Sect. Math}, 1\penalty0
  (4):\penalty0 38--49, 1924.

\bibitem[Boucheron et~al.(2013)Boucheron, Lugosi, and
  Massart]{boucheron2013concentration}
St{\'e}phane Boucheron, G{\'a}bor Lugosi, and Pascal Massart.
\newblock \emph{Concentration inequalities: A nonasymptotic theory of
  independence}.
\newblock Oxford university press, 2013.

\bibitem[Bartlett and Mendelson(2002)]{bartlett2002rademacher}
Peter~L Bartlett and Shahar Mendelson.
\newblock Rademacher and gaussian complexities: Risk bounds and structural
  results.
\newblock \emph{Journal of Machine Learning Research}, 3\penalty0
  (Nov):\penalty0 463--482, 2002.

\bibitem[Mohri et~al.(2012)Mohri, Rostamizadeh, and
  Talwalkar]{mohri2012foundations}
Mehryar Mohri, Afshin Rostamizadeh, and Ameet Talwalkar.
\newblock \emph{Foundations of machine learning}.
\newblock MIT press, 2012.

\bibitem[Massart(2000)]{massart2000some}
Pascal Massart.
\newblock Some applications of concentration inequalities to statistics.
\newblock In \emph{Annales de la Facult{\'e} des sciences de Toulouse:
  Math{\'e}matiques}, volume~9, pages 245--303, 2000.

\end{thebibliography}
\end{document}